\documentclass[9pt, twocolumn]{article}
\usepackage[left=1.8cm,right=1.8cm,top=3cm,bottom=3cm]{geometry}
\setlength{\columnsep}{2em} 

\usepackage{graphicx}
\usepackage{cite}
\usepackage[cmex10]{amsmath}

\usepackage{amssymb,amsthm}
\newtheorem{theorem}{Theorem}

\usepackage{latexsym} 
\usepackage{tikz}
\usetikzlibrary{decorations.markings,shapes,arrows,positioning}

\usepackage{booktabs}
\usepackage{url}

\begin{document}

\title{Segmentation of Three-dimensional Images with Parametric Active Surfaces and Topology Changes}
\author{Heike~Benninghoff% <-this % stops a space      
\thanks{Deutsches Zentrum f\"ur Luft- und Raumfahrt (DLR), 82234 We\ss ling, Germany, email: heike.benninghoff@dlr.de.} % <-this % stops a space
~and~Harald~Garcke% <-this % stops a space
\thanks{Fakult\"at f\"ur Mathematik, Universit\"at Regensburg, 93040 Regensburg, Germany, email: harald.garcke@ur.de.}}% <-this % stops a space

\date{}
\maketitle

\begin{abstract}
In this paper, we introduce a novel parametric method for segmentation of three-dimensional images. We consider a piecewise constant version of the Mumford-Shah and the Chan-Vese functionals and perform a region-based segmentation of 3D image data. An evolution law is derived from energy minimization problems which push the surfaces to the boundaries of 3D objects in the image.  We propose a parametric scheme which describes the evolution of parametric surfaces. An efficient finite element scheme is proposed for a numerical approximation of the evolution equations. Since standard parametric methods cannot handle topology changes automatically, an efficient method is presented to detect, identify and perform changes in the topology of the surfaces. One main focus of this paper are the algorithmic details to handle topology changes like splitting and merging of surfaces and change of the genus of a surface. Different artificial images are studied to demonstrate the ability to detect the different types of topology changes. Finally, the parametric method is applied to segmentation of medical 3D images. 

Keywords: Image segmentation, three-dimensional images, active surfaces, parametric method, topology changes, Mumford-Shah, Chan-Vese, finite element approximation
\end{abstract}

\section{Introduction}
\label{intro}
One major challenge in image processing is the autonomous detection of objects in images and the segmentation of the objects from each other and from their environment. 

A very popular approach for image segmentation is the active contour method \cite{Kass88}, \cite{Cohen91}. In the case of classical two-dimensional images, one or more curves, called contours, evolve in the two-dimensional image domain and stop locally at edges or region boundaries. The motion is described by evolution equations which aim to minimize a certain energy functional. The energies typically contain length terms to control the smoothness of the contours (internal energies) and terms which push the contours to the desired region boundaries or to edges in the image (external energies). 

Two main classes of approaches can be distinguished: The first class are edge-based methods where regions are identified by their boundaries where the image intensity function rapidly changes \cite{Kass88}, \cite{Malladi95}, \cite{Caselles97}. The second class are region-based methods where the regions are characterized by the mean gray value or mean color, or by the texture or some other grouping \cite{Mumford89}, \cite{Ronfard94}, \cite{Chan01}, \cite{Tsai01}.  

Region-based active contours methods can also be applied on images with so-called weak edges, i.e. edges with only small changes in the image intensity function \cite{Chan01}, and on images which contain regions which are groups of smaller objects \cite{Aubert06}. Furthermore, the method is less sensitive to noise. If images with high noise have to be segmented, gradient-based approaches may get trapped at locations where the noise is high and the contours may not detect the \emph{real} objects in the image.

In this paper, we study volumetric, i.e. three-dimen\-sional, images given by a scalar or vector-valued image function $u_0:\Omega \rightarrow \mathbb{R}^{(d)}$, where $\Omega \subset \mathbb{R}^3$ is an open and bounded image domain. Real images are often defined on a set of $N_x \times N_y \times N_z$ voxels (=volume pixels), where $u_0$ is locally constant on each voxel. 3D images may be reconstructed by certain 3D imaging procedures like computed tomography (CT) or magnetic resonance imaging (MRI), cf. \cite{Udupa99}, \cite{Scherzer2009}.  

3D image segmentation aims at dividing a given image in connected regions, representing 3D objects in the image or their environment in the image domain $\Omega$. For 3D image segmentation, the boundaries of the regions or objects have to be detected. These boundaries can be represented by a set of two-dimensional surfaces. 

The active contours concept \cite{Kass88} can be extended to the three-dimensional case. For 3D images, we can consider time-dependent two-dimensional surfaces (\emph{active surfaces}) and evolution laws for the surfaces which attract them to region boundaries. In particular, we will study extensions of the Mum\-ford-Shah model \cite{Mumford89} and the Chan-Vese model \cite{Chan01} to the three-dimensional case. 

The articles \cite{Cohen91}, \cite{Cohen93} belong to the first works, where the active contours model \cite{Kass88} is extended to volumetric image data. There, an analytical framework is introduced for 3D deformable surfaces. For practical computations however, the authors suggest to replace a given 3D image by a sequence of 2D images and to apply the 2D active contour model on each single image, followed by a 3D reconstruction of the surface. 

The  geodesic active contours model \cite{Caselles97} is a popular edge-based method. An extension of the geodesic active contours model to 3D image segmentation is proposed in \cite{Caselles97b}. The level set method \cite{OsherSethian88} is used to describe the surface implicitly. 

The level set method is also used in \cite{Yezzi1997}, where 2D and 3D active contour models are presented and applied on medical images. However, practical results are only shown for 2D images. 
A detailed literature study on 3D brain cortex segmentation is given in \cite{Li2005}.  A review on segmentation of medical X-ray computed tomography  and magnetic resonance images is presented in \cite{Sharma2010}. 

In \cite{Mille2009}, a combination of edge-based and region-based segmentation methods is proposed. Both explicit (triangulated surfaces) and implicit (level set) methods  are implemented. For explicit methods a constant global topology is assumed. For images which require topology changes, only the level set method is applied. 

The Chan-Vese model \cite{Chan01} is used for 2D and 3D medical applications in \cite{Rousseau2009} to perform heart segmentation using an iterative version of the Chan-Vese algorithm. The level set method with a finite difference scheme is used to solve the segmentation problem numerically. The level set method is also applied in \cite{Ardon2005} and \cite{Shen2009} for active surfaces. Applications using 3D medical data (i.a. lung and heart segmentation) are considered. In \cite{Mikula11b}, the level-set method is applied for 3D cell membrane segmentation. An approach for tracking cells in 4D images (3D data + time) has been developed in \cite{Mikula14b}.

Also some parametric approaches for surface evolution exist in literature: In \cite{Brakke1992}, a program called "The Surface Evolver" is presented which computes evolving triangulated surfaces, where the evolution is driven by energy minimization problems with possible constraints. The program is able to perform topology changes like splitting if this is instructed by the user. 

Another explicit method for evolving surfaces by using triangulated surfaces is proposed in \cite{Brochu09}. Techniques are introduced for mesh quality improvement and for topology changes. Splitting is done by a combination of removing degenerate elements and a certain mesh separation method based on duplication and separation of nodes. Surfaces are thus split if they become locally too thin. Merging is detected by searching for edges which are too close. 

Finite element approaches for surface evolutions are pursued in \cite{Baensch05}. There, the authors do not consider image segmentation applications, but surface diffusion. Several mesh quality routines like mesh regularization (keeping all angles of simplices at a node of the same size), time step control, refine/coarsening routines and angle width control are proposed. 

In this paper, we present a novel parametric approach for 3D image segmentation. We present a scheme for image segmentation describing the evolution of parametric surfaces. For the numerical approximation, the smooth surfaces are replaced by triangulated surfaces. We make use of a parametric finite element scheme based on \cite{BGN08c}. There, a scheme is proposed for surface diffusion, (inverse) mean curvature flow and non-linear flows. We use and apply this scheme to image segmentation with multiple phases and regions.  

Our method also allows for topology changes which have not been addressed in \cite{BGN08c}. We efficiently detect topology changes and perform modifications of the surface triangulations. In \cite{Benninghoff2014a}, we considered segmentation of two-dimensional images, and used and extended a method to detect topology changes \cite{MikulaUrban12} to handle a variety of topology changes of curves. Topology changes involving surfaces are more complex compared to topology changes involving curves. For example, if a curve splits up in two subcurves, the discretization has to be modified only at two points, see \cite{Benninghoff2014a} for details. If a surface is split up in two subsurfaces, many triangles are located in a small volume. The pure detection of such a splitting is quite simple; the idea of an auxiliary background grid \cite{MikulaUrban12} as used for curves can be extended to topology changes of surfaces. However, the modifications of the surface triangulation are not as straight-forward as for curves. In case of splitting, we will delete the involved triangles near the splitting point resulting in two surfaces with intermediate holes. Then, we will close the intermediate holes by creating new triangles. Apart from splitting and merging, further topology changes can occur for surfaces: An increase or decrease of the genus of a surface can also occur, for example when a sphere evolves to a torus or vice versa. In summary, the execution of topology changes is the main additional challenge of 3D image segmentation with parametric surfaces. Therefore, one main focus of this paper is the detection, identification and execution of topology changes.  

The remaining part of this paper is structured as follows. In Section~\ref{sec:segmentation}, we present a region-based active surface model where we extend the  Mumford-Shah model \cite{Mumford89} and the Chan-Vese model \cite{Chan01} to 3D images. We present an efficient parametric scheme for the evolving surfaces. Also multiple phases can be handled. The main part of this paper is the numerical approximation of our scheme and the handling of topology changes which is described in Section~\ref{sec:numerics}. A finite element scheme is presented, a corresponding linear equation is derived and some computational details are given including mesh quality aspects and time step control. The detection, identification and execution of topology changes is described in detail including a description how to modify the triangulations after a topology change has been detected. In Section~\ref{sec:results}, we present results from segmentation of artificial test images and real medical images. We demonstrate the different topology changes which can occur during the evolution of surfaces. A final conclusion is drawn in Section~\ref{sec:conclusion}. 

\section{Segmentation of three-dimensional images}
\label{sec:segmentation}

\subsection{Region-based Active Surfaces}
We perform image segmentation by active surfaces, the surface-analogue to active contours  \cite{Kass88}, \cite{Cohen91}. The idea of active surfaces is to let surfaces $\Gamma(t)$, $t \in [0,T]$, evolve in time such that a certain energy functional is minimized. From the minimization problem, one can derive an evolution law such that the surfaces of $\Gamma(t)$ are attracted to the region boundaries in the given image. 

Let $\Omega \subset \mathbb{R}^3$ be open and bounded. We first consider a scalar image function $u_0: \Omega \rightarrow \mathbb{R}$.

In this paper, we restrict on region-based methods for segmentation of 3D images because of advantages of region-based approaches compared to edge-based approaches (cf. Section~\ref{intro}). 

The Mumford-Shah \cite{Mumford89} method for 3D images aims at finding a set of two-dimensional surfaces $\Gamma = \Gamma_1 \cup \ldots \cup  \Gamma_{N_C}$ and a piecewise smooth function $u:\Omega \rightarrow \mathbb{R}$ with possible discontinuities across $\Gamma$ approximating the original image $u_0$. The energy to be minimized is 
\begin{equation}
E^{\mathrm{MS}}(u, \Gamma) = \sigma |\Gamma| + \int_{\Omega\setminus \Gamma} \|\nabla u\|^2 \,\mathrm{d}x + \lambda \int_\Omega (u_0-u)^2 \,\mathrm{d}x,
\label{eq:mumford_shah}
\end{equation}
where $\sigma, \lambda > 0$ are weighting parameters and $|\Gamma|$ denotes the total area of the surfaces belonging to $\Gamma$. (For non-smooth surfaces, we identify $|\Gamma|$ with the two-dimensional Hausdorff measure of $\Gamma\subset \mathbb{R}^3$.) 

The first term in \eqref{eq:mumford_shah} penalizes the area of the surfaces, the second term does not allow $u$ to change much in $\Omega \setminus\Gamma$, and the third term requests that $u$ is a good approximation of $u_0$. 

We first consider two-phase image segmentation, we consider one closed, orientable surface $\Gamma$ separating two disjoint regions $\Omega_1$ and $\Omega_2$ such that $\Omega = \Omega_1 \cup \Gamma \cup \Omega_2$. We assume that $\Gamma$ is oriented by a unit normal vector field $\vec \nu$ pointing from $\Omega_2$ to $\Omega_1$. 

Furthermore, we consider a piecewise constant version of the Mumford-Shah functional: We search for a surface $\Gamma$ and for an approximation $u: \Omega \rightarrow \mathbb{R}$ of $u_0$ which is piecewise constant in each region, i.e. $u_{|\Omega_k} = c_k$, $k=1,2$, such that  
\begin{align}
&E(\Gamma,c_1,c_2) =  \nonumber\\
& =\sigma |\Gamma| + \lambda \left(\int_{\Omega_1} (u_0-c_1)^2\,\mathrm{d}x +  \int_{\Omega_2} (u_0-c_2)^2\,\mathrm{d}x\right)
\label{eq:mumford_shah_const_3D}
\end{align}
is minimized.

Similarly, the Chan-Vese functional \cite{Chan01} can be extended to 3D images. The energy to be minimized is
\begin{align}
E(\Gamma, c_1, c_2) =& \sigma |\Gamma|  +  \mu \int_{\Omega_1} 1 \,\mathrm{d}x + \lambda_1  \int_{\Omega_1} (u_0-c_1)^2\,\mathrm{d}x \nonumber \\
&+ \lambda_2 \int_{\Omega_2}(u_0-c_2)^2\,\mathrm{d}x,
\label{eq:functional_chanvese}
\end{align}
where $\sigma, \lambda_1, \lambda_2 > 0$, $\mu \geq 0$ are weighting parameters. For $\mu=0$ and $\lambda_1 = \lambda_2 = \lambda$, this is the functional \eqref{eq:mumford_shah_const_3D}.

The energy defined in \eqref{eq:mumford_shah_const_3D} depends on the surface $\Gamma$ and on the image approximation $u$ given by the coefficients $c_1$, $c_2$. For minimizing \eqref{eq:mumford_shah_const_3D}, we perform a two-step approach:

First, we fix the surface $\Gamma$ and consider variations in the coefficients $c_1$, $c_2$. Using the theory of calculus of variations we obtain the mean of the image function  in $\Omega_k$ for $c_k$, $k=1,2$:
\begin{equation}
c_k = \frac{\int_{\Omega_k} u_0 \,\mathrm{d}x}{\int_{\Omega_k} 1 \,\mathrm{d}x}.
\label{eq:c_k}
\end{equation}
Then, we fix $c_1$ and $c_2$ and consider small variations of the surface $\Gamma$ by smooth surfaces $\Gamma(t) \subset \Omega$, $t \in (-\epsilon,\epsilon)$, with $\Gamma(0)=\Gamma$. Let $\Omega_1(t)$ and $\Omega_2(t)$ be the regions separated by $\Gamma(t)$. We define 
\begin{equation}
f(\vec x,c_1,c_2,t) := \left\{
\begin{array}{ll}
(u_0(\vec x)-c_1)^2, & \text{if } \,\vec x \in \Omega_1(t), \\
(u_0(\vec x)-c_2)^2, & \text{if } \,\vec x \in \Omega_2(t), 
\end{array}
\right.
\end{equation}
which is defined for a.e. $\vec x \in \Omega$. 

By using a transport theorem, we obtain 
\begin{align*}
&\left.\frac{\mathrm{d}}{\mathrm{d}t}\right|_{t=0} E(\Gamma(t),c_1,c_2) = \\
&= \left.\frac{\mathrm{d}}{\mathrm{d}t}\right|_{t=0} \left(\sigma \int_{\Gamma(t)} 1 \,\mathrm{d}A + \lambda \int_{\Omega} f(\vec x,c_1,c_2,t) \,\mathrm{d}x\right) \\
&= - \sigma \int_\Gamma \kappa\,V_n\,\mathrm{d}A + \\
& \quad - \lambda \int_\Gamma \left((u_0-c_1)^2 -(u_0-c_2)^2 \right) \,V_n\,\mathrm{d}A \\
&= - \int_\Gamma (\sigma \kappa + F)\,V_n \,\mathrm{d}A
\end{align*}
where $\mathrm{d}A$ is the area element, $V_n$ is the normal velocity, $\kappa$ the mean curvature and $F$ is an external force given by 
\begin{equation}
F(\vec x) = \lambda \left((u_0(\vec x)-c_1)^2 -(u_0(\vec x)-c_2)^2 \right), \, \vec x \in \Gamma.
\label{eq:definition_F_twophase_3D}
\end{equation}
The fastest decrease of the energy is obtained for 
\begin{equation}
V_n = \sigma \kappa + F.
\label{eq:evolution_eq_twophase_3D}
\end{equation} 

Also multichannel images with a vector-valued image function $\vec u_0: \Omega \rightarrow \mathbb{R}^d$ can be handled. This involves vector-valued coefficients $\vec c_k$, $k=1,2$, and a modification of the external force to, for example,
\begin{equation}
F(\vec x) = \sum_{i=1}^d \lambda_i \left[((u_0)_i(\vec x)-(c_1)_i)^2 -((u_0)_i(\vec x)-(c_2)_i)^2 \right],
\end{equation}
where the subscript $i$ denotes the $i$-th component of a vector, $i=1,\ldots,d$. For computation of the coefficients, each component of $\vec c_k$ is set to the mean of the corresponding component of $\vec u_0$ in the region $\Omega_k$, $k=1,2$. 

In principle, also spaces like the HSV (hue, saturation, value) or CB (chromaticity, brightness) space can be used \cite{Aujol06}, \cite{Chan01_2}, \cite{Tang02}. In these cases, the image function has values on certain submanifolds of $\mathbb{R}^d$. In \cite{Benninghoff2014a}, we proposed a method to segment 2D images using the color space HSV and CB. The method can be transferred also to the 3D case. In many practical applications however, for example medical 3D image data generated by computed tomography (CT) or magnetic resonance imaging (MRT), the image function is often scalar-valued (cf. for example the lung image database of The Cancer Imaging Archive (TCIA) \cite{Reeves2007}, \cite{Armato2011}, \cite{Reeves2011}).

\subsection{Parametric and Multiphase Formulation}
\label{subsec:3d_parametric_multiphase}
Equation \eqref{eq:evolution_eq_twophase_3D} can be rewritten using a parametric approach to describe the time-dependent surfaces. Further, we now consider a more general setup of multiple surfaces $\Gamma_i(t)$, $t \in [0,T]$, $i=1, \ldots, N_S$, which separate three-dimensional regions $\Omega_k(t)$, $k=1, \ldots, N_R$. We assume that the surfaces are compact and oriented by unit normal vector fields $\vec \nu_i(\,.\,,t)$ pointing from $\Omega_{k^-(i)}(t)$ to $\Omega_{k^+(i)}(t)$, where $k^\pm(i)\in \{1, \ldots, N_R\}$. 

Let $\vec x_i(\,.\,,t): \Upsilon_i \rightarrow \mathbb{R}^3$, $i=1,\ldots,N_S$, be a smooth parameterization of $\Gamma_i(t)$, where $\Upsilon_i$ is a two-dimensional reference manifold, for example the sphere $\Upsilon_i = S^2 \subset \mathbb{R}^3$. The normal velocity of $\Gamma_i(t)$ can be expressed as $(V_n)_i = (\vec x_i)_t \,.\,\vec \nu_i$. 

An approximation of the image intensity function $u_0$ is given by the piecewise constant function $u(\,.\,,t) = \sum_{k=1}^{N_R} c_k(t) \chi_{\Omega_k(t)}$, where $\chi_{\Omega_k(t)}$ is the characteristic function  of $\Omega_k(t)$ and $c_k(t)$ is the mean of $u_0$ in $\Omega_k(t)$. 

For each surface, we define the external forcing term
\begin{equation}
F_i(\,.\,,t) = \lambda \left((u_0-c_{k^+(i)}(t))^2 -(u_0-c_{k^-(i)}(t))^2 \right)
\label{eq:def_F_i_3d}
\end{equation}
and obtain the following scheme for the surfaces: Find $\vec x_i(\,.\,,t): \Upsilon_i \rightarrow \mathbb{R}^3$ and $\kappa_i(\,.\,,t): \Upsilon_i \rightarrow \mathbb{R}$, $i=1, \ldots, N_S$, satisfying
\begin{subequations}
\label{eq:strong_scheme_3D}
\begin{align}
(\vec x_i)_t \,.\, \vec \nu_i &= \sigma \kappa_i + F_i, \label{eq:scheme_strong_3d_1}\\
\Delta_\Gamma \vec x_i &= \kappa_i \vec\nu_i.\label{eq:scheme_strong_3d_2}
\end{align}
\end{subequations}
Equation \eqref{eq:scheme_strong_3d_1} is a parametric formulation of \eqref{eq:evolution_eq_twophase_3D} for multiple regions. Equation \eqref{eq:scheme_strong_3d_2} relates the parametrization $\vec x_i$ and the curvature $\kappa_i$, see e.g. \cite{DeckelnickDziukElliott05}. The symbol $\Delta_\Gamma$ denotes the Laplace-Beltrami operator. Here, we use a small abuse of notation, i.e. we consider $\kappa_i$ and $\vec\nu_i$ as functions defined on $\Upsilon_i$, i.e. we identify $\kappa_i$ with $\kappa_i \circ \vec x_i$ and $\vec\nu_i$ with $\vec\nu_i \circ \vec x_i$, $i=1,\ldots,N_S$.  

%\subsection{Weak Scheme}
%Introduction of Weak Scheme, basis for finite element formulation

%Text with citations \cite{Mumford89}.
%\subsection{Subsection title}
%
%as required. Don't forget to give each section
%and subsection a unique label (see Sect.~\ref{sec:segmentation}).
%\paragraph{Paragraph headings} Use paragraph headings as needed.
%\begin{equation}
%a^2+b^2=c^2
%\end{equation}
%
%% For one-column wide figures use
%\begin{figure}
%% Use the relevant command to insert your figure file.
%% For example, with the graphicx package use
  %\includegraphics[width = 0.2\textwidth]{fig/siegel.pdf}
%% figure caption is below the figure
%\caption{Please write your figure caption here}
%\label{fig:1}       % Give a unique label
%\end{figure}
%%
%% For two-column wide figures use
%\begin{figure*}
%% Use the relevant command to insert your figure file.
%% For example, with the graphicx package use
  %\includegraphics[width=0.4\textwidth]{fig/siegel.pdf}
%% figure caption is below the figure
%\caption{Please write your figure caption here}
%\label{fig:2}       % Give a unique label
%\end{figure*}
%%
%% For tables use
%\begin{table}
%% table caption is above the table
%\caption{Please write your table caption here}
%\label{tab:1}       % Give a unique label
%% For LaTeX tables use
%\begin{tabular}{lll}
%\hline\noalign{\smallskip}
%first & second & third  \\
%\noalign{\smallskip}\hline\noalign{\smallskip}
%number & number & number \\
%number & number & number \\
%\noalign{\smallskip}\hline
%\end{tabular}
%\end{table}
%

\section{Numerical approximation}
\label{sec:numerics}

\subsection{Finite Element Approximation}
\label{sec:threedim_fe_appr}
We introduce a finite element approximation for the scheme \eqref{eq:strong_scheme_3D} which is based on a scheme developed in \cite{BGN08c}, where geometric flows of two-dimensional surfaces are considered. We extend the ideas to solve schemes like  \eqref{eq:strong_scheme_3D} which arise in image segmentation applications. 

Let $0=t_0 < t_1 < \ldots < t_M = T$ be a decomposition of the time interval into possibly variable time steps $\tau_m = t_{m+1}-t_m$ for $m=0, \ldots, M-1$.  

Let $N_S$ denote the number of surfaces and $N_R$ denote the number of regions. Let the smooth surface $\Gamma_i(t_m)$, $i=1, \ldots, N_S$, be approximated by a polyhedral surface $\Gamma_i^m$ of the form
\begin{equation}
\Gamma_i^m = \bigcup_{j=1}^{N_{i,F}} \overline{\sigma_{i,j}^m},
\end{equation}
where $\sigma_{i,j}^m$, $j=1, \ldots, N_{i,F}$, are disjoint, open simplices (also called faces) with vertices $\vec q_{i,j}^m$, $j=1, \ldots, N_{i,V}$. Further, let $h := \mathrm{max}_{i=1,\ldots,N_S, j=1, \ldots, N_{i,F}} \mathrm{diam}(\sigma_{i,j}^m)$ be the maximum diameter of a simplex of the triangulated surfaces. The diameter $\mathrm{diam}(\sigma_{i,j}^m)$ is defined as the maximum distance between two points of $\overline{\sigma_{i,j}^m}$. 

Let $\vec X_i^m$ be a parameterization of $\Gamma_i^m$ and let $\Omega_k^m$, $k=1,\ldots,N_R$, denote the open, disjoint subsets of $\Omega$ separated by $\Gamma_i^m$, $i=1, \ldots, N_S$. Thus, $\Omega_k^m$ is an approximation of $\Omega_k(t_m)$ for $k=1,\ldots,N_R$. 

The new surfaces $\Gamma_i^{m+1}$ are parameterized over $\Gamma_i^m$.  Therefore, we define the following finite element spaces
\begin{subequations}
\begin{align}
W(\Gamma^m) :=& \left\{(\eta_1, \ldots, \eta_{N_S}) \in C(\Gamma_1^m,\mathbb{R})\times \ldots \times \right.\nonumber\\
&\quad  C(\Gamma_{N_S}^m,\mathbb{R})\,: \eta_i|_{\sigma_{i,j}^m}\, \text{ is linear, }\, \nonumber\\
& \quad \left. \forall i=1,\ldots, N_S, \, j=1,\ldots,N_{i,F}\right\}, \\
\underline{V}(\Gamma^m) :=& \left\{(\vec\eta_1, \ldots, \vec\eta_{N_S}) \in C(\Gamma_1^m,\mathbb{R}^3)\times \ldots \times \right. \nonumber \\
& \quad  C(\Gamma_{N_S}^m,\mathbb{R}^3) \,: \vec\eta_i|_{\sigma_{i,j}^m}\, \text{ is linear, }\, \nonumber\\
& \quad \left. \forall i=1,\ldots, N_S, \, j=1,\ldots,N_{i,F}\right\}.
\end{align}
\end{subequations}
The spaces $W(\Gamma^m)$ and $\underline{V}(\Gamma^m)$ thus consist of scalar or vector-valued, piecewise linear functions defined on $\Gamma^m$. 

A basis of $W(\Gamma^m)$ is given by functions $\chi_{i,j}^m := ((\chi_{i,j}^m)_1, \ldots, (\chi_{i,j}^m)_{N_S}) \in W(\Gamma^m)$, where 
\begin{equation}
(\chi_{i,j}^m)_k(\vec q_{k,l}^m)= \delta_{ik}\delta_{jl}
\end{equation}
for $i,k=1, \ldots, N_S$, $j=1, \ldots, N_{i,V}$, $l=1, \ldots, N_{k,V}$. 

Note, that depending whether the domain of definition is $\Gamma^{m-1}$ or $\Gamma^m$, we can interpret $\vec X^m$ as a different function. In particular we have  for $m\geq 1$, $\vec X^m \in \underline{V}(\Gamma^{m-1})$, and for $m\geq 0$, $\vec X^m \in \underline{V}(\Gamma^m)$ is the identity defined on $\Gamma^m$. 

For scalar functions $u=(u_1, \ldots, u_{N_S})$, $v = (v_1, \ldots, $ \,$v_{N_S}) \in L^2(\Gamma_1^m, \mathbb{R}) \times \ldots \times L^2(\Gamma_{N_S}^m, \mathbb{R})$ and for vector-valued functions $u=(u_1, \ldots, u_{N_S})$, $v = (v_1, \ldots, v_{N_S})\in$ \,\, $L^2(\Gamma_1^m, \mathbb{R}^{3}) \times \ldots \times L^2(\Gamma_{N_S}^m, \mathbb{R}^{3})$, we introduce the $L^2$-inner product over the current polyhedral surface $\Gamma^m$ as follows:
\begin{equation}
\langle u,v \rangle_m := \int_{\Gamma^m} u\,.\, v \,\mathrm{d}A = \sum_{i=1}^{N_S} \int_{\Gamma_i^m} u_i \,.\, v_i \,\mathrm{d}A.
\end{equation}

If $u,v$ are piecewise continuous with possible jumps across the edges of $\sigma_{i,j}^m$, $i=1, \ldots, N_S$, $j=1, \ldots, N_{i,F}$, the mass lumped inner product is defined as
\begin{equation}
\langle u,v \rangle_m^h := \frac13 \sum_{i=1}^{N_S}\sum_{j=1}^{N_{i,F}} |\sigma_{i,j}^m| \sum_{l=1}^{3} (u\,.\,v)((\vec q_{i,j_l}^m)^-),
\end{equation}
where $\vec q_{i,j_l}^m$, $l=1,2,3$, are the vertices of $\sigma_{i,j}^m$, $|\sigma_{i,j}^m| = \frac12 \| (\vec q_{i,j_2}^m - \vec q_{i,j_1}^m) \times (\vec q_{i,j_3}^m - \vec q_{i,j_1}^m)\|$ is the area of $\sigma_{i,j}^m$ and $u((\vec q_{i,j_l}^m)^-) := \mathrm{lim}_{\vec p \rightarrow \vec q_{i,j_l}^m, \,\vec p \in \sigma_{i,j}^m} u(\vec p)$.

We assume that the vertices $\left\{\vec q_{i,j_l}^m\right\}_{l=1}^3$, $j=1, \ldots,$ $N_{i,F}$,    are ordered 
such that the unit normal $\vec \nu_i^m$ at $\Gamma_i^m$ is given by 
\begin{equation}
\vec \nu_i^m |_{\sigma_{i,j}^m} := \vec \nu_{i,j}^m :=\frac{(\vec q_{i,j_2}^m - \vec q_{i,j_1}^m) \times (\vec q_{i,j_3}^m - \vec q_{i,j_1}^m)}{\|(\vec q_{i,j_2}^m - \vec q_{i,j_1}^m) \times (\vec q_{i,j_3}^m - \vec q_{i,j_1}^m)\|}
\end{equation}
points from $\Omega_{k^-(i)}^m$ to $\Omega_{k^+(i)}^m$.

We propose the following finite element scheme approximating the scheme \eqref{eq:strong_scheme_3D}: Let $\Gamma^0$ be a union of polyhedral surfaces approximating $\Gamma(0)$ and let $\vec X^0 \in \underline{V}(\Gamma^0)$ be the identity function on $\Gamma^0$. Find $\vec X^{m+1} \in \underline{V}(\Gamma^m)$ and $\kappa^{m+1} \in W(\Gamma^m)$, $m=0,1,\ldots,M-1$, such that
\begin{subequations}
\label{eq:fem_scheme_3d}
\begin{equation}
\langle \frac{\vec X^{m+1}-\vec X^m}{\tau_m}, \chi \,\vec\nu^m \rangle_m^h - \sigma\langle \kappa^{m+1}, \chi \rangle_m^h = \langle F^m, \chi \rangle_m^h, \nonumber 
\end{equation}
\begin{equation}
\forall \chi \in W(\Gamma^m), \label{eq:fem_scheme_3d_1}
\end{equation}

\begin{equation}
\langle \kappa^{m+1} \,\vec\nu^m, \vec \eta \rangle_m^h + \langle \nabla_s \vec X^{m+1}, \nabla_s \vec\eta\rangle_m = 0, \nonumber
\end{equation}
\begin{equation}
\forall \vec\eta \in \underline{V}(\Gamma^m).\label{eq:fem_scheme_3d_2}
\end{equation}
\end{subequations}

\vspace{1ex}
Here,  $F^m = (F_1^m, \ldots, F_{N_S}^m)$ is defined by  
\begin{align*}
F_i^m(\vec q_{i,j}^m):=&\lambda \left((u_0(\vec X_i^m(\vec q_{i,j}^m)) - c_{k^+(i)}^m)^2 \right. +\\
& \left. - (u_0(\vec X_i^m(\vec q_{i,j}^m)) - c_{k^-(i)}^m)^2\right), 
\end{align*}
for $i=1,\ldots, N_S$ and $j=1, \ldots, N_{i,V}$ and $c_k^m$ is set to the mean of $u_0$ in $\Omega_k^m$ for $k=1,\ldots,N_R$. 

We further introduce a weighted normal defined at the nodes $\vec X_i^m(\vec q_{i,j}^m) = \vec q_{i,j}^m \in \Gamma_i^m$ by setting
\begin{equation}
\vec \omega_i^m(\vec q_{i,j}^m) := \vec\omega_{i,j}^m := \frac{1}{|\Lambda_{i,j}^m|} \sum_{\sigma_{i,l}^m \in \mathcal{T}_{i,j}^m} |\sigma_{i,l}^m| \,\vec\nu_{i,l}^m, 
\label{eq:omega_ij_m_3d}
\end{equation}
where for $i=1,\ldots,N_S$ and $j=1, \ldots, N_{i,V}$, $\mathcal{T}_{i,j}^m :=\left\{ \sigma_{i,l}^m \,:\, \vec q_{i,j}^m \in \overline{\sigma_{i,l}^m}\right\}$ and $\Lambda_{i,j}^m := \bigcup_{\sigma_{i,l}^m \in \mathcal{T}_{i,j}^m} \overline{\sigma_{i,l}^m}$. 

Further, we set $\vec v_i^m(\vec q_{i,j}^m):= \vec v_{i,j}^m :=\vec \omega_{i,j}^m / \| \vec \omega_{i,j}^m \|$ and $\vec\omega^m = (\vec \omega_1^m, \ldots, \vec\omega_{N_S}^m)$ and $\vec v^m = (\vec v_1^m, \ldots, \vec v_{N_S}^m)$. 

As in \cite{BGN08b}, \cite{BGN08c}, we make a very mild assumption on the triangulations:
\begin{itemize}
\item[$(\mathcal{A})$] For $m=0, \ldots, M$, we assume that $|\sigma_{i,j}^m| > 0$ for all $i=1,\ldots,N_S$ and $j=1, \ldots, N_{i,F}$ and for $m=0, \ldots, M-1$, we assume that $\mathrm{dim}\,\mathrm{span} \left\{\vec\omega_{i,j}^m\right\}_{j=1}^{N_{i,V}}=3$. 
\end{itemize}
The assumption $(\mathcal{A})$ is only violated in very rare cases. For closed surfaces without self intersections, it always holds. 

\begin{theorem}
Let the assumption $(\mathcal{A})$ hold. Then there exists a unique solution $\left\{\vec X^{m+1}, \kappa^{m+1}\right\}$ $\in \underline{V}(\Gamma^m) \times W(\Gamma^m)$ to the system \eqref{eq:fem_scheme_3d}. 
\end{theorem}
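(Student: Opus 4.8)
The plan is to exploit that \eqref{eq:fem_scheme_3d} is a square linear system for the pair $(\vec X^{m+1},\kappa^{m+1})$ and to reduce existence and uniqueness to the assertion that the associated homogeneous system has only the trivial solution. Counting degrees of freedom, the unknowns $\vec X^{m+1}\in\underline{V}(\Gamma^m)$ and $\kappa^{m+1}\in W(\Gamma^m)$ carry $3\sum_i N_{i,V}$ and $\sum_i N_{i,V}$ scalar degrees of freedom, while \eqref{eq:fem_scheme_3d_1} tested against a basis of $W(\Gamma^m)$ and \eqref{eq:fem_scheme_3d_2} tested against a basis of $\underline{V}(\Gamma^m)$ yield exactly the same total number of scalar equations. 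Hence it suffices to show that if $F^m=0$ and $\vec X^m=0$, then necessarily $\vec X^{m+1}=0$ and $\kappa^{m+1}=0$. Following the energy-type argument used in \cite{BGN08b,BGN08c}, I would test the homogeneous version of \eqref{eq:fem_scheme_3d_1} with $\chi=\kappa^{m+1}$ and the homogeneous version of \eqref{eq:fem_scheme_3d_2} with $\vec\eta=\vec X^{m+1}$.

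The first choice gives $\tfrac{1}{\tau_m}\langle\vec X^{m+1},\kappa^{m+1}\vec\nu^m\rangle_m^h=\sigma\langle\kappa^{m+1},\kappa^{m+1}\rangle_m^h$, the second gives $\langle\kappa^{m+1}\vec\nu^m,\vec X^{m+1}\rangle_m^h=-\langle\nabla_s\vec X^{m+1},\nabla_s\vec X^{m+1}\rangle_m$. Since the mass-lumped product is symmetric and $\kappa^{m+1}$ is scalar, the two left-hand sides coincide, so that
\[
\sigma\,\tau_m\,\langle\kappa^{m+1},\kappa^{m+1}\rangle_m^h+\langle\nabla_s\vec X^{m+1},\nabla_s\vec X^{m+1}\rangle_m=0.
\]
Both terms are nonnegative and $\sigma,\tau_m>0$, hence each vanishes separately. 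Positive definiteness of the mass-lumped product, guaranteed by $|\sigma_{i,j}^m|>0$ in $(\mathcal{A})$ (every vertex lies in at least one face of positive area), then forces $\kappa^{m+1}=0$, whereas $\langle\nabla_s\vec X^{m+1},\nabla_s\vec X^{m+1}\rangle_m=0$ forces $\nabla_s\vec X^{m+1}=0$, i.e.\ $\vec X^{m+1}$ is constant on each (connected) surface $\Gamma_i^m$.

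It remains to upgrade constancy to $\vec X^{m+1}=0$, which I expect to be the only delicate point and is exactly where the rank condition of $(\mathcal{A})$ enters. With $\kappa^{m+1}=0$, the homogeneous \eqref{eq:fem_scheme_3d_1} reduces to $\langle\vec X^{m+1},\chi\vec\nu^m\rangle_m^h=0$ for all $\chi\in W(\Gamma^m)$. Inserting the nodal basis function $\chi_{i,j}^m$ and evaluating the mass-lumped sum, the per-face normals $\vec\nu_{i,l}^m$ assemble into the weighted vertex normal of \eqref{eq:omega_ij_m_3d}: a short computation gives $\langle\vec X^{m+1},\chi_{i,j}^m\vec\nu^m\rangle_m^h=\tfrac13\,|\Lambda_{i,j}^m|\,\vec X^{m+1}(\vec q_{i,j}^m)\cdot\vec\omega_{i,j}^m$, so that $\vec X^{m+1}(\vec q_{i,j}^m)\cdot\vec\omega_{i,j}^m=0$ at every vertex. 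Since $\vec X^{m+1}$ equals a fixed constant vector $\vec X_i$ on the connected surface $\Gamma_i^m$, this means $\vec X_i$ is orthogonal to every $\vec\omega_{i,j}^m$, $j=1,\dots,N_{i,V}$; but these span $\mathbb{R}^3$ by $(\mathcal{A})$, whence $\vec X_i=0$ for each $i$. Thus $\vec X^{m+1}=0$ as well, the homogeneous system admits only the trivial solution, and the square system \eqref{eq:fem_scheme_3d} is therefore uniquely solvable for every right-hand side.
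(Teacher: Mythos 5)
Your proof is correct and takes essentially the same route as the paper's: reduce to the homogeneous square system, test with $\chi=\kappa^{m+1}$ and $\vec\eta=\vec X^{m+1}$ to obtain the energy identity $\sigma\tau_m\langle\kappa^{m+1},\kappa^{m+1}\rangle_m^h+\langle\nabla_s\vec X^{m+1},\nabla_s\vec X^{m+1}\rangle_m=0$, conclude $\kappa^{m+1}=0$ and $\vec X^{m+1}$ constant on each surface, and then use the nodal basis together with the spanning condition on the weighted normals $\vec\omega_{i,j}^m$ in $(\mathcal{A})$ to force these constants to vanish. Your explicit evaluation $\langle\vec X^{m+1},\chi_{i,j}^m\vec\nu^m\rangle_m^h=\tfrac13\,|\Lambda_{i,j}^m|\,\vec X^{m+1}(\vec q_{i,j}^m)\cdot\vec\omega_{i,j}^m$ is exactly the computation the paper leaves implicit when it invokes \eqref{eq:omega_ij_m_3d}.
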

\begin{proof}
(See also \cite{BGN08c}.) Since the system is linear, it is sufficient to show uniqueness. Therefore, we consider the following scheme: Find $\vec X \in \underline{V}(\Gamma^m)$ and $\kappa \in W(\Gamma^m)$ such that 
\begin{subequations}
\begin{align}
-\frac{1}{\tau_m} \langle \vec X, \chi \,\vec\nu^m \rangle_m^h + \sigma\langle \kappa, \chi \rangle_m^h  &= 0, && \forall \chi \in W(\Gamma^m), \label{eq:fem_uniqueness_3d_1}\\
\langle \kappa \,\vec\nu^m, \vec \eta \rangle_m^h + \langle \nabla_s \vec X, \nabla_s \vec\eta\rangle_m &= 0, && \forall \vec\eta \in \underline{V}(\Gamma^m),\label{eq:fem_uniqueness_3d_2}
\end{align}
\end{subequations}
holds. Testing \eqref{eq:fem_uniqueness_3d_1} with $\chi = \kappa$ and \eqref{eq:fem_uniqueness_3d_2} with $\vec\eta=\vec X$ leads to 
\begin{equation}
 \sigma \tau_m \langle \kappa, \kappa \rangle_m^h + \langle \nabla_s \vec X, \nabla_s \vec X\rangle_m = 0.
\end{equation}
It follows that $\kappa_{i,j} = 0$ and $\vec X_{i,j}=\vec C_i \in \mathbb{R}^3$ for $i=1, \ldots, N_S$, $j=1, \ldots, N_{i,V}$. Inserting $\kappa = 0$ and  $\vec X = \vec C=(\vec C_1, \ldots, \vec C_{N_S})$ in \eqref{eq:fem_uniqueness_3d_1} results in 
\begin{equation}
\langle \vec C, \chi \,\vec\nu^m \rangle_m^h = 0, \quad \forall \chi \in W(\Gamma^m).
\end{equation}
Choosing $\chi = \chi_{i,j}^m$ (the standard basis) and using \eqref{eq:omega_ij_m_3d}, the definition of $\vec\omega_{i,j}^m$, results in 
\begin{equation}
\vec C_i \,.\, \vec\omega_{i,j}^m = 0, \quad \forall i=1, \ldots, N_S, \,j=1,\ldots, N_{i,V}.
\end{equation}
Finally, using the assumption $(\mathcal{A})$, it follows that $\vec C_i=0$ for each $i=1, \ldots, N_S$.
\qed
\end{proof}

\subsection{Solution of the Discrete System}
\label{subsec:solution_discrete_system_3d}

We define $\delta \vec X^{m+1} := \vec X^{m+1}-\vec X^m$. As $\delta \vec X^{m+1}$ and $\kappa^{m+1}$ are uniquely given by their values at the nodes $\vec q_{i,j}^m$, we consider them as elements in $(\mathbb{R}^3)^N$ and $\mathbb{R}^N$, respectively, where $N=\sum_{i=1}^{N_S}N_{i,V}$. We introduce the matrices $M_m \in \mathbb{R}^{N\times N}$, $\vec N_m \in (\mathbb{R}^3)^{N\times N}$ and $\vec A_m \in (\mathbb{R}^{3 \times 3})^{N\times N}$ by
\begin{equation*}
M_m:= \left(
\begin{array}{ccc}
M_m^1 & \cdots & 0 \\
\vdots &  \ddots & \vdots \\
0 & \ldots & M_m^{N_S} 
\end{array}
\right),
\end{equation*}
\begin{equation*}
\vec N_m:= \left(
\begin{array}{ccc}
\vec N_m^1  & \cdots & 0 \\
\vdots  & \ddots & \vdots \\
0 &  \ldots & \vec N_m^{N_S} 
\end{array}
\right),
\end{equation*}
\begin{equation*}
\vec A_m:= \left(
\begin{array}{ccc}
\vec A_m^1  & \cdots & 0 \\
\vdots &  \ddots & \vdots \\
0 &  \ldots & \vec A_m^{N_S} 
\end{array}
\right),
\end{equation*}
where $M_m^i \in \mathbb{R}^{N_{i,V} \times N_{i,V}}$, $\vec N_m^i \in (\mathbb{R}^3)^{N_{i,V} \times N_{i,V}}$, $\vec A_m^i \in (\mathbb{R}^{3\times 3})^{N_{i,V} \times N_{i,V}}$, $i=1, \ldots, N_S$. Their entries are defined by
\begin{align}
[M_m^i]_{kl} &:= \langle \chi_{i,k}^m, \chi_{i,l}^m \rangle_m^h,\nonumber \\
[\vec N_m^i]_{kl} &:= \langle \chi_{i,k}^m, \chi_{i,l}^m \vec\nu^m \rangle_m^h, \label{eq:def_matrices_MNA_3d} \\
[\vec A_m^i]_{kl} &:= \langle \nabla_s \chi_{i,k}^m, \nabla_s \chi_{i,l}^m \rangle_m \,\vec{\mathrm{Id}}_{3},\nonumber
\end{align}
with $i=1,\ldots,N_S$, $k,l=1,\ldots, N_{i,V}$. Here, $\vec{\mathrm{Id}}_{3}$ denotes the identity matrix in $\mathbb{R}^{3\times 3}$. Further, we introduce $b_m = (b_m^1, \ldots, b_m^{N_S})\in \mathbb{R}^N$ defined by
\begin{equation}
[b_m^i]_k := \langle F_i^m, \chi_{i,k}^m \rangle_m^h, \quad i=1,\ldots,N_S,\,k=1,\ldots,N_{i,V}.
\label{eq:def_b_m_3d}
\end{equation}
The scheme \eqref{eq:fem_scheme_3d} can be rewritten to the following problem: Let $\Gamma^0$ be a polyhedral approximation of $\Gamma(0)$ and let $\vec X^0=(\vec X_1^0, \ldots, \vec X_{N_S}^0) \in (\mathbb{R}^3)^N$ with $\vec X_i^0 = (\vec X_{i,1}^0, \ldots, \vec X_{i,N_{i,V}}^0)$ such that $\vec X_{i,j}^0$ are the coordinates of the vertices of $\Gamma_i^0$ for $i=1,\ldots,N_S$, $j=1,\ldots,N_{i,V}$. 
For $m=0, \ldots, M-1$ find $\delta \vec X^{m+1}\in(\mathbb{R}^3)^N$ and $\kappa^{m+1}\in\mathbb{R}^N$ such that 
\begin{equation}
\left(
\begin{array}{cc}
\sigma \tau_m M_m & -\vec N_m^T \\
\vec N_m   & \vec A_m
\end{array}
\right)
\left(
\begin{array}{c}
\kappa^{m+1} \\ \delta \vec X^{m+1} 
\end{array}
\right)
= 
\left(
\begin{array}{c}
-\tau_m b_m \\ -\vec A_m \vec X^m 
\end{array}
\right).
\end{equation}
Applying a Schur complement approach, we can transform this system to 
\begin{subequations}
\label{eq:schur_3d}
\begin{equation}
\kappa^{m+1} = \frac{1}{\sigma} M_m^{-1} \left(\frac{1}{\tau_m} \vec N_m^T \delta \vec X^{m+1} - b_m \right), 
\end{equation}
\begin{align}
&\left(\frac{1}{\sigma\tau_m} \vec N_m M_m^{-1} \vec N_m^T + \vec A_m\right) \delta \vec X^{m+1} = \nonumber \\
&=  - \vec A_m \vec X^m + \frac{1}{\sigma}\vec N_m M_m^{-1}b_m. \label{eq:schur2_3d}
\end{align}
\end{subequations}
Since the system matrix in \eqref{eq:schur2_3d} is symmetric and positive definite under the assumption $(\mathcal{A})$, there exists a unique solution. 

The linear system \eqref{eq:schur2_3d} can be solved with an iterative solver, for example, with the method of conjugate gradients with possible preconditioning, or with a direct solver for sparse matrices. For the experiments and examples presented in  Section~\ref{sec:results} of this paper, we use a MATLAB built-in routine, a direct solver for sparse systems. Even for two-dimensional problems, which result from the evolution of two-dimensional surfaces, the sparse direct solver is very efficient from a computational view.

\subsection{Topology Changes}
\label{subsec:top_changes_3d}
Parametric methods cannot handle topology changes automatically in contrast to other numerical methods like the level set method. During the evolution of surfaces singularities can occur like a pinch-off, see \cite{Baensch05}, \cite{BGN08c}. In order to proceed after a pinch-off, the surface has to be split in two single surfaces. Other possible topology changes, that we will consider here, are merging of two surfaces and change of the genus (occurs for example during an evolution of a torus to a sphere or vice versa).

In \cite{MikulaUrban12}, an algorithm is proposed to efficiently detect splitting and merging of evolving curves in $\mathbb{R}^2$, see also \cite{Balazovjech12}. In \cite{Benninghoff2014a}, we used and extended this algorithm to detect topology changes in 2D images. In this paper, we want to adapt this approach to the 3D case.  We aim at detecting topology changes which could occur during the evolution of surfaces. Having found the location where a topology change occurs, we propose a method how to modify the triangulations.

\subsubsection{Detection of a Topology Change}
\label{subsubsec:detection_topology_change_3D}
In \cite{MikulaUrban12}, a virtual, auxiliary 2D background grid is constructed which covers a two-dimensional domain, and topology changes of curves are detected if node points from different curves or different parts of one curve are located in one array of the background grid. 

Motivated by this method for evolving curves, we propose the following method to detect topology changes of evolving surfaces. The basic idea is the use of a uniform 3D grid of cubes. A topology change may occur, if a large number of nodes or if nodes of different surfaces or different parts of one surface (with opposite normal vector) are located in one cube. 

In detail, to detect a change in topology, we construct a uniform 3D background grid which covers the image domain $\Omega$. In the following, we assume that $\Omega$ is a cuboid. If the 3D image $u_0$ is not given on a cuboid volume, we consider a cuboid which contains $\Omega$. 
 
Let $\Omega = [x_\mathrm{min},x_\mathrm{max}]\times [y_\mathrm{min},y_\mathrm{max}] \times [z_\mathrm{min},z_\mathrm{max}] \subset \mathbb{R}^3$ be the image domain containing in particular the surfaces $\Gamma_i^m$, $i=1, \ldots, N_S$. We consider a  grid dividing $\Omega$ in a set of many small cubes of edge width $a \in \mathbb{R}$. Let the grid consist of $N_x \times N_y \times N_z$ cubes, where $N_x = \mathrm{ceil}((x_\mathrm{max}-x_\mathrm{min})/a)$, $N_y = \mathrm{ceil}((y_\mathrm{max}-y_\mathrm{min})/a)$ and $N_z = \mathrm{ceil}((z_\mathrm{max}-z_\mathrm{min})/a)$. 

We now perform one loop over all surfaces and nodes $\vec X_{i,j}^m$, $i=1,\ldots,N_S$, $j=1,\ldots, N_{i,V}$. If a node $\vec X_{i,j}^m$ is the first node which is detected to lie in a certain cube, we create a new list for that cube, where we store the index pair $(i,j)$. If  another node has already been identified to be located inside that cube, we add the index pair $(i,j)$ to the existing list. 

If there is a large number of nodes  located in a cube, i.e. more than $N_\mathrm{detect}$ nodes, a topology change likely occurs, and the cube is stored in a list for possible topology changes. It is also possible that the node density is only locally very high at this location, but no topology change happens. 

If there are less than $N_\mathrm{detect}$ nodes in the current list, we compare the surface index and the direction of the weighted normal vector of the current node with those of the nodes already stored in the list. If two different surface indices $i_1$ and $i_2$ occur or if two nodes with (nearly) opposite weighted normal vector are located in one cube, a topology change likely happens. The corresponding cube is accordingly stored in a list for topology changes. 

After having considered all nodes, the cubes marked for topology changes are considered one by one. If a topology change is identified, the surface triangulation is accordingly changed. Thus, by successively considering all marked cubes, more than one topology change can be executed in one time step. The list can be optionally sorted such that the cube with the largest number of nodes is considered first.  

The detection of topology changes is very efficient from a computational view. The effort is of order  $\mathcal{O}(N)$, where $N$ is the total number of node points. For comparison, a simple approach, where  all possible pairs of two nodes are considered and where the distance between two nodes is computed to detect a topology change, would result in a computational effort of order $\mathcal{O}(N^2)$. 

Similar as described in \cite{Benninghoff2014a}, the grid size $a$ can be adaptively set, for example dependent on the speed of the evolving surfaces. Therefore the method to detect topology changes is both efficient and robust. 

\subsubsection{Identification of the Topology Change}
\label{subsec:identification_top_change_3D}
For identifying which kind of topology change occurs, the nodes of the affected cube, i.e. the current cube of the sorted list, and the nodes of up to 26 neighbor cubes  (in total up to 27 cubes, i.e. $3 \times 3 \times 3$ cubes) are considered. Let $S = \{j_1, \ldots, j_{n_c}\}$ denote the index set of the nodes and let $\vec X_j$, $j \in S$, denote the coordinates of the nodes located in the   cubes. Further, let $\vec\omega_j$, $j\in S$, denote the corresponding weighted normal vectors at  $\vec X_j$, recall their definition in \eqref{eq:omega_ij_m_3d}. For the ease of illustration, we omit the time dependency (time index $m$) in the notation. 
 
The different topology changes are distinguished by considering the weighted normal vectors $\vec \omega_j$. The idea is that in case of merging and in case of increasing genus, their are two main group of nodes which can be found by considering their normal vectors. For splitting and decrease of genus, there are more than two main directions. 

The node with index $j=j_1$ is set as representative of the first group. We choose thresholds $thr1 < thr2$, for example $thr1 = 20^\circ$, $thr2 = 160^\circ$. For $j=j_2,\ldots, j_{n_c}$ we consider the angle $\alpha$ between $\vec \omega_{j_1}$ and $\vec\omega_j$. If $\alpha < thr1$, the node $j$ belongs to the first group. If $\alpha > thr2$ and if the second group is empty, the node $j$ becomes the representative node of the second group. If the second group is not empty, we consider the angle $\beta$ between $\vec \omega_j$ and $\vec\omega_{k_0}$, where $k_0 \in \{j_2, \ldots, j_{n_c}\}$ is the representative of the second group. If $\beta < thr1$, $j$ is added to the second group. 

For the next search we replace $\vec\omega_{j_1}$ and $\vec\omega_{k_0}$ by the average normal vectors $\vec n_1$ and $\vec n_2$ of group 1 and group 2, respectively, and re-consider the nodes which could not have been assigned to one group in the first step. If the angle between $\vec\omega_j$ and $\vec n_1$  or $\vec\omega_j$ and $\vec n_2$ is smaller than $thr1$, the node $\vec X_j$ is added to the corresponding group. 

If group 2 is empty, or if one of the groups consists of only a small number of nodes (e.g. $< 5\%$ of $n_c$), we start again by using another node as representative for the first group (e.g. $j=j_2$), since the node $j=j_1$ could be an outlier. If no start node can be found, such that there exist two groups of nodes as described above, no topology change takes place. This can happen, if all weighted normals point in nearly the same direction. 

The method such provides that a topology change like splitting is not wrongly detected at locations where several nodes are just close to each other but their normal vectors point in the same direction. From a mesh quality point of view, such meshes should be avoided; the nodes should be distributed equally over the surface. However, the detection of topology changes should be robust enough not to wrongly detect a splitting at locations where there is just a high density of nodes. 

If both groups have a sufficient number of nodes, we proceed by considering the remaining normal vectors which could have not been assigned to one of the two groups. We again consider the angle between $\vec\omega_j$ and $\vec n_1$ and $\vec\omega_j$ and $\vec n_2$. If both angles are $> thr3$ (e.g. $thr3 = 40^\circ$), the normal $\vec\omega_j$ points in a complete different direction compared to $\vec n_1$ and $\vec n_2$. Let $N_0$ be the number of such points. 
If $N_0$ exceed a predefined number (e.g. $1/3 \,n_c$), then there are more than two main groups of directions. In this case, there is a splitting or decrease of genus. Splitting and decrease of genus are handled similarly (see below for details how to modify the triangulations). Triangles close to the detected cube are deleted. If the remaining triangulation of the former surface consists of two connected components, a splitting occurs. If the remaining triangulation is connected, a decrease of genus occurs.

If $N_0$ is zero or if it does not exceed the predefined number, possible remaining normal vectors are only single outliers and there are only two main groups of normal vectors with nearly opposed normal vector. In this case, there is a merging or increase of genus. If there are nodes belonging to two different surfaces, a merging occurs. Otherwise, an increase of genus occurs. 

An illustration of the algorithm to detect and identify topology changes is given in Figure~\ref{fig:top_change_diagram_3d}. 

% Flow diagram top change algorithm

% Define block styles
%\tikzstyle{decision} = [diamond, draw, fill=green!20, 
    %text width=1.8cm, text centered, node distance=0.7cm]
\tikzstyle{decision} = [rectangle, draw, fill=gray!20, 
    text width=2cm, text centered, node distance=0.7cm, minimum height = 1.2em]
				
\tikzstyle{block1} = [rectangle, draw, 
    text width=5cm, text centered, minimum height=1.2em]
\tikzstyle{block2} = [rectangle, draw, 
    text width=3cm, text centered, minimum height=1.2em]
\tikzstyle{block3} = [rectangle, draw, 
    text width=1.5cm, text centered, minimum height=1.2em]
		
\tikzstyle{line} = [draw, -latex']

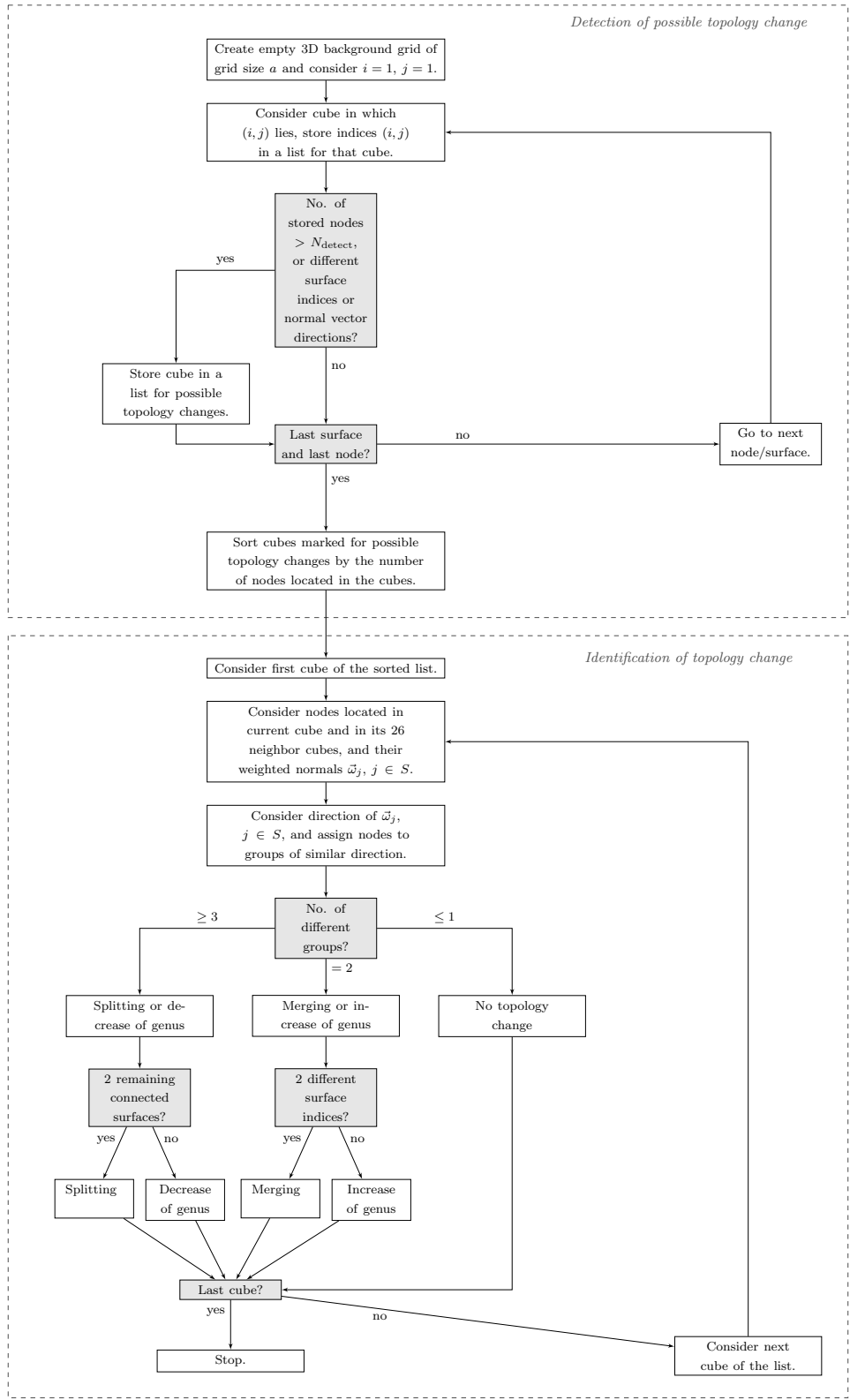
\begin{figure*}
\begin{center}
\resizebox{!}{21cm}{%
\begin{tikzpicture}[node distance = 0.5cm, auto]
\draw [dashed, color=black!70] (-7,1.2) rectangle (11.5,-12.3);
\draw [dashed, color=black!70] (-7,-12.7) rectangle (11.5,-29.5);

\draw[color=black!70] (8,0.8) node {\small{\textit{Detection of possible topology change}}};
\draw[color=black!70] (8,-13.2) node {\small{\textit{Identification of topology change}}};

%% STEP 1 - Detection
    % Place nodes
		\node [block1] (grid) {\footnotesize{Create empty 3D background grid of grid size $a$ and consider $i=1$, $j=1$.}};
		\node [block1, below =of grid] (xij) {\footnotesize{Consider cube in which $(i,j)$ lies, store indices $(i,j)$ in a list for that cube.}};
		\node [decision, below =of xij] (possible_top_change) {\footnotesize{No. of stored nodes $> N_\mathrm{detect}$, or different surface indices or normal vector directions?}}; 
		\node [block2, below left=of possible_top_change, xshift=-0.2cm] (store_cube) {\footnotesize{Store cube in a list for possible topology changes.}};
		\node [decision, below =of possible_top_change, yshift=-1cm] (last_node) {\footnotesize{Last surface and last node?}};
		\node [block1, below=of last_node, yshift = -1cm] (stop) {\footnotesize{Sort cubes marked for possible topology changes by the number of nodes located in the cubes.}};
		\node [block2, right=of last_node, xshift=7.05cm, text width = 2cm] (update) {\footnotesize{Go to next node/surface.}};

    % Draw edges
		\path [line] (grid) -- (xij);
		\path [line] (xij) -- (possible_top_change);
		\path [line] (possible_top_change) -| node [near start, above] {\footnotesize{yes}} (store_cube);
		\path [line] (possible_top_change) -- node [near start]  {\footnotesize{no}} (last_node);
		\path [line] (store_cube) |- (last_node);
	  \path [line] (last_node) -- node [near start] {\footnotesize{yes}} (stop);
		\path [line] (last_node) -- node [near start]  {\footnotesize{no}}  (update); 
		\path [line] (update) |- (xij);

%% STEP 2		
    % Place nodes
    \node [block1, below= of stop, yshift = -1cm] (first_cube) {\footnotesize{Consider first cube of the sorted list.}};
    \node [block1, below= of first_cube, ] (current_cube) {\footnotesize{Consider nodes located in current cube and in its 26 neighbor cubes, and their weighted normals $\vec\omega_j$, $j \in S$.}};
    \node [block1, below= of current_cube] (groups) {\footnotesize{Consider direction of $\vec\omega_j$, $j \in S$, and assign nodes to groups of similar direction.}};
    
		\node [decision, below =of groups] (group_decision) {\footnotesize{No. of different groups?}}; 
		\node [block2, below left=of group_decision, xshift = -1cm, yshift=-0.45cm] (splitting_decrease) {\footnotesize{Splitting or decrease of genus}};
		\node [block2, below=of group_decision, yshift=-0.3cm] (merging_increase) {\footnotesize{Merging or increase of genus}};
		\node [block2, below right=of group_decision, xshift = 1cm, yshift=-0.45cm] (no_top_change) {\footnotesize{No topology \\ change}};
    
		%\node [decision, below =of splitting_decrease] (decision1) {\footnotesize{Do $\vec\omega_j$ point away from each other?}}; 
		\node [decision, below =of splitting_decrease] (decision1) {\footnotesize{2 remaining connected surfaces?}}; 
		\node [block3, below left=of decision1,  xshift= 1.35cm, yshift = -0.8cm] (splitting) {\footnotesize{$ $ Splitting \newline $ $}};
		\node [block3, below right=of decision1, xshift=-1.35cm, yshift = -0.8cm] (decrease) {\footnotesize{Decrease of genus}};
    
		\node [decision, below =of merging_increase] (decision2) {\footnotesize{2 different surface indices?}}; 
		\node [block3, below left=of decision2,  xshift= 1.35cm, yshift = -0.8cm] (merging) {\footnotesize{$ $ Merging \newline $ $}};
		\node [block3, below right=of decision2, xshift=-1.35cm, yshift = -0.8cm] (increase) {\footnotesize{Increase of genus}};

    \node [decision, below =of decrease, xshift = 1cm, yshift = -0.6cm] (last_cube) {\footnotesize{Last cube?}};
		
    \node [block2, below =of last_cube, yshift = -0.6cm] (stop2) {\footnotesize{Stop.}};    
   \node [block2, below right=of last_cube, xshift = 8.3cm, yshift = -0.45cm] (next_cube) {\footnotesize{Consider next cube of the list.}};

   % Draw edges
   \path [line] (stop) -- (first_cube); 
   \path [line] (first_cube) -- (current_cube); 
   \path [line] (current_cube) -- (groups); 
   \path [line] (groups) --  (group_decision);
   
   \path [line] (group_decision) -| node [above, near start]  {\footnotesize{$\geq 3$}} (splitting_decrease);
   \path [line] (group_decision) -- node [near start]  {\footnotesize{$=2$}} (merging_increase);
   \path [line] (group_decision) -| node [above, near start]  {\footnotesize{$\leq 1$}} (no_top_change);

   \path [line] (splitting_decrease) --  (decision1);
   \path [line] (merging_increase) --  (decision2);
   \path [line] (decision1) -- node [left, near start]  {\footnotesize{yes}} (splitting);
   \path [line] (decision1) -- node [right, near start]  {\footnotesize{no}}  (decrease);
   \path [line] (decision2) -- node [left, near start]  {\footnotesize{yes}} (merging);
   \path [line] (decision2) -- node [right, near start]  {\footnotesize{no}}  (increase);

   \path [line] (no_top_change) |- (last_cube); 
	
   \path [line] (next_cube) |- (current_cube);

   \path [line] (splitting) --  (last_cube);
   \path [line] (decrease)  --  (last_cube);
   \path [line] (merging)   --  (last_cube);
   \path [line] (increase)  --  (last_cube);
	
   \path [line] (last_cube) -- node [left, near start] {\footnotesize{yes}} (stop2);
   \path [line] (last_cube)	-- node [below, near start] {\footnotesize{no}} (next_cube);

\end{tikzpicture}
}
\end{center}

\caption{Illustration of the detection and identification of topology changes of surfaces.}
\label{fig:top_change_diagram_3d}
\end{figure*}

\subsubsection{Algorithm for Splitting and Decreasing of Genus}
We propose the following algorithm for a possible modification of the surface triangulation after the detection of a splitting or decrease of genus.  

\begin{itemize}
\item \textbf{Preparation and deletion of simplices:} In case of splitting or decreasing genus, we consider the set of affected nodes $\vec X_j$, $j\in S$, which are located in the cube or in a neighbor cube, where the topology change has been detected. Let $\vec p_E$ be the mean of the points $\{\vec X_j \,:\, j\in S\}$. We delete all simplices with at least one vertex belonging to the set $\{\vec X_j\,:\, j \in S\}$.  When deleting one simplex, we change the neighbor information of neighbor simplices at the corresponding edges to $-1$ (free edges). Deletion creates two temporary holes in the surface(s). Simplices with two or three free edges are deleted as well, see Figure~\ref{fig:collision01}.  As a result we either have two sets of connected simplices ($\rightarrow$ splitting) or one set of connected simplices ($\rightarrow$ decrease of genus). 
\item \textbf{Set surface index (splitting only):} The remaining simplices form two connected sets. For one set, we need to re-set the surface index. Let $i$ be the surface index of the original surface. By splitting the total number of  surfaces is increased to $N_S+1$. Starting with one simplex with a free edge, we re-set its surface index from $i$ to $N_S+1$. Then, we consider its neighbor simplices and  assign them to surface $N_S+1$ also. By this procedure, the simplices of one of the two connected components are assigned one by one to the surface $N_S + 1$ by heritage, i.e. by use of neighbor information. 
\item \textbf{Generate new simplices:} We close each of the two intermediate holes (where simplices have been deleted) by constructing new simplices at edges of simplices where the neighbor information has been set to $-1$. First we create two new points, each with coordinates $\vec p_E$, one for each intermediate hole. In the next time steps the two nodes can move away from each other. If $\sigma$ is a simplex with a free edge given by $\vec X_{\sigma,j_1}$ and $\vec X_{\sigma,j_2}$ with no neighbor simplex, a new simplex is generated given by the vertices $\vec  X_{\sigma,j_1}$, $\vec X_{\sigma,j_2}$ and one of the two new nodes at $\vec p_E$. The new simplex inherits the surface index from $\sigma$. 
\item \textbf{Improve mesh quality:} By simply connecting all free edges with one of the two new vertices at $\vec p_E$, the two  vertices can belong to a big number of simplices. Let $\sigma_1$ and $\sigma_2$ be two of the newly generated simplices with one common edge. We construct $4$ new simplices from $\sigma_1$ and $\sigma_2$ such that the new vertex belongs to only one of the $4$ new simplices, see Figure~\ref{fig:4sigma}. Therefore the number of elements to which the vertices at $\vec p_E$ belong to is approximately halved.  We repeat this procedure until each of the newly created nodes at $\vec p_E$ belongs to $\leq 8$ elements. 
\end{itemize}

\begin{figure}[t]
\begin{center}
\begin{tikzpicture}[scale=0.35,transform shape]
\begin{scope}
\draw[line width=1.3pt, color=red] (-1,2) -- (0,1.2) -- (1,2) -- (3,0) -- (2, -1.5) -- (-1,-2) -- (-2,0) -- (-1,2);
\draw[line width=1.3pt, color=red] (0,0) -- (0.5, 0.8) -- (1,0) -- (0,0); 
\draw (-1,2) -- (0,3) -- (1,2) -- (3,1.8)-- (3,0) -- (3.3, -1.7) -- (2, -1.5) -- (1,-3)-- (-1,-2) -- (-2.5,-2) -- (-2,0) -- (-2.8,1.8) -- (-1,2);
\draw (-1,2) -- (1,2);

\draw (0,3) -- (2.3,3.1) -- (3,1.8)-- (4.2,0) -- (3.3, -1.7) -- (2.3, -3.0) -- (1,-3)-- (-1,-3.3) -- (-2.5,-2) -- (-3.8,0) -- (-2.8,1.8) -- (-2,3) -- (0,3);
\draw (1,2) -- (2.3,3.1); 
\draw (3,0) -- (4.2,0); 
\draw (2,-1.5) -- (2.3,-3.0);
\draw (-1,-2) -- (-1,-3.3); 
\draw (-2,0) -- (-3.8,0); 
\draw (-1,2) -- (-2,3); 
\end{scope}

\begin{scope}[xshift=12cm]
\draw[line width=1.3pt, color=red] (-1,2) -- (1,2) -- (3,0) -- (2, -1.5) -- (-1,-2) -- (-2,0) -- (-1,2);
\draw (-1,2) -- (0,3) -- (1,2) -- (3,1.8)-- (3,0) -- (3.3, -1.7) -- (2, -1.5) -- (1,-3)-- (-1,-2) -- (-2.5,-2) -- (-2,0) -- (-2.8,1.8) -- (-1,2);

\draw (0,3) -- (2.3,3.1) -- (3,1.8)-- (4.2,0) -- (3.3, -1.7) -- (2.3, -3.0) -- (1,-3)-- (-1,-3.3) -- (-2.5,-2) -- (-3.8,0) -- (-2.8,1.8) -- (-2,3) -- (0,3);
\draw (1,2) -- (2.3,3.1); 
\draw (3,0) -- (4.2,0); 
\draw (2,-1.5) -- (2.3,-3.0);
\draw (-1,-2) -- (-1,-3.3); 
\draw (-2,0) -- (-3.8,0); 
\draw (-1,2) -- (-2,3); 
\end{scope}
\end{tikzpicture}
\end{center}
\caption{Left: Part of the surface near a temporary hole. Free edges are drawn with red color. Right: Surface near the hole after deletion of simplices with more than one free edge.}
\label{fig:collision01}
\end{figure}
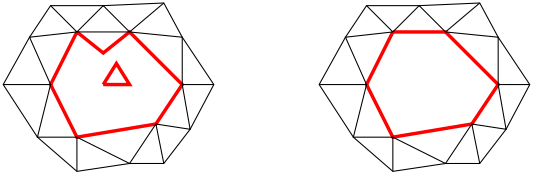

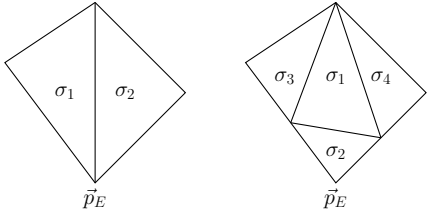
\begin{figure}[t]
\begin{center}
\begin{tikzpicture}[scale=0.4,transform shape]
\begin{scope}
\draw (0,1) -- (-3,-1) -- (0,-5) -- (3,-2) -- (0,1);
\draw (0,1) -- (0,-5); 
\draw (-1,-2) node{\huge $\sigma_1$};
\draw (1,-2) node{\huge $\sigma_2$};
\draw (0,-5.5) node{\huge $\vec p_E$};
\end{scope}

\begin{scope}[xshift=8cm]
\draw (0,1) -- (-3,-1) -- (0,-5) -- (3,-2) -- (0,1);
\draw (-1.5,-3) -- (1.5,-3.5); 
\draw (-1.5,-3) -- (0,1); 
\draw (0,1) -- (1.5,-3.5); 
\draw (0,-1.5) node{\huge $\sigma_1$};
\draw (0,-4) node{\huge $\sigma_2$};
\draw (-1.7,-1.5) node{\huge $\sigma_3$};
\draw ( 1.5,-1.5) node{\huge $\sigma_4$};
\draw (0,-5.5) node{\huge $\vec p_E$};
\end{scope}
\end{tikzpicture}
\end{center}
\caption{Improving mesh quality after a splitting or decrease of genus - Decrease the number of edges at the node $\vec p_E$.}
\label{fig:4sigma}
\end{figure}

\subsubsection{Algorithm for Merging and Increasing of Genus}
We propose the following algorithm for a possible modification of the surface triangulation after the detection of a merging or increase of genus.  
\begin{itemize}
\item \textbf{Preparation and deletion of simplices:} In case of merging or increasing genus, we consider the set of affected nodes $\vec X_j$, $j\in S$, which are located in the cube or in a neighbor cube, where the topology change has been detected. We delete all simplices with at least one vertex belonging to the set $\{\vec X_j\,:\, j \in S\}$.  This generates temporary free edges, i.e. simplices exist which do not have a neighbor simplex at that edge. The neighbor index corresponding to the free edge  is set to $-1$. This creates two intermediate holes. Simplices with two or three free edges are deleted as well, see Figure~\ref{fig:collision01}. 
\item \textbf{Matching free nodes/edges:} There exist now two sets of connected free edges. Let $I_{free,k}$, $k=1,2$, be the set of nodes corresponding to the free edges (end points of the edges). We try to match the nodes of $I_{free,1}$ with the nodes of $I_{free,2}$ using the Hungarian method \cite{Kuhn55} which is a combinatorial optimization algorithm. The Euclidean distance is used as cost criterion for matching two nodes. Since the number of nodes of the two sets need not be equal, there can be nodes which could not have been matched in the first step, see Figure~\ref{fig:collision02} (top). Therefore, new nodes are created by bisection of simplices at a free edge. Finally, each node can be matched, see Figure~\ref{fig:collision02} (bottom). 
\item \textbf{Point/Edge merging:} Since two matched nodes can have slightly different coordinates, they are replaced by one node in the middle of the line connecting the two nodes. The free edges of the two open holes are merged by identifying the matched nodes. The simplices, nodes and edges administration needs to be adapted. The nodes in $I_{free,1}$ are updated; each point is replaced by the mid point between it and its matching partner.  The nodes belonging to $I_{free,2}$ are deleted. Half of the free edges are deleted. The edge, node and neighbor information of the simplices at the former holes need to be adapted.  In case of merging, the surface index of all simplices belonging to the second surface is set to the surface index of the first surface. 
\end{itemize}

\begin{figure}[t]
\begin{center}
\begin{tikzpicture}[scale=0.6,transform shape]
\begin{scope}
\draw (-3,0) --   (-1,-0.2) -- (0.8,0.2) -- (2,-0.1) -- (3,0.3) -- (5,0); 
\draw (-3,-2) -- (-1.1,-2.2) --           (1.8,-1.8)          -- (5.2,-2.3); 

\draw[dashed] (-4,-0.2) -- (-3,0); 
\draw[dashed] (5,0) -- (6,0.3); 
\draw[dashed] (-4,-1.8) -- (-3,-2);
\draw[dashed] (5.2,-2.3) -- (6,-2); 

\draw[line width=1.3pt,color = red] (-3,0) -- (-3,-2); 
\draw[line width=1.3pt,color = red] (-1,-0.2) -- (-1.1,-2.2); 
\draw[line width=1.3pt,color = red] (2,-0.1) -- (1.8,-1.8); 
\draw[line width=1.3pt,color = red] (5,0) -- (5.2,-2.3); 

\tikzset{punkt/.style={fill,circle,inner sep=2.5pt,label=#1}}
\node[punkt]  at (-3,0) {} ;
\node[punkt]  at (-1,-0.2) {} ;
\node[punkt]  at (0.8,0.2) {} ;
\node[punkt]  at (2,-0.1) {} ;
\node[punkt]  at (3,0.3) {} ;
\node[punkt]  at (5,0) {} ;

\node[punkt]  at (-3,-2) {} ;
\node[punkt]  at (-1.1,-2.2) {} ;
\node[punkt]  at (1.8,-1.8) {} ;
\node[punkt]  at (5.2,-2.3) {} ;
\end{scope}

\begin{scope}[yshift = -4cm]
\draw (-3,0) --   (-1,-0.2) -- (0.8,0.2) -- (2,-0.1) -- (3,0.3) -- (5,0); 
\draw (-3,-2) -- (-1.1,-2.2) -- (0.35,-2.0) -- (1.8,-1.8) -- (3.5,-2.05)-- (5.2,-2.3); 

\draw[dashed] (-4,-0.2) -- (-3,0); 
\draw[dashed] (5,0) -- (6,0.3); 
\draw[dashed] (-4,-1.8) -- (-3,-2);
\draw[dashed] (5.2,-2.3) -- (6,-2); 

\draw[line width=1.3pt,color = red] (-3,0) -- (-3,-2); 
\draw[line width=1.3pt,color = red] (-1,-0.2) -- (-1.1,-2.2); 
\draw[line width=1.3pt,color = red] (2,-0.1) -- (1.8,-1.8); 
\draw[line width=1.3pt,color = red] (5,0) -- (5.2,-2.3); 

\draw[line width=1.3pt,color = green] (0.8,0.2) -- (0.35,-2.0); 
\draw[line width=1.3pt,color = green] (3,0.3) -- (3.5,-2.05);

\tikzset{punkt/.style={fill,circle,inner sep=2.5pt,label=#1}}
\node[punkt]  at (-3,0) {} ;
\node[punkt]  at (-1,-0.2) {} ;
\node[punkt]  at (0.8,0.2) {} ;
\node[punkt]  at (2,-0.1) {} ;
\node[punkt]  at (3,0.3) {} ;
\node[punkt]  at (5,0) {} ;

\node[punkt]  at (-3,-2) {} ;
\node[punkt]  at (-1.1,-2.2) {} ;
\node[punkt]  at (1.8,-1.8) {} ;
\node[punkt]  at (5.2,-2.3) {} ;

\node[punkt] at (0.35,-2.0) {}; 
\node[punkt] at (3.5,-2.05) {}; 

\end{scope}

\end{tikzpicture}
\end{center}
\caption{A subset of the free nodes. Top: Intermediate matching (red lines mark matching pairs). Bottom: Complete matching after inserting new nodes (red and green lines mark matching pairs).}
\label{fig:collision02}
\end{figure}

Note, that local refinement after a merging is typically necessary. This is automatically done, by a refinement method described in Section~\ref{subsec:additional_cormpuational_aspects_3d}. If the surface grows locally near the former merging part, the simplices will become greater compared to the average simplex of the surface. In this case, the large simplices will be refined.

The idea of creating two intermediate open holes and merging the two surfaces there is based on \cite{Brochu09}. There however, each hole is restricted to consist of exactly four free edges. 
New triangles between the free edges are created instead of merging the edges. 

In our method, the seeking for close points (and therefore close edges/simplices) is very efficient, since we make use of a background grid motivated by the method presented in \cite{MikulaUrban12}. We extended this method originally intended for curves in the plane to topology changes of surfaces. We allow for intermediate holes with an arbitrary number of free edges. The hole size is of the magnitude of the grid size. 

%In \cite{Brochu09} a variety of additional topological operations are proposed like edge operations (edge split, edge flip) and operations to avoid simplex collision. 

%The operations we have implemented beyond the detection of topology changes are described in the next sections.  Mesh operations are only needed in special situations, for example shortly before or after topology changes. In many situations, our method automatically provides a good mesh quality.

\subsection{Additional Computational Aspects}
\label{subsec:additional_cormpuational_aspects_3d}

\subsubsection{Computations of regions and coefficients}
\label{subsubsec:comp_regions_coeffs}
The computation of regions $\Omega_k^m$ and coefficients $c_k^m$, the mean of $u_0$ in $\Omega_k^m$,  is done as follows: We assign each voxel of the three-dimensional image domain to a phase $\Omega_k^m$. If a voxel is truncated by a surface, it is assigned to the phase to which the largest part belongs or to any of the two regions in case of two equal parts. Let $S_k^m$ be the set of $n_k^m$ voxels belonging to $\Omega_k^m$. Then the approximation $c_k^m$ is set to 
\begin{equation}
c_k^m := \frac{C_k^m}{n_k^m}, \quad\quad C_k^m := \sum_{vox \in S_k^m} u_0|_{vox}.
\end{equation}

The entire image domain needs to be considered only for $m=0$. For $m>0$, we only locally update the regions and re-compute the coefficients on this basis. For that, we consider a small band/tube of voxels around the current surfaces and look for changes of the region assignment. 

As the normal $\vec\nu_i^m$ points from $\Omega_{k^-(i)}^m$ to $\Omega_{k^+(i)}^m$, the voxels close to the surface $\Gamma_i^m$ can be assigned to the phase $k^+(i)$ or $k^-(i)$, respectively. 

In the update step, we first set $n_k^m = n_k^{m-1}$ and $C_k^m = C_k^{m-1}$ for $k=1,\ldots,N_R$. For $i=1,\ldots,N_S$, all voxels in an environment of $\Gamma_i^m$ are subsequently considered. Let a voxel $vox$ be assigned to phase $k \in \{k^+(i), k^-(i)\}$ and let $l\neq k$ be the former phase index of the voxel. Then, we set
\begin{align}
n_k^m &= n_k^m + 1, \quad n_l^m = n_l^m -1, \nonumber \\
C_k^m &= C_k^m + u_0|_{vox}, \quad C_l^m = C_l^m - u_0|_{vox}.
\end{align}
After having considered all voxels close to the surfaces, the coefficients are set to $c_k^m = C_k^m / n_k^m$ for $k=1, \ldots, N_R$. 

\subsubsection{Time Step Control}
\label{subsubsec:time_step_control_3D}
We use a certain adaptive time step setting to control the speed of the evolution of the surface(s). Let $\Delta t = \tau_m$ denote the (possibly variable) time step size. The time step size is controlled as follows: Let $\delta X_n^\mathrm{min}>0$, $\delta X_n^\mathrm{max}>0$ with $\delta X_n^\mathrm{min} < \delta X_n^\mathrm{max}$ be user-defined tolerances for the absolute value of the position difference in normal direction. Let $\Delta t > 0$ be an initial time step size for $m=0$ or the time step size of the previous time step for $m > 0$. 

We propose the following time step size control:
\noindent Choose a factor ${\lambda_t}\in\mathbb N$ (for example $\lambda_t=2$ or $\lambda_t=10$). 
\begin{enumerate}
\item Solve equation \eqref{eq:schur2_3d} and set $\delta X_{n}^{m+1}$ to the maximum of $|\delta \vec X_{i,j}^{m+1} \,.\, \vec\omega_{i,j}^m|$ for $i=1,\ldots,N_S$ and $j=1, \ldots,$ $N_{i,V}$. 
\item If $\delta X_{n}^{m+1} > \delta X_n^\mathrm{max}$, set $\Delta t$ to $\frac1{\lambda_t} \Delta t$ and repeat step~(i). 
\item Otherwise, if $\delta X_{n}^{m+1} < \delta X_n^\mathrm{min}$, set $\Delta t$ to ${\lambda_t} \Delta t$ and repeat step~(i).
\item Otherwise, proceed by checking for topology changes (see above) and go to the next time step, i.e. set $m$ to $m+1$. 
\end{enumerate}
The effect of this time step size control is simple: If there are too high changes in the position of the nodes in normal direction (i.e. if the normal velocity is too high), the time step size will be decreased. This occurs if the sum of weighted curvature and external term is high. If the change in the position in normal direction is too small, the time step size will be increased to speed up the image segmentation process.

\subsubsection{Mesh Quality Aspects}
\label{subsubsec:mesh_quality}
During the evolution of surface, it may be necessary to control the mesh quality. For example, if a surface continuously grows, the simplices become larger and should be refined if their area exceeds a certain threshold. Similarly, too small simplices should be deleted. 

For computing the matrix entries, cf. \eqref{eq:def_matrices_MNA_3d}, we already need to compute the area of each simplex of the triangulation of the surface $\Gamma_i^m$, $i \in \{1, \ldots, N_S\}$. Let $A_\mathrm{desired}>0$ be a predefined desired area for one simplex.  Let $a > 0$ be a given factor (e.g. $a=2$ or $a=10$).  If the area of a simplex exceeds $a A_\mathrm{desired}$, it will be refined by bisection of its largest edge. Its neighbor simplex across the refinement edge will be also refined such that no hanging nodes remain. 

Local refinement is necessary to avoid too large simplices. Further, a mesh can also be continuously refined, for example when one starts with a small surface which globally grows. The triangles of the growing surface are refined one by one. 
Furthermore, the triangles which have one very large angle, i.e. an angle larger than a given threshold (e.g. $\geq 160^\circ$), are also refined. 

If a simplex area is smaller than a certain percentage of the desired area $A_\mathrm{desired}$, for example smaller than $1\%$,  the simplex is deleted. Further, it is also deleted if one of its three inner angles is smaller than a given threshold, for example smaller than $2^\circ$. When a simplex is marked for deletion, one or more neighbor simplices are also deleted. 

Mesh operations like deletion of triangles are rarely necessary. These operations are usually performed only a few times, for example close before or after topology changes. 

Figure \ref{fig:refinement_deletion_simplices} illustrates examples how the triangulation is adapted close to simplices which are marked for refinement or deletion. 

\begin{figure}
\begin{center}
\begin{tikzpicture}[scale=0.3,transform shape]
\draw[white] (-2,2) rectangle (15,5);

\filldraw[fill = black!20] (-1,0) -- (2, -2) -- (1.5,2.3) -- (-1,0); 
\draw (5,1) -- (2, -2);
\draw (1.5,2.3) -- (5,1); 

\draw (5,0) -- (6.5, 0);
\draw (6.35,0.15) -- (6.5, 0);
\draw (6.35,-0.15) -- (6.5, 0);

\draw (7,0) -- (10, -2) -- (9.5,2.3) -- (7,0); 
\draw (13,1) -- (10, -2); 
\draw (9.5,2.3) -- (13,1); 
\draw (7,0) -- (9.75, 0.15) -- (13,1); 

\end{tikzpicture}

\begin{tikzpicture}[scale=0.3,transform shape]
\draw[white] (-2,2) rectangle (15,5);

\draw (0,3) -- (3,3) -- (4,0) -- (2,-2) -- (-1,0) -- (0,3); 
\filldraw[fill = black!20]  (1.3,0.5) -- (1.5,0.8) -- (1.7,0.5) -- (1.3,0.5); 
\draw (0,3) -- (1.5,0.8) -- (3,3); 
\draw (0,3) -- (1.3,0.5) -- (-1,0); 
\draw (1.3,0.5) -- (2,-2) -- (1.7,0.5); 
\draw (4,0) -- (1.7, 0.5) -- (3,3);  

\draw (5,0) -- (6.5, 0);
\draw (6.35,0.15) -- (6.5, 0);
\draw (6.35,-0.15) -- (6.5, 0);

\draw (8,3) -- (11,3) -- (12,0) -- (10,-2) -- (7,0) -- (8,3); 
(9.5,0.6)
\draw (8,3) -- (9.5,0.6) -- (11,3); 
\draw (9.5,0.6) -- (7,0); 
\draw (9.5,0.6) -- (10,-2); 
\draw (12,0) -- (9.5,0.6);  

\end{tikzpicture}

\begin{tikzpicture}[scale=0.3,transform shape]
\draw[white] (-2,2) rectangle (15,5);

\filldraw[fill = black!20] (-1,1) -- (1.5,0.8) -- (1.5,1.2) -- (-1,1); 
\draw (4,1) -- (1.5,0.8) -- (1.5,1.2) -- (4,1);
\draw (-1,1) -- (0,-1) -- (3,-1) -- (4,1) -- (3,3) -- (0,3) -- (-1,1);
\draw (0,3) -- (1.5,1.2) -- (3,3); 
\draw (0,-1) -- (1.5,0.8) -- (3,-1); 

\draw (5,1) -- (6.5, 1);
\draw (6.35,1.15) -- (6.5, 1);
\draw (6.35,0.85) -- (6.5, 1);

\draw (7,1) -- (8,-1) -- (11,-1) -- (12,1) -- (11,3) -- (8,3) -- (7,1);
\draw (8,3) -- (9.5,1) -- (11,3); 
\draw (8,-1) -- (9.5,1) -- (11,-1); 
\draw (7,1) --(9.5,1) -- (12,1);

\end{tikzpicture}
\end{center}
\caption{Top: Refinement of a simplex (marked in gray). The neighbor simplex is also refined to avoid hanging nodes. Center: Deletion of a simplex with a too small area (marked in gray). Three neighbor simplices are also deleted. Bottom: Deletion of a simplex with a too small angle (marked in gray). One neighbor simplex is also deleted.}
\label{fig:refinement_deletion_simplices}
\end{figure}
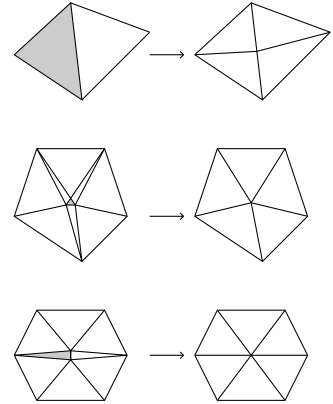

Our numerical method for surface evolution for image segmentation tasks is based on a numerical method developed in \cite{BGN08c}. This method provides a good mesh quality in many cases. However, if, for example, a pinch-off occurs, the mesh can get distorted and some routine for keeping a good mesh quality is needed. 

The idea of a mesh regularization method proposed in \cite{BGN08d} is to induce or reduce the tangential motion of nodes along a surface. We use this method to control the tangential motion of nodes of surfaces during 3D image segmentation. For details, we refer to \cite{BGN08d}. The system \eqref{eq:fem_scheme_3d} is replaced by a scheme which controls also the tangential motion of the nodes. 

\subsection{Summary of the Image Segmentation Algorithm}
In summing up, we propose the following algorithm for segmentation of 3D images. Given a set of triangulated surfaces $\Gamma^0 = (\Gamma_1^0, \ldots, \Gamma_{N_S}^0)$ and nodes $\vec X_{i,j}^0$, $i=1,\ldots,N_S$, $j=1, \ldots, N_{i,V}$, perform the following steps for $m=0, 1, \ldots, M-1$: 
\begin{enumerate}
\item \label{step1_3d} Compute the regions $\Omega_k^m$ and the coefficients $c_k^m$, $k=1,\ldots,N_R$, as described in Section \ref{subsubsec:comp_regions_coeffs}. 
\item \label{step2_3d} Compute $b^m$ as defined in \eqref{eq:def_b_m_3d} by using the coefficients $c_k^m$ of step~\ref{step1_3d}. Compute $\vec X^{m+1} = \vec X^m + \delta \vec X^{m+1}$ by solving the linear equation \eqref{eq:schur2_3d}.
\item Check whether the time step size needs to be increased or decreased, see Section~\ref{subsubsec:time_step_control_3D}. If the time step size needs to be changed, repeat step~\ref{step2_3d} with the new time step size. 
\item Check whether topology changes occur and execute the topology change, see Section~\ref{subsec:top_changes_3d}. 
\item If necessary, refine too large simplices or delete too small simplices of the triangulation as described in Section~\ref{subsubsec:mesh_quality}. 
\end{enumerate}

\section{Results}
\label{sec:results}

\subsection{Artificial Test Images}
In this section we demonstrate the developed method for segmentation of 3D images with parametric active surfaces. We first study four examples of artificial test images to demonstrate the ability of the method to detect different topology changes (splitting, merging, increase and decrease of genus).

In the first experiment, we demonstrate how a surface is split in two surfaces. We consider an artificial image defined on an image domain given given by the cuboid $\Omega = [-2.5, 2.5] \times [-1.5, 1.5] \times [-1.5, 1.5]$. The image intensity function is defined by 
\begin{equation*}
u_0: \Omega \rightarrow \mathbb{R}, \quad u_0(\vec x) = \left\{
\begin{array}{ll}
0 & \text{if } \,\, \| \vec x - (-1.2,0,0)^T \| \leq 0.8  \\
  &  \,\vee\, \| \vec x - (1.2,0,0)^T \| \leq 0.8, \\[2ex]
1 & \text{else}.
\end{array}\right.
\end{equation*}
The three-dimensional image contains two balls centered at $(\pm 1.2, 0, 0)^T \in \mathbb{R}^3$ with radius $0.8$. The segmentation process is started using a cylinder-like surface as initial surface placed in the center of the cuboid. Figure~\ref{fig:splitting_im_seg} shows the surface at different time steps. 

For weighting the curvature term and the forcing term for the image segmentation, the parameters $\sigma = 1$ and $\lambda = 100$ are used. At time step $m=214$ a splitting of the evolving surface occurs. In the subsequent iterations steps, the two new surfaces each evolve to a ball. To detect the topology change, we use an auxiliary background grid with grid size $a=0.025$ as described in Section \ref{subsubsec:detection_topology_change_3D}. A cube of the grid is considered for possible topology changes if more than $N_\mathrm{detect}=10$ nodes are located inside the cube. We further use the parameters $thr1=30^\circ$, $thr2=150^\circ$ and  $thr3=40^\circ$.

\begin{figure}
	\centering
		\includegraphics[trim = 10mm 20mm 10mm 20mm,clip, width = 0.2\textwidth]{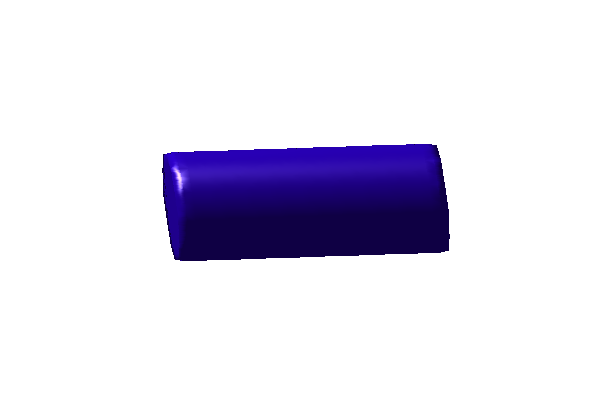}
		\includegraphics[trim = 10mm 20mm 10mm 20mm,clip, width = 0.2\textwidth]{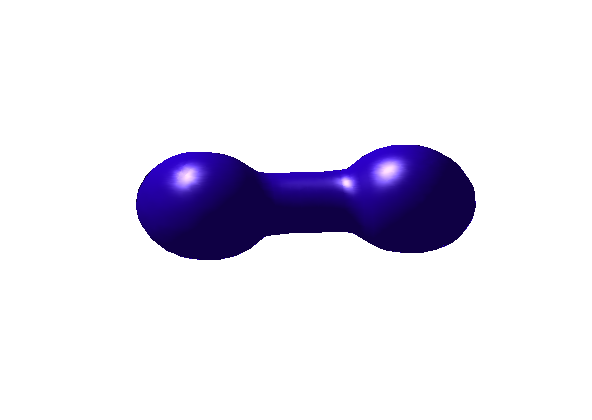}\\
		\includegraphics[trim = 10mm 20mm 10mm 20mm,clip, width = 0.2\textwidth]{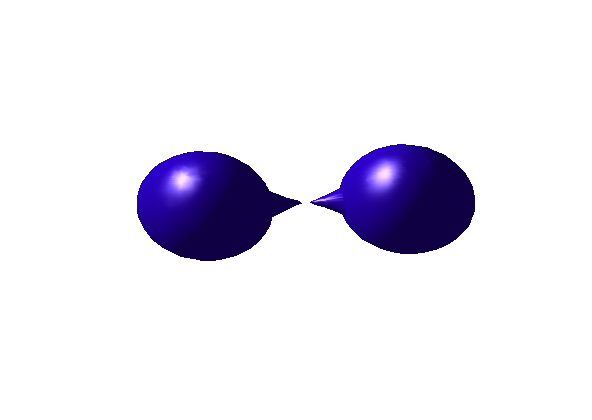}
		\includegraphics[trim = 10mm 20mm 10mm 20mm,clip, width = 0.2\textwidth]{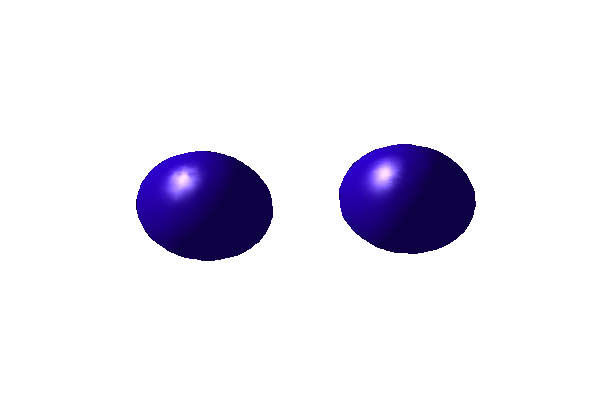}
	\caption{Splitting of a surface during 3D image segmentation. Surface(s) at step $m=0, 100, 215, 300$  at time $t_m = 0,  0.01,  0.0215, 0.0298$.}
	\label{fig:splitting_im_seg}
\end{figure}

\begin{figure}
	\centering
		\includegraphics[trim = 10mm 10mm 10mm 10mm,clip, width = 0.18\textwidth]{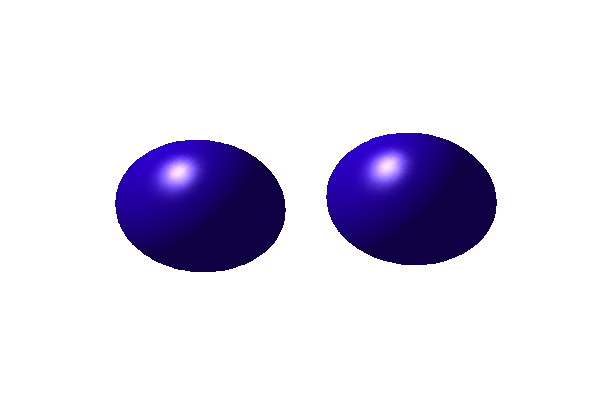}
		\includegraphics[trim = 10mm 10mm 10mm 10mm,clip, width = 0.18\textwidth]{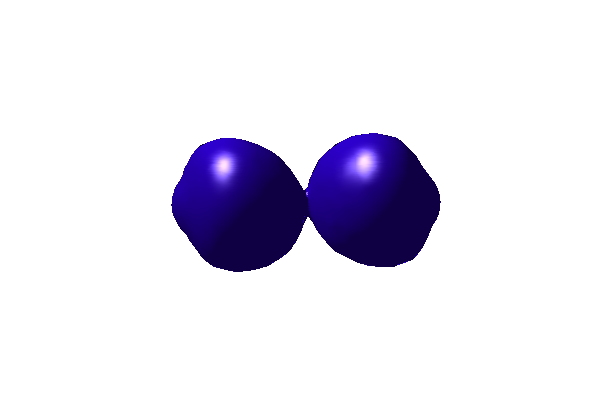}\\
		\includegraphics[trim = 10mm 10mm 10mm 10mm,clip, width = 0.18\textwidth]{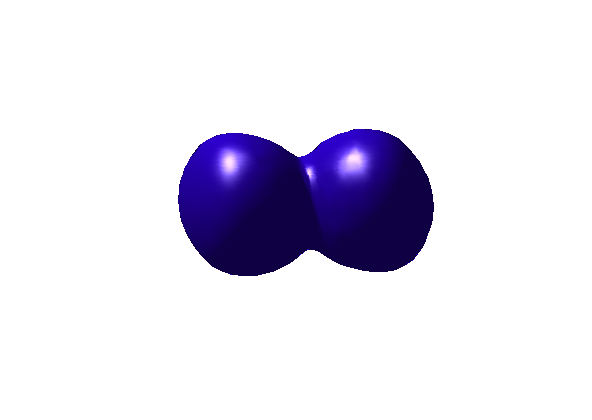}
		\includegraphics[trim = 10mm 10mm 10mm 10mm,clip, width = 0.18\textwidth]{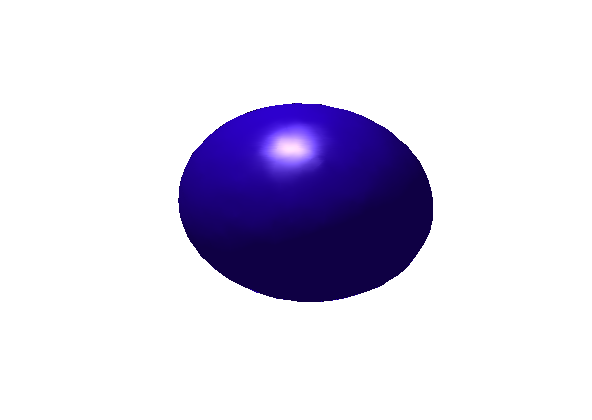}
	\caption{Merging demonstration. Surface(s) at step $m=0, 50, 80, 200$  at time $t_m = 0, 0.005, 0.008, 0.02$.}
	\label{fig:merging_im_seg}
\end{figure}

Furthermore, we also perform a time step control (cf. Section \ref{subsubsec:time_step_control_3D}) using the thresholds $\delta X_n^\mathrm{min} = 0.003$ and $\delta X_n^\mathrm{max} = 0.05$. For almost each time step, a time step size of $\Delta t = 10^{-4}$ was used. Only immediately after the splitting, for two time steps, $\Delta t$ was reduced to $10^{-5}$ to avoid a too fast retraction of the newly generated surfaces close to the former splitting point. 

The reversed topology change of splitting is a merging of two surfaces to one single surface. The initial surfaces in the next example are two balls. The image domain is given by $\Omega = [-1.2, 1.2] \times [-0.8, 0.8] \times [-0.8, 0.8]$ and the image intensity function is defined by 
\begin{equation*}
u_0: \Omega \rightarrow \mathbb{R}, \quad u_0(\vec x) = \left\{
\begin{array}{ll}
0 & \text{if } \,\, \| \vec x \| \leq 0.6, \\[2ex]
1 & \text{else}.
\end{array}\right.
\end{equation*}
As weighting parameters $\sigma = 2$ and $\lambda = 60$ are used. Time step control is performed applying the thresholds $\delta X_n^\mathrm{min} = 0.001$ and $\delta X_n^\mathrm{max} = 0.02$. No change of the time step size is necessary in this example; the time step size $\Delta t = 10^{-4}$ need not be changed throughout the evolution. To detect the merging, $a=0.03$, $N_\mathrm{detect}=10$ and $thr1=20^\circ$, $thr2=150^\circ$ and  $thr3=40^\circ$ are used. The resulting surfaces of this experiment at different time steps are shown in Figure \ref{fig:merging_im_seg}. 

Since the surface grows continuously, some simplices have to be refined as described in Section \ref{subsubsec:mesh_quality}. The desired area for one simplex is $A_\mathrm{desired}=0.001$; a simplex is refined by bisection of its largest angle if its area is larger than a certain factor of $A_\mathrm{desired}$. A simplex is also bisected if one angle is larger than $170^\circ$. A simplex is deleted if one angle is smaller than $2^\circ$ or if its area is smaller than $1\%$ of the desired area of $A_\mathrm{desired}$.

\begin{figure}
	\centering
		\includegraphics[width = 0.18\textwidth]{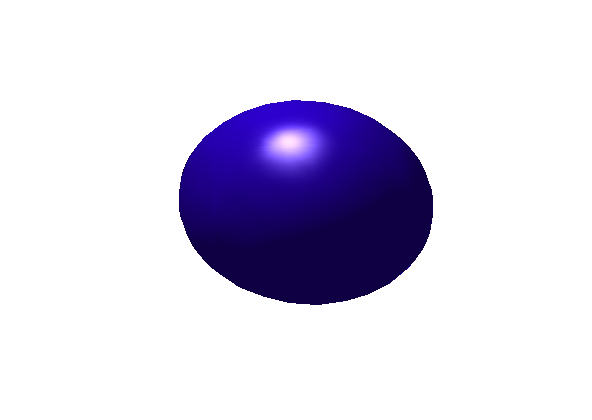}
		\includegraphics[width = 0.18\textwidth]{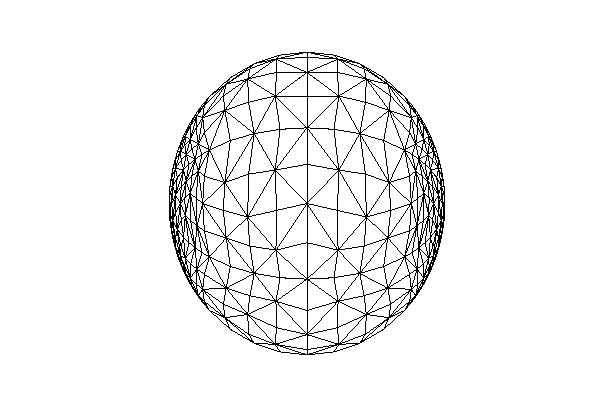}\\
		\includegraphics[width = 0.18\textwidth]{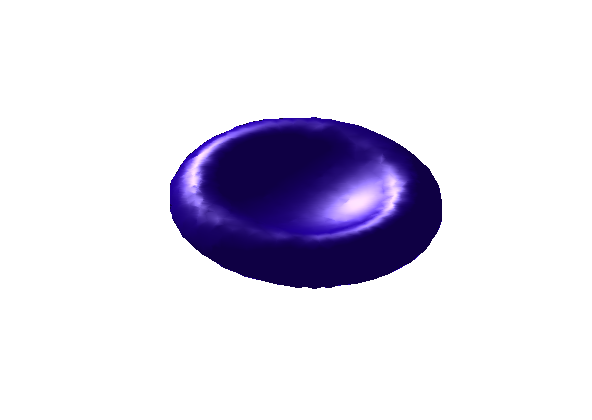}
		\includegraphics[width = 0.18\textwidth]{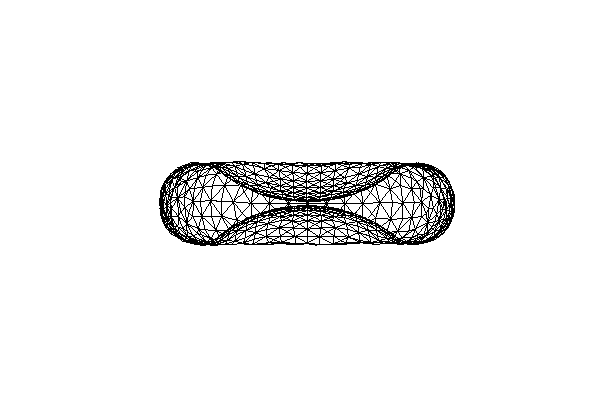}\\
		\includegraphics[width = 0.18\textwidth]{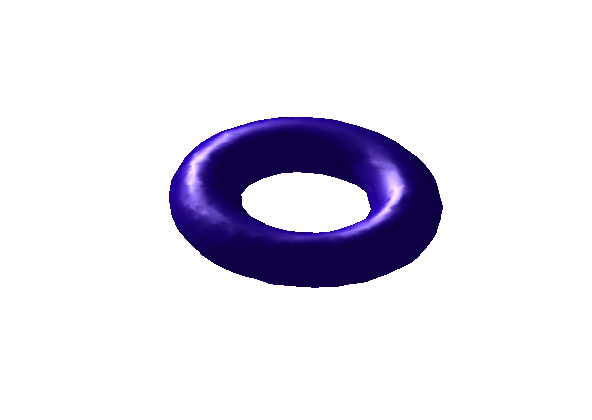}
		\includegraphics[width = 0.18\textwidth]{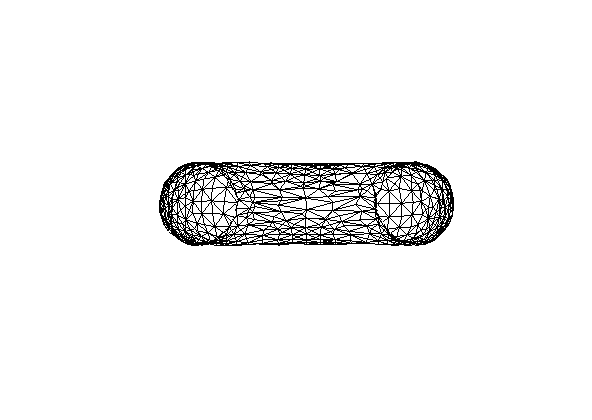}
	\caption{Demonstration of an increase of the genus of a surface. Surface, mesh and cross-section at step $m=0, 325, 500$ (row-wise) at time $t_m = 0, 0.325, 0.5$. Column 1-2: surface and mesh (cross-section).}
	\label{fig:increase_genus_im_seg}
\end{figure}

\begin{figure}
	\centering
		\includegraphics[trim = 30mm 10mm 30mm 10mm,clip, width = 0.12\textwidth]{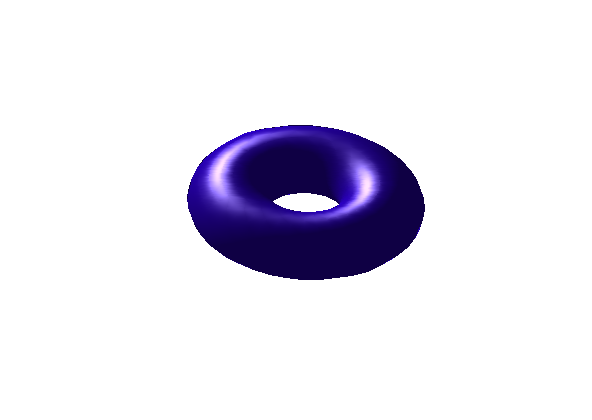}
		\includegraphics[trim = 30mm 10mm 30mm 10mm,clip, width = 0.12\textwidth]{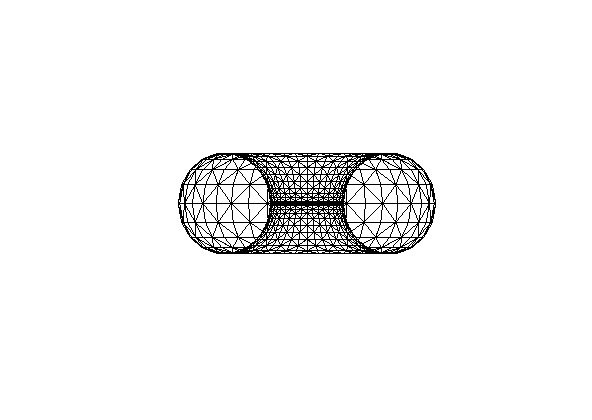}\\
		\includegraphics[trim = 30mm 10mm 30mm 10mm,clip, width = 0.12\textwidth]{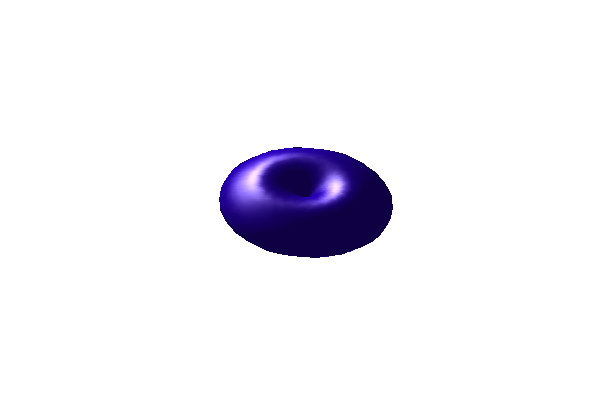}
		\includegraphics[trim = 30mm 10mm 30mm 10mm,clip, width = 0.12\textwidth]{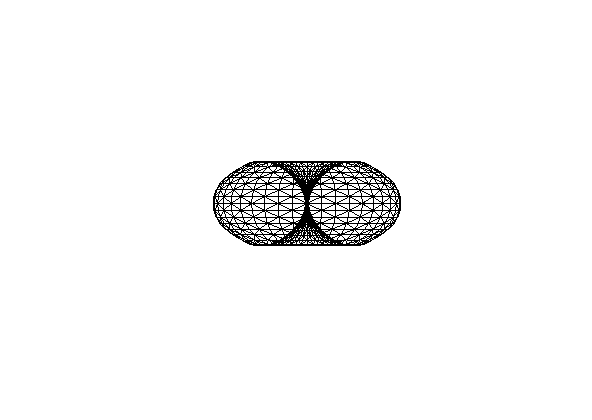}\\
		\includegraphics[trim = 30mm 10mm 30mm 10mm,clip, width = 0.12\textwidth]{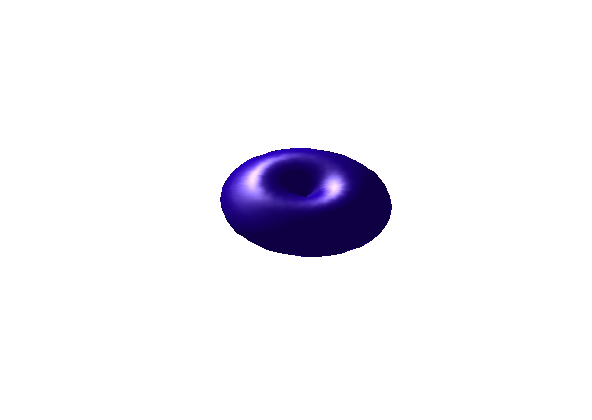}
		\includegraphics[trim = 30mm 10mm 30mm 10mm,clip, width = 0.12\textwidth]{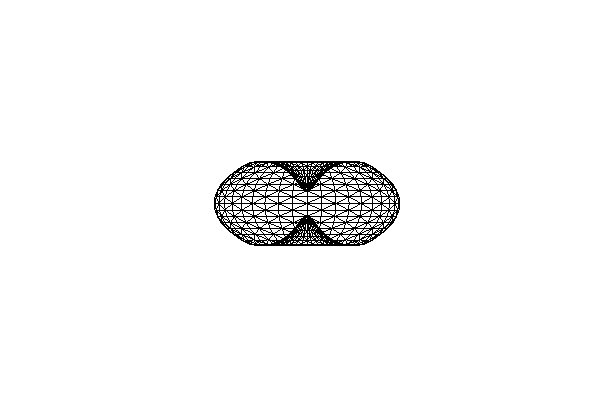}\\
		\includegraphics[trim = 30mm 10mm 30mm 10mm,clip, width = 0.12\textwidth]{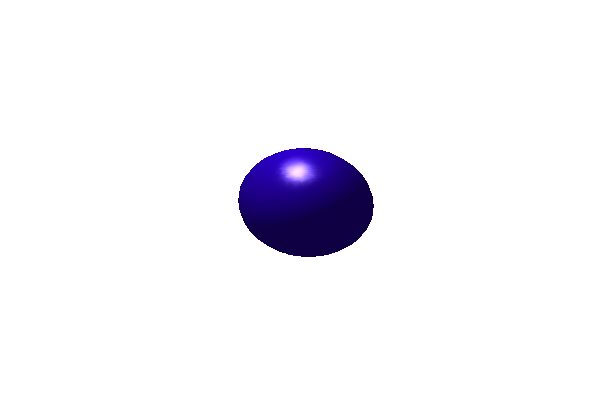}
		\includegraphics[trim = 30mm 10mm 30mm 10mm,clip, width = 0.12\textwidth]{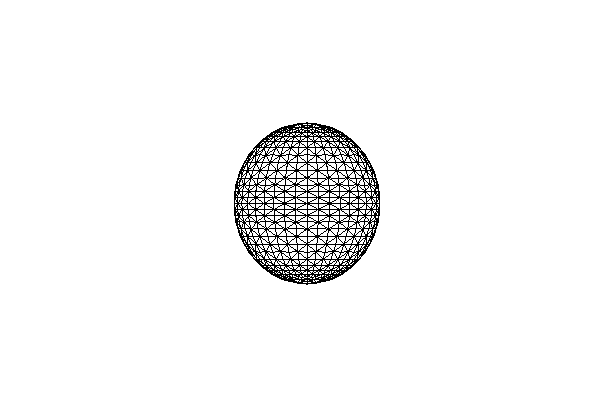}
	\caption{3D image segmentation example where a torus evolves to a ball. Surface at step $m=0, 425, 500, 1000$ (row-wise) at time $t_m = 0, 0.0425, 0.04406, 0.27316$. Column 1-2: surface and mesh (cross-section).}
	\label{fig:decrease_genus_im_seg_1}
\end{figure}

In the next examples, we demonstrate another kind of topology changes: increase and decrease of the genus of a surface. Therefore, we consider an image segmentation example where a sphere should evolve to a torus. The image intensity function is given by 
\begin{equation}
u_0(\vec x) = \left\{
\begin{array}{ll}
0 & \text{if } \,\, (\sqrt{x_1^2 + x_2^2}-R)^2 + x_3^2 \leq r^2,\\
1 & \text{else},
\end{array}
\right.
\end{equation}
where $R=1.2$ and $r=0.4$ are used here. 

Figure \ref{fig:increase_genus_im_seg} shows the surface, its mesh (cross-section of the mesh) at different time steps. For this example we apply $\sigma = 1$, $\lambda = 60$ (weighting parameters). The topology change is detected using $a=0.0565$, $N_\mathrm{detect}=8$ and $thr1=20^\circ$, $thr2=150^\circ$ and  $thr3=40^\circ$. As parameters to control the refinement, the desired triangle area is set to $A_\mathrm{desired}=0.005$, and the angles $170^\circ$ and $2^\circ$ are used for bisection or deletion of a triangle, respectively. 

%In this example, the initial mesh is generated by using a simple triangulated cube, followed by a projection of each vertex to the sphere. This results in an initial mesh with a relatively poor mesh quality, i.e. the vertices are not distributed equally over the surface. This example shows that our method can also cope with difficult meshes: Large triangles are bisected and our method  distributes the nodes over the surface during the evolution of the surface.

\begin{figure}
	\centering
		\includegraphics[viewport = 100 270 480 560, width = 0.35\textwidth]{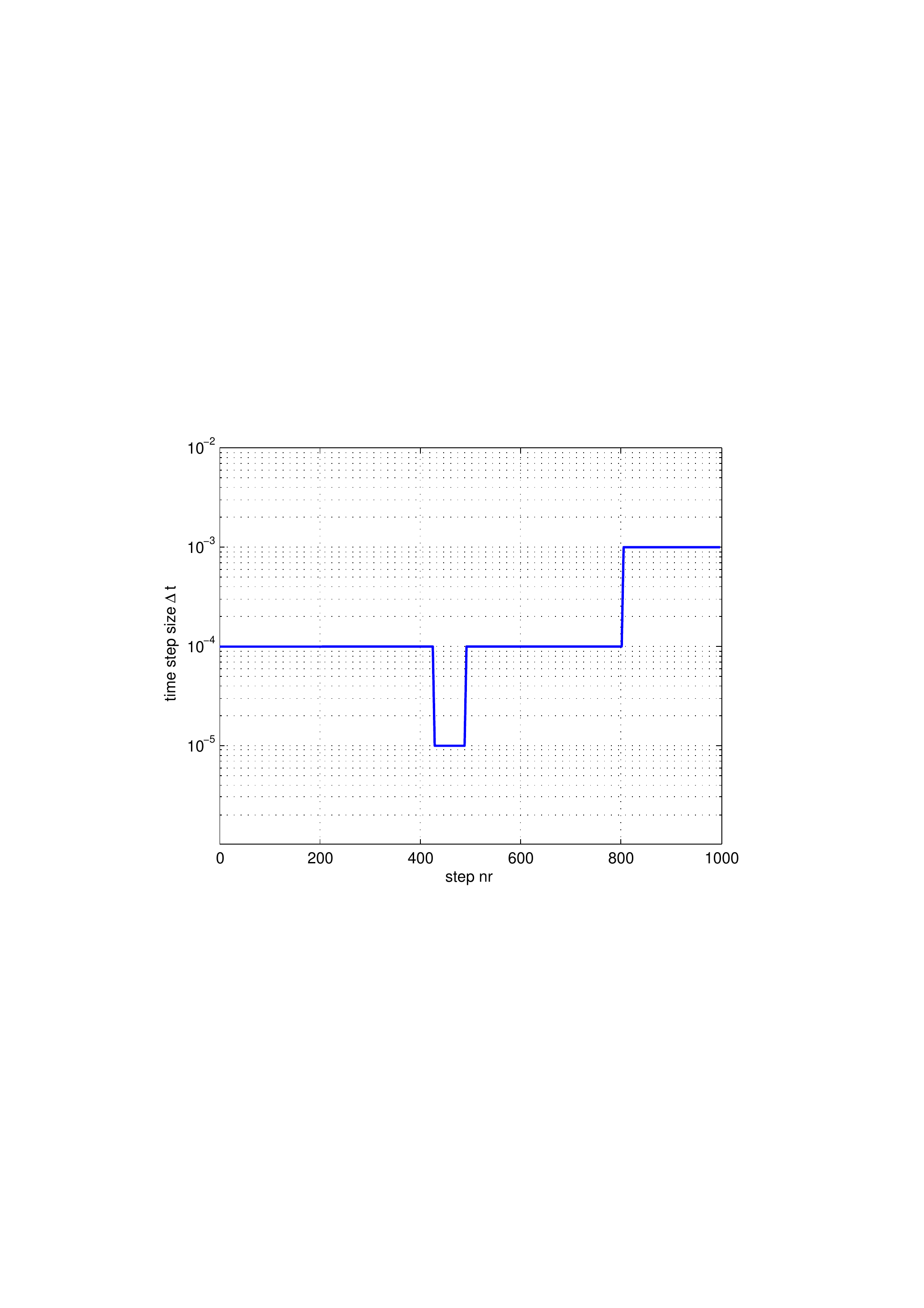}
	\caption{Time step sizes during the evolution of the torus to a ball.}
	\label{fig:decrease_genus_im_seg_3}
\end{figure}

Further, $\delta X_n^\mathrm{min} = 0.01$ and $\delta X_n^\mathrm{max} = 0.1$ are applied as thresholds for the time step size control. Throughout the evolution, there was no need to change the initial time-step size of $\Delta t=10^{-3}$. 

At time step $m=325$, two different parts (top and bottom) of the surface with nearly opposite normal vector nearly touch. A topology change is detected and a small hole occurs. The genus of the surface is increased from $g=0$ to $g=1$. At time step $m=500$, the 3D object, a torus, is detected; its boundary is represented by the surface.  

Finally, we present an example where a torus is used as initial surface and a sphere should be detected. The image intensity function is given by 
\begin{equation*}
u_0: \Omega \rightarrow \mathbb{R}, \quad u_0(\vec x) = \left\{
\begin{array}{ll}
0 & \text{if } \,\, \| \vec x \| \leq 0.8, \\
1 & \text{else}.
\end{array}\right.
\end{equation*}

Figure \ref{fig:decrease_genus_im_seg_1}  shows the surface at several time steps. As weighting parameters $\sigma=1$ and $\lambda=20$ are applied. 
For the detection of the decrease of genus, the parameters $a=0.025$, $N_\mathrm{detect}=20$ and $thr1=20^\circ$, $thr2=150^\circ$ and  $thr3=40^\circ$ are used. 

The time step size is controlled using the thresholds $\delta X_n^\mathrm{min} = 0.0005$ and $\delta X_n^\mathrm{max} = 0.01$. 
Figure \ref{fig:decrease_genus_im_seg_3} shows the time step sizes during the image segmentation process. After the topology change the time step size is decreased from $10^{-4}$ to $10^{-5}$. Later it is increased to speed up the segmentation.

\subsection{Segmentation of Medical 3D Images}

%------------------------------------ m = 0 -----------------------------------------------------------------%
\begin{figure}
	\centering
		\includegraphics[width = 0.28\textwidth]{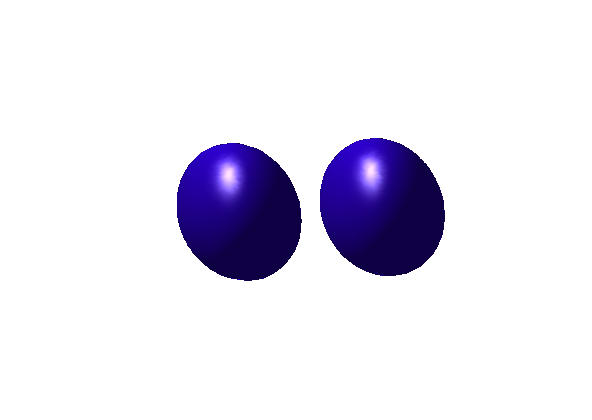}\\
		\includegraphics[viewport = 130 320 460 520, width = 0.15\textwidth]{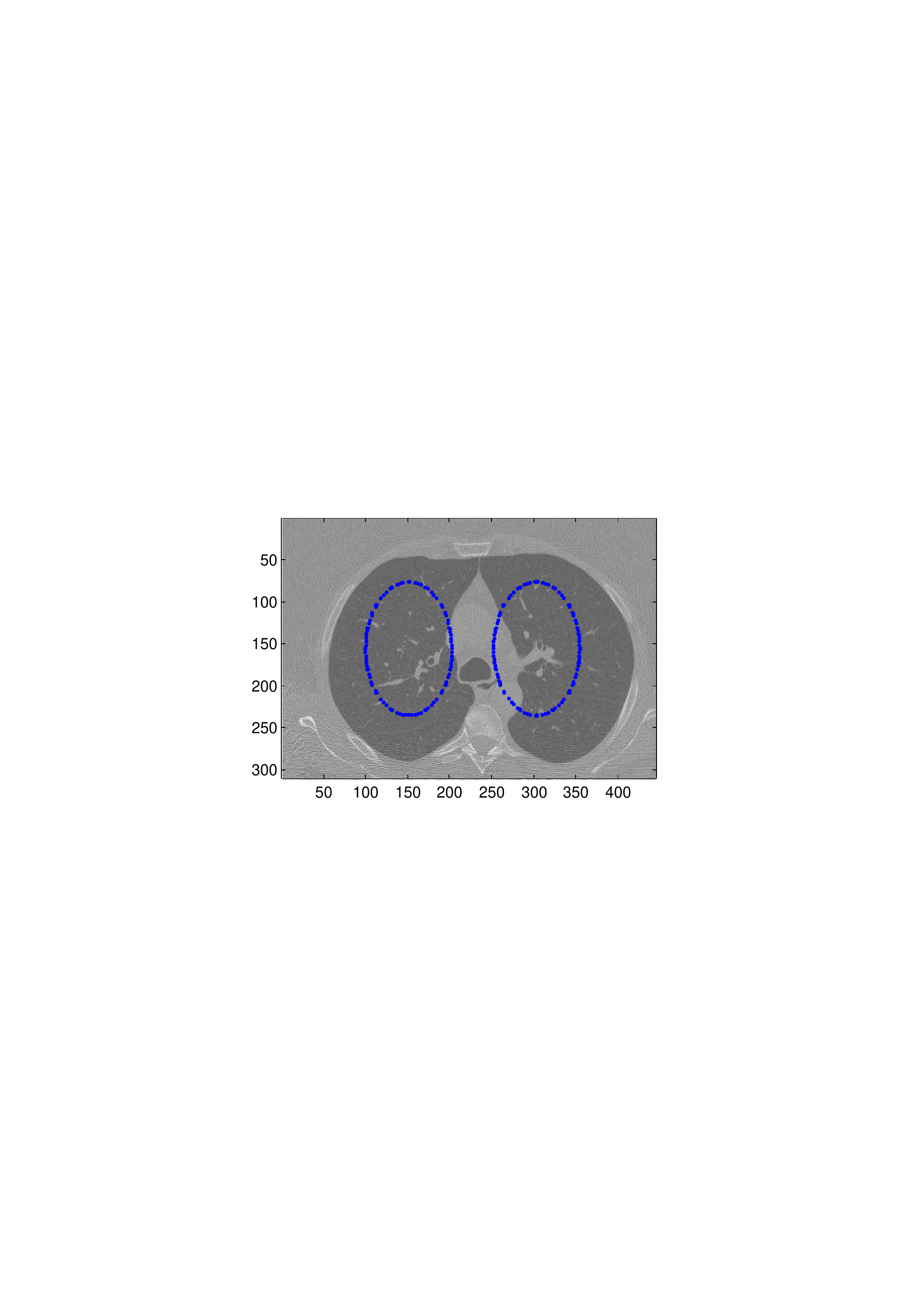}
		\includegraphics[viewport = 130 320 460 520, width = 0.15\textwidth]{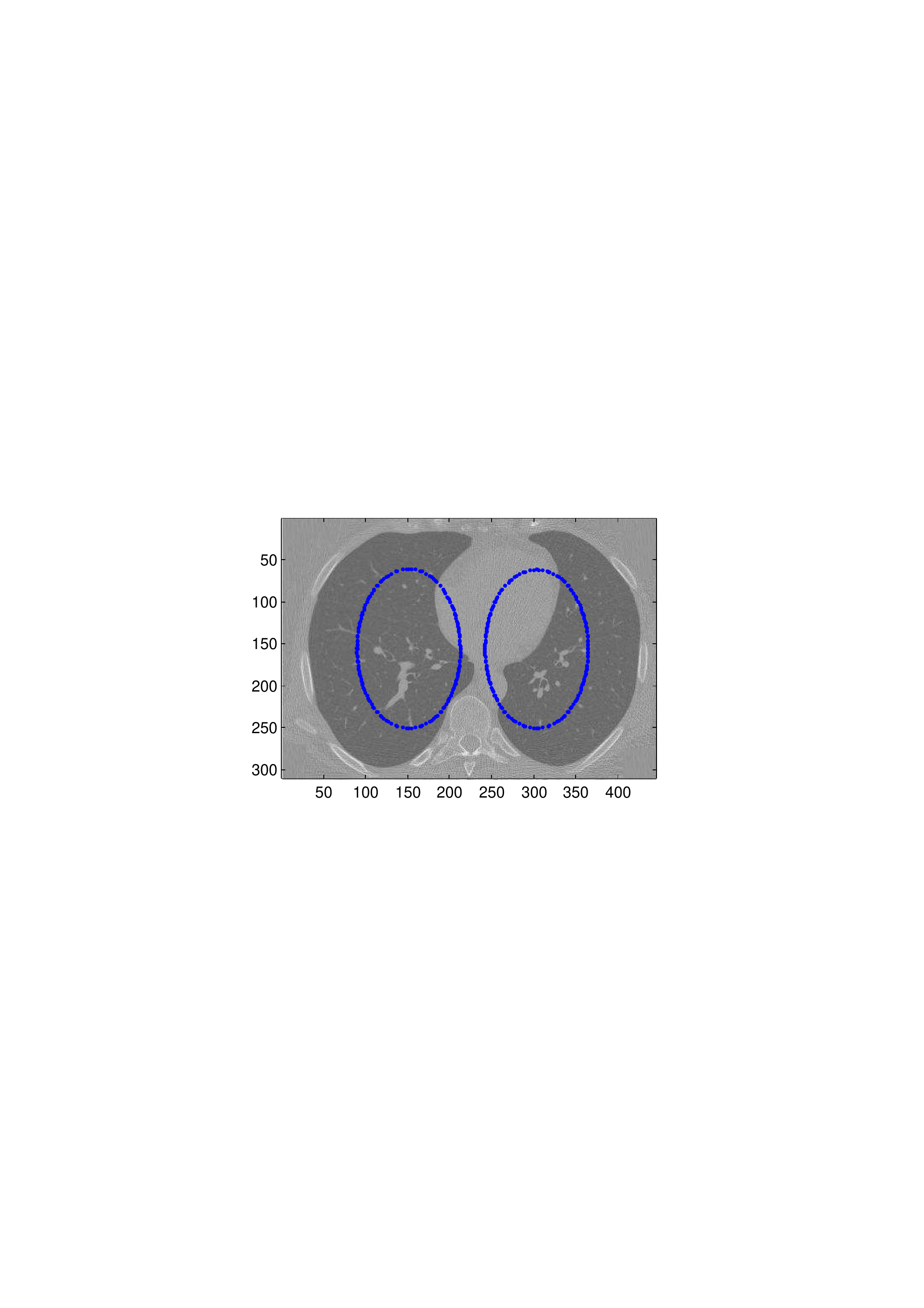}
		\includegraphics[viewport = 130 320 460 520, width = 0.15\textwidth]{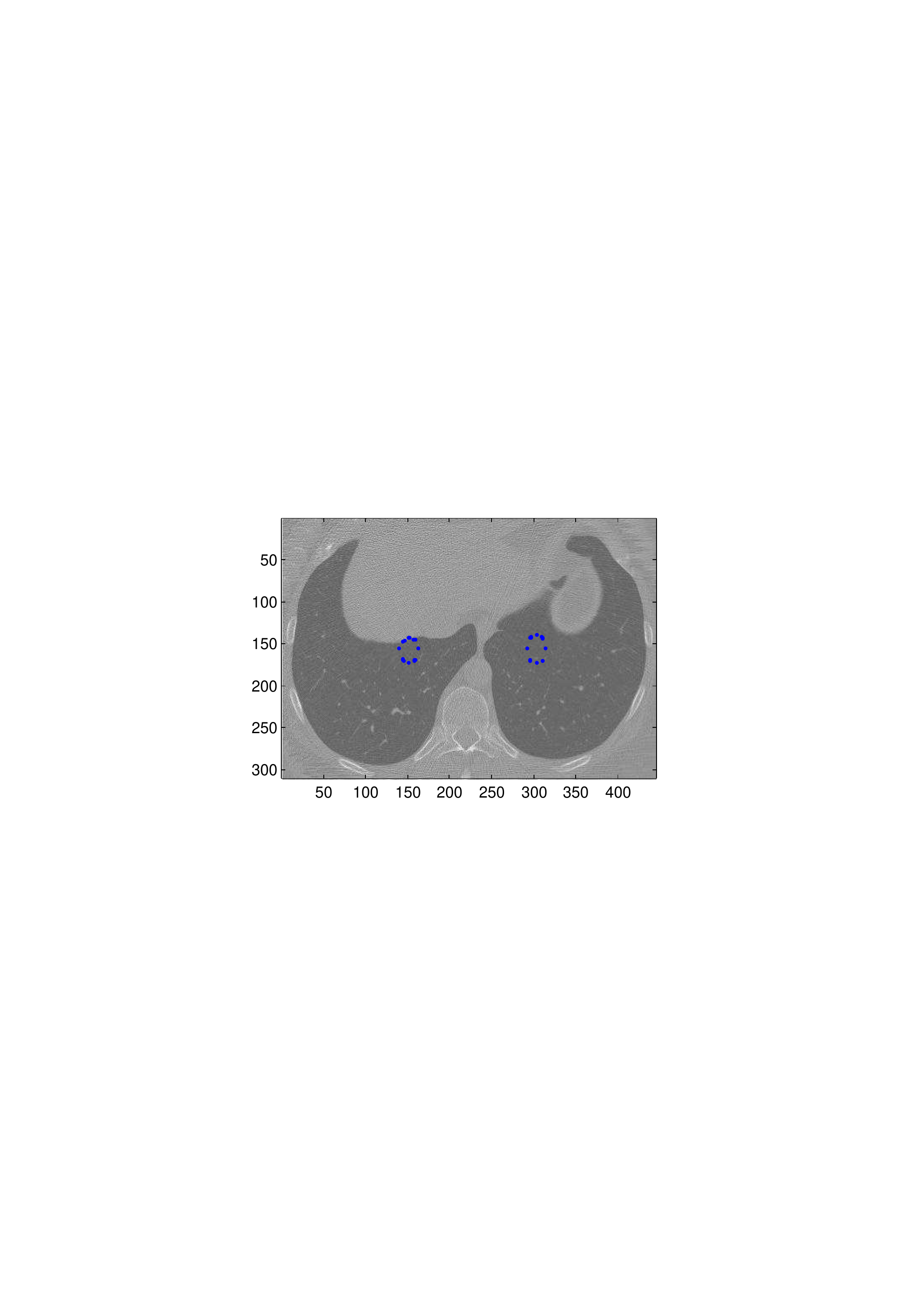}\\
		\includegraphics[viewport = 130 320 460 520, width = 0.15\textwidth]{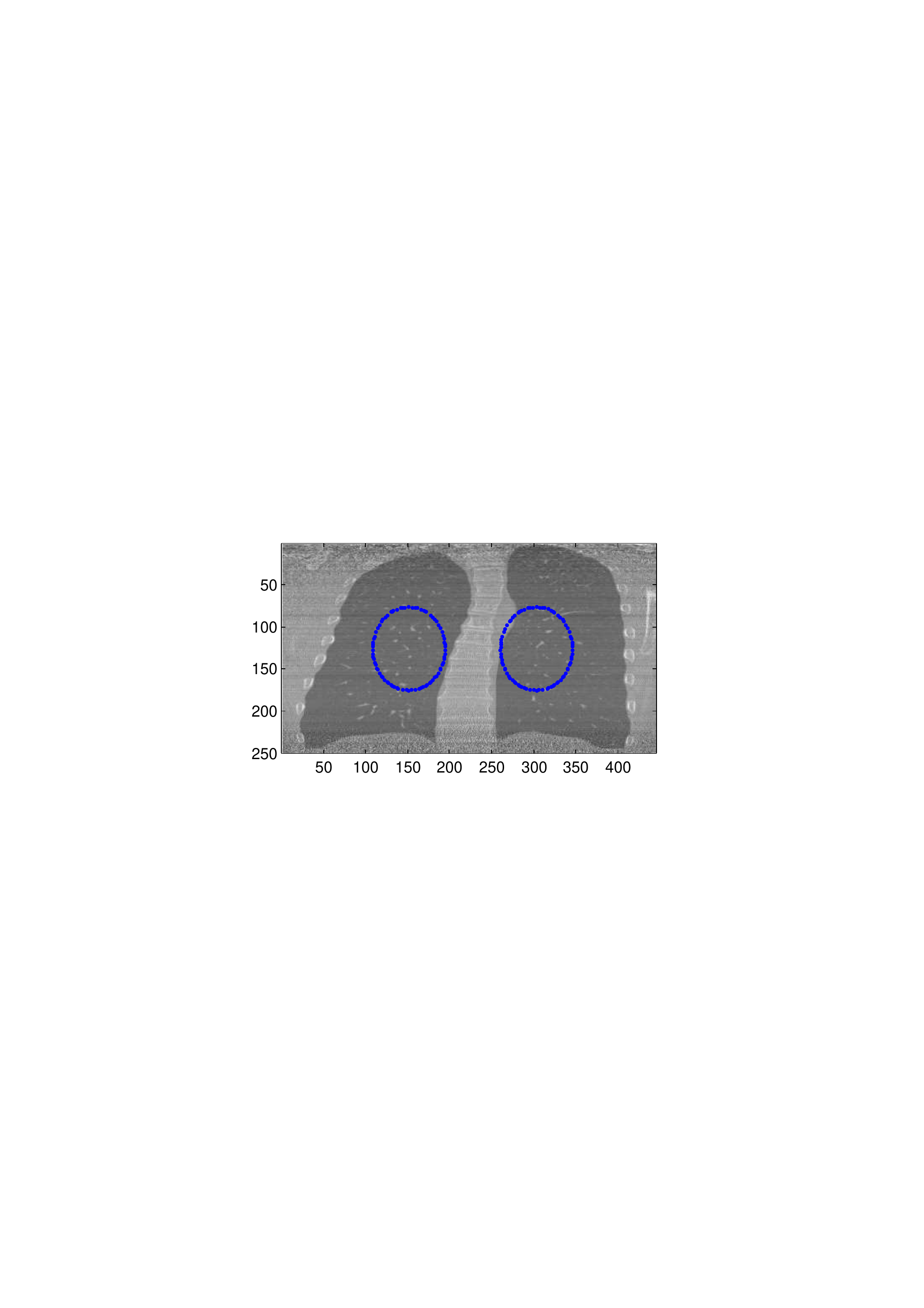}
		\includegraphics[viewport = 130 320 460 520, width = 0.15\textwidth]{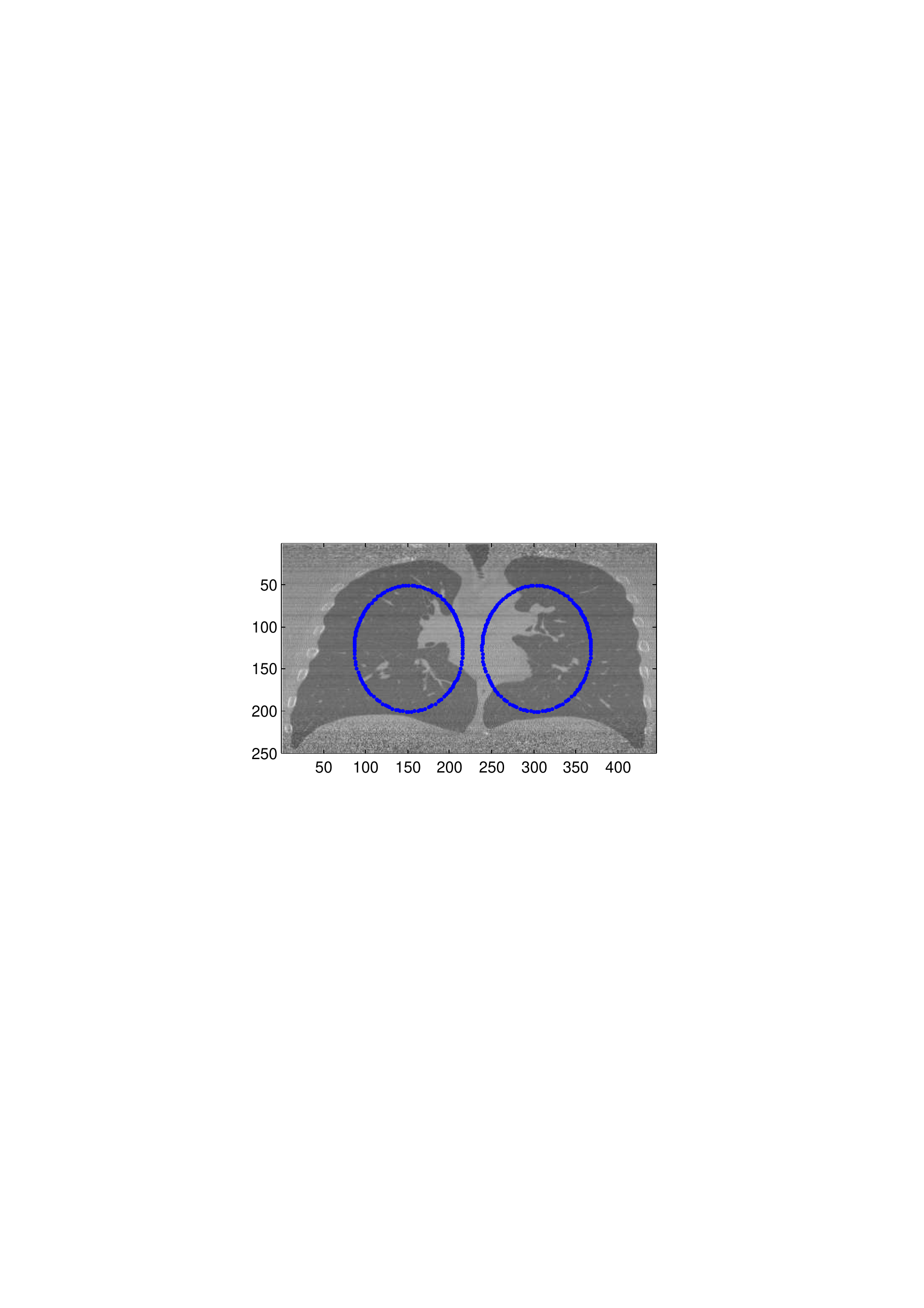} 
		\includegraphics[viewport = 130 320 460 520, width = 0.15\textwidth]{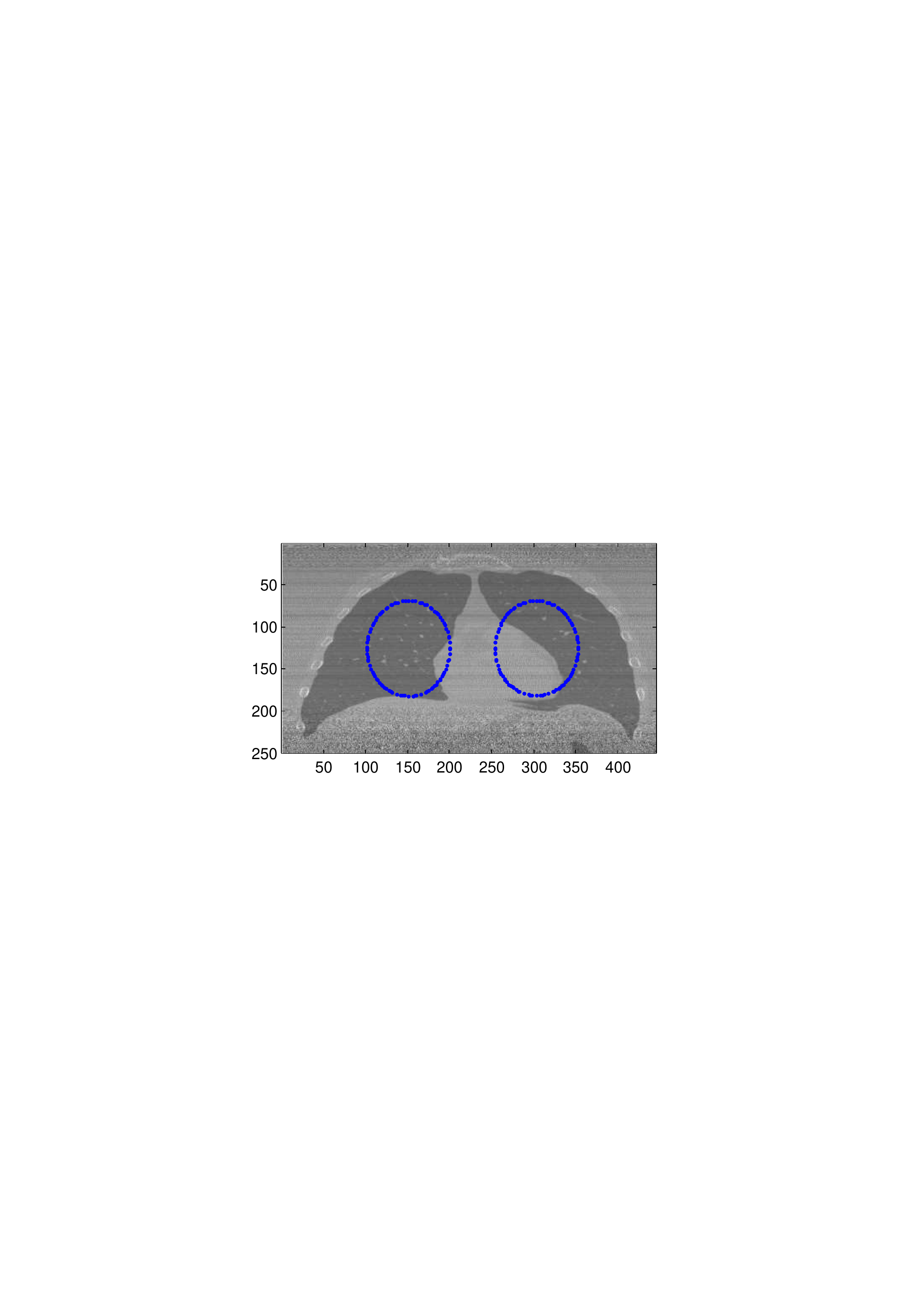} 
	\caption{Lung segmentation: Surfaces (row 1) and cross-sections (row 2: $z=80, 150, 200$, row 3: $y=80, 150, 200$) at $m=0$ at time $t=0$. The original images are from the Lung Image Database Consortium image collection (LIDC-IDRI) of The Cancer Imaging Archive (TCIA), see \cite{Reeves2007},\cite{Armato2011}, \cite{Reeves2011}..}
	\label{fig:lung_0001}
\end{figure}

%------------------------------------ m = 100 -----------------------------------------------------------------%
\begin{figure}
	\centering
		\includegraphics[width = 0.28\textwidth]{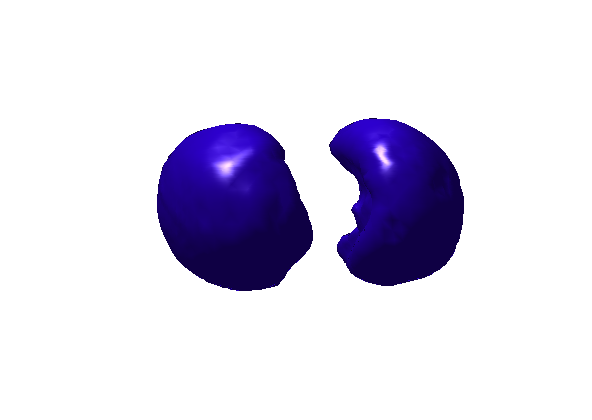}\\
		\includegraphics[viewport = 130 320 460 520, width = 0.15\textwidth]{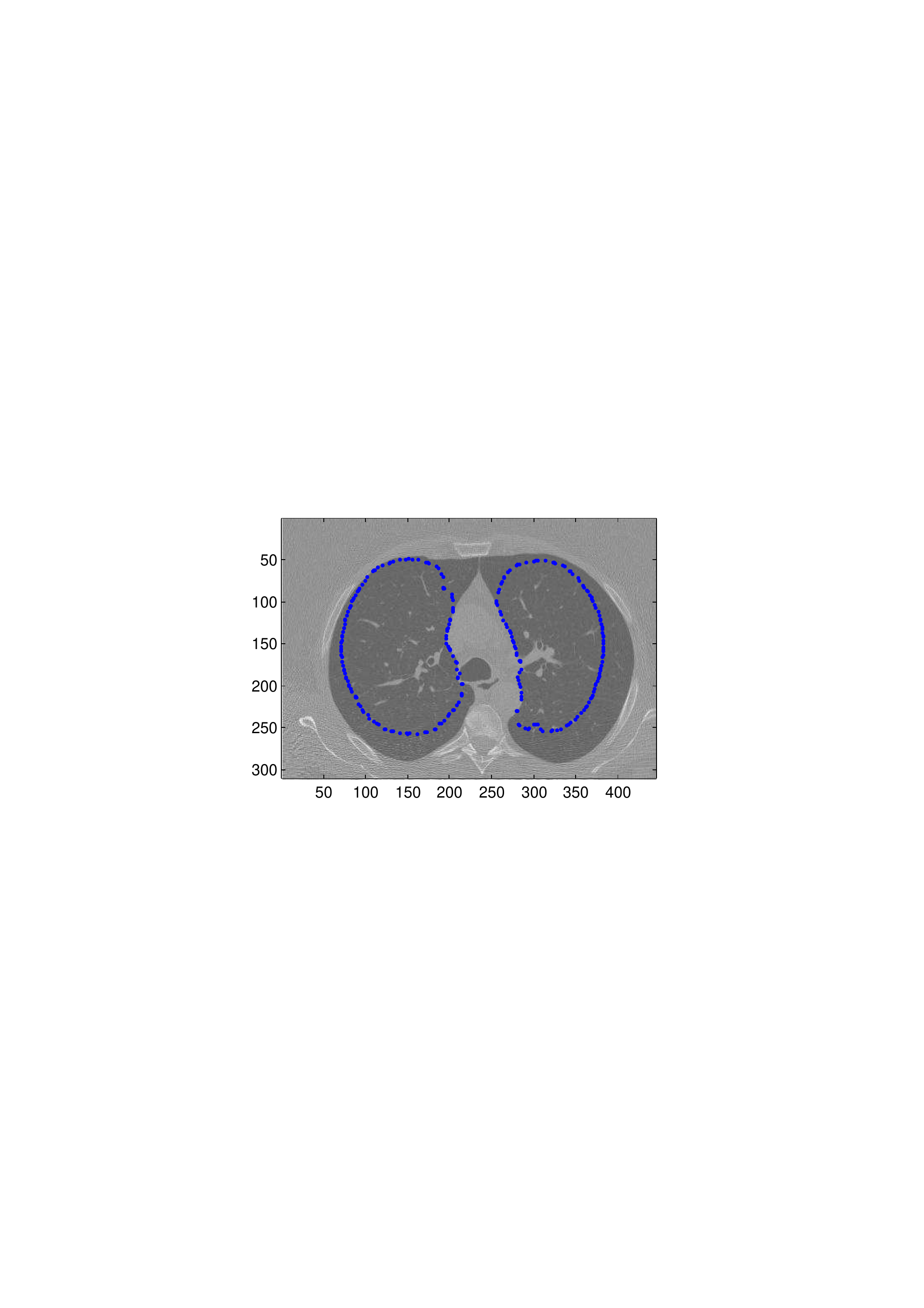}
		\includegraphics[viewport = 130 320 460 520, width = 0.15\textwidth]{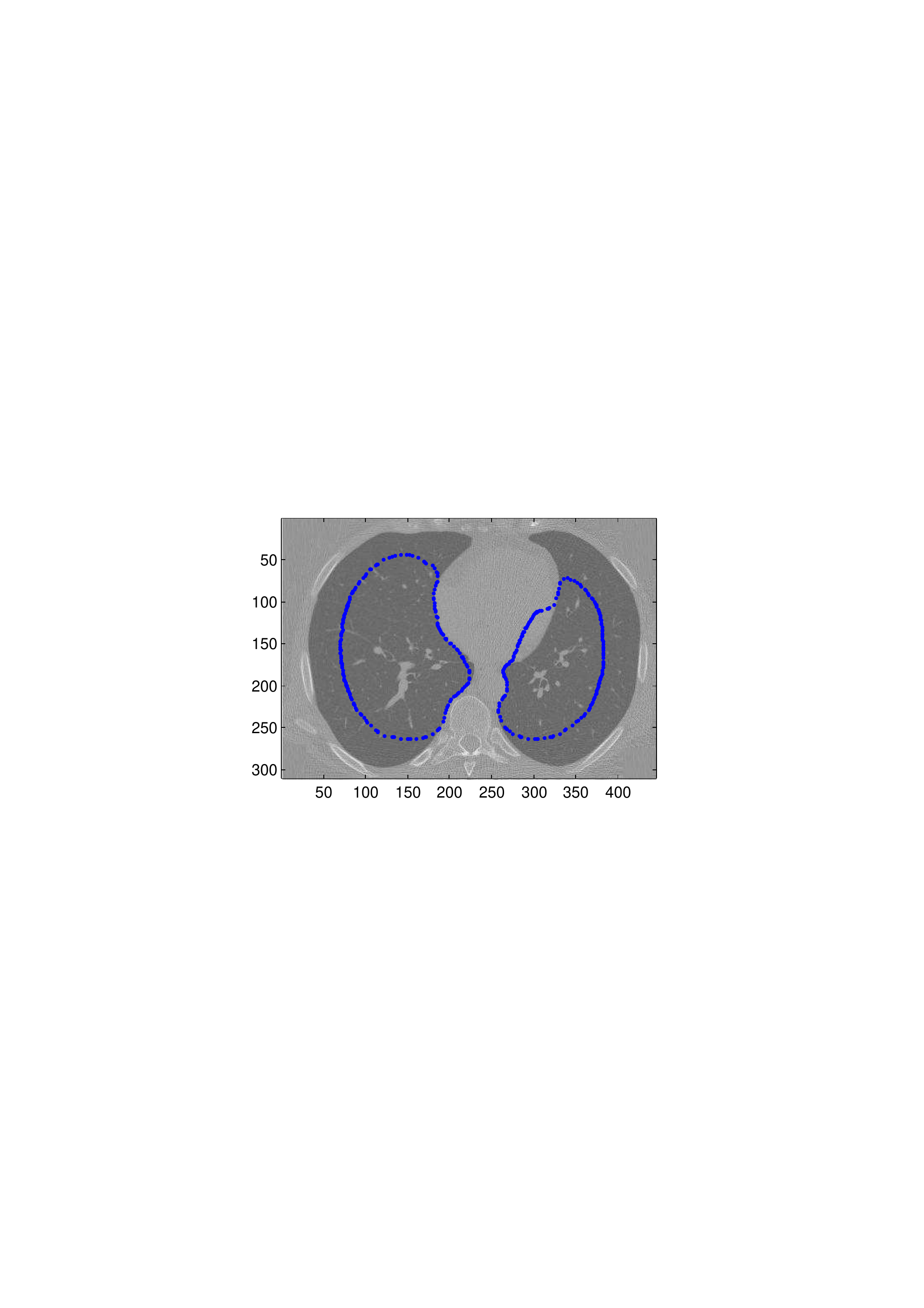}
		\includegraphics[viewport = 130 320 460 520, width = 0.15\textwidth]{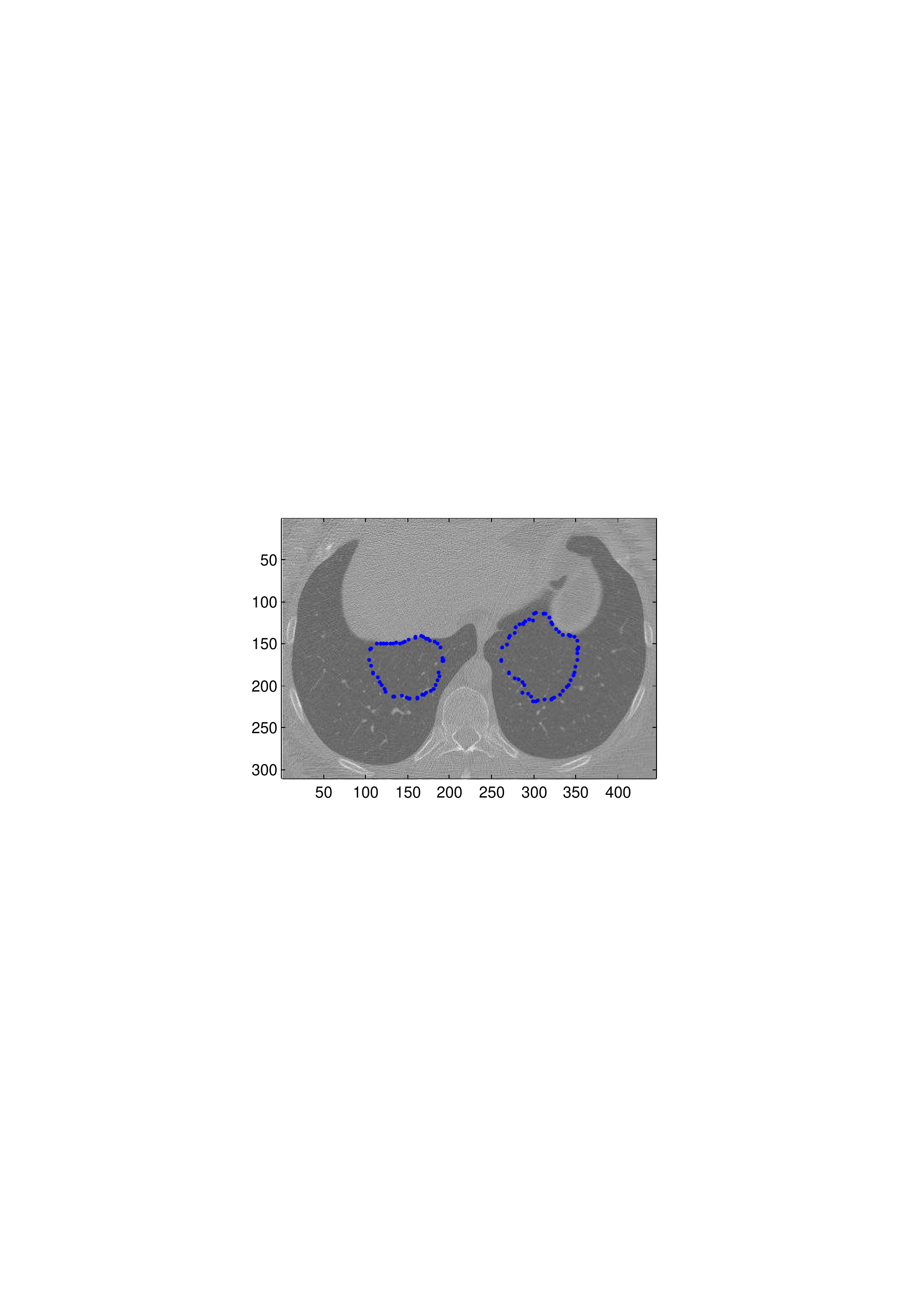}\\
		\includegraphics[viewport = 130 320 460 520, width = 0.15\textwidth]{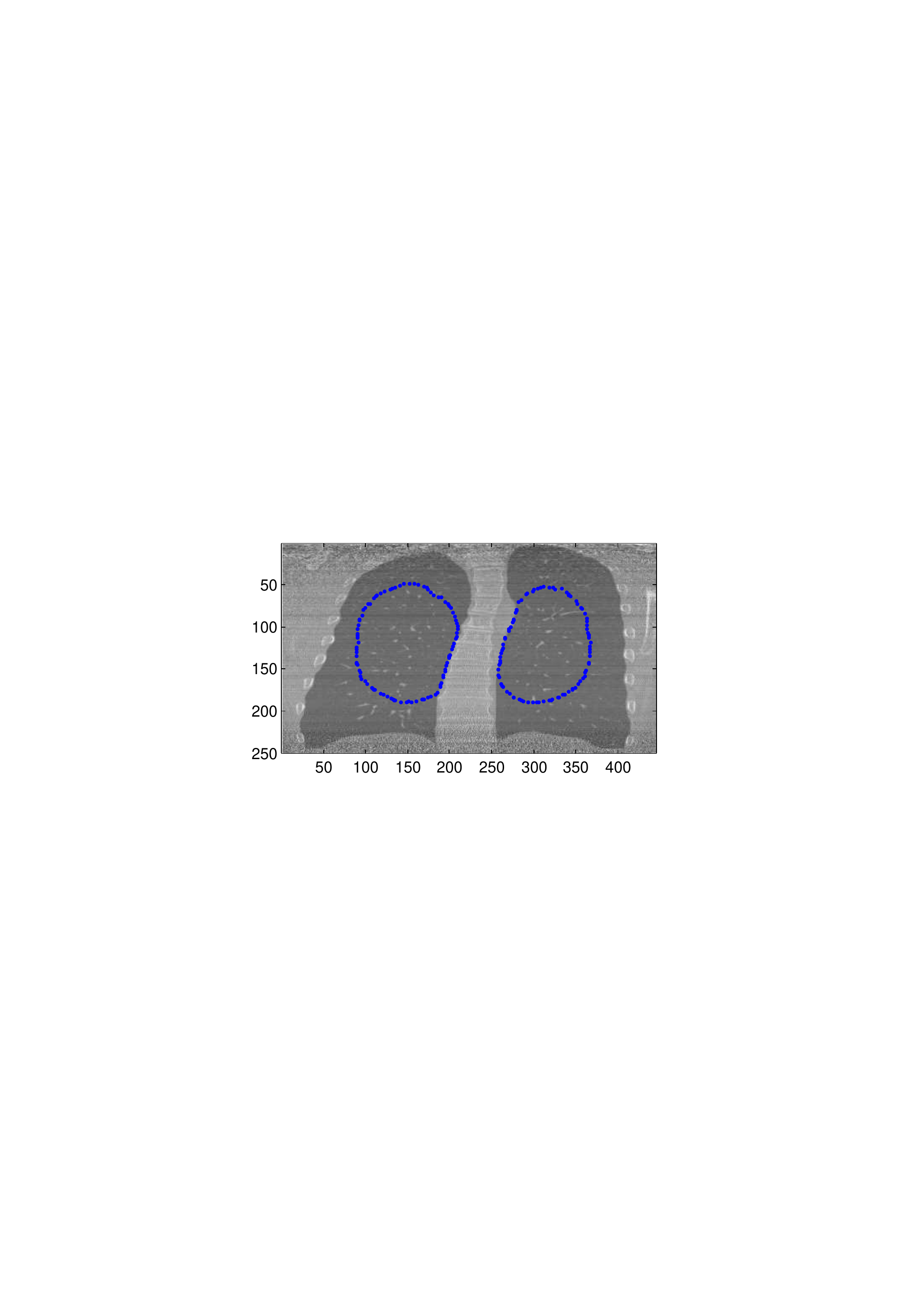}
		\includegraphics[viewport = 130 320 460 520, width = 0.15\textwidth]{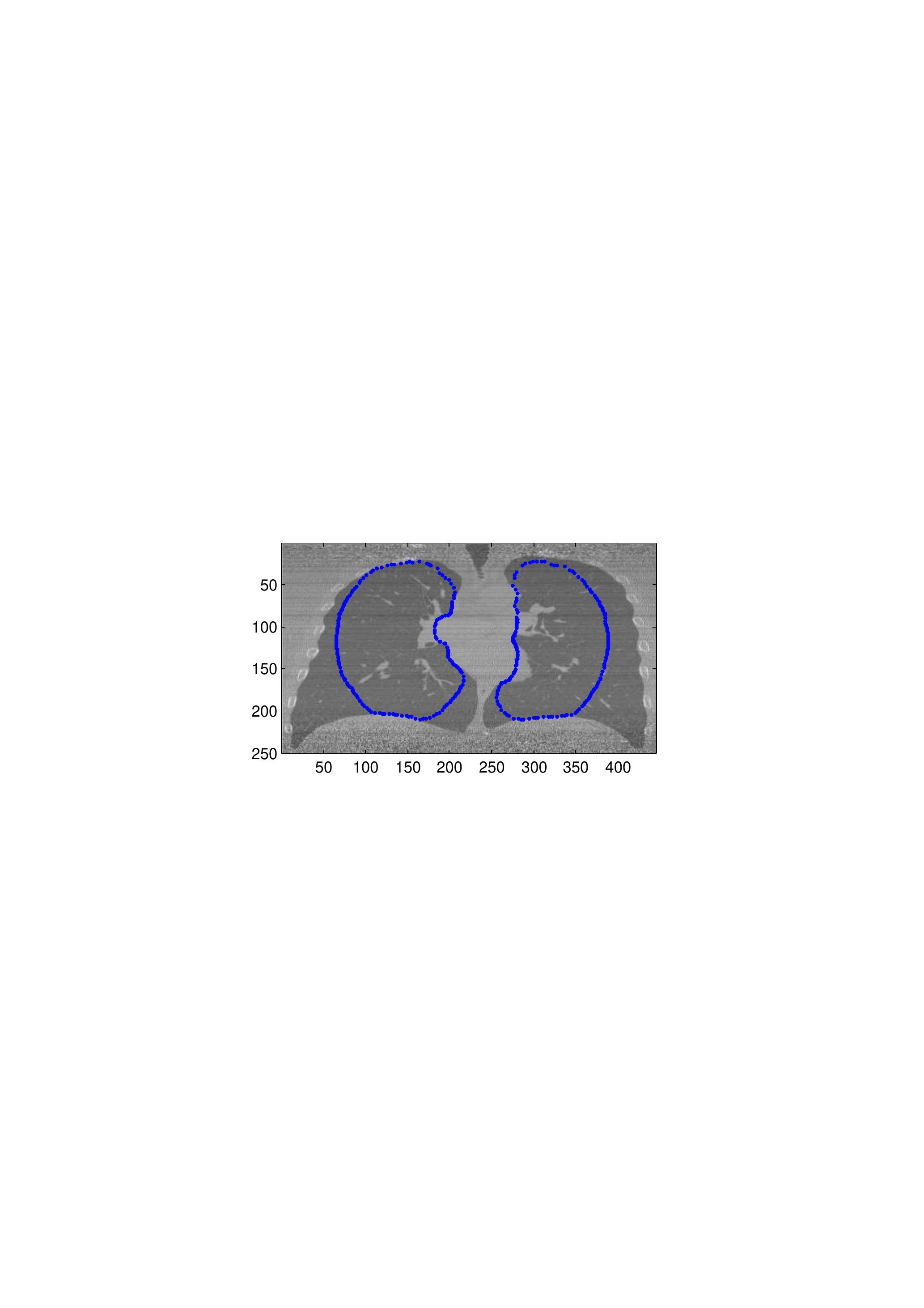} 
		\includegraphics[viewport = 130 320 460 520, width = 0.15\textwidth]{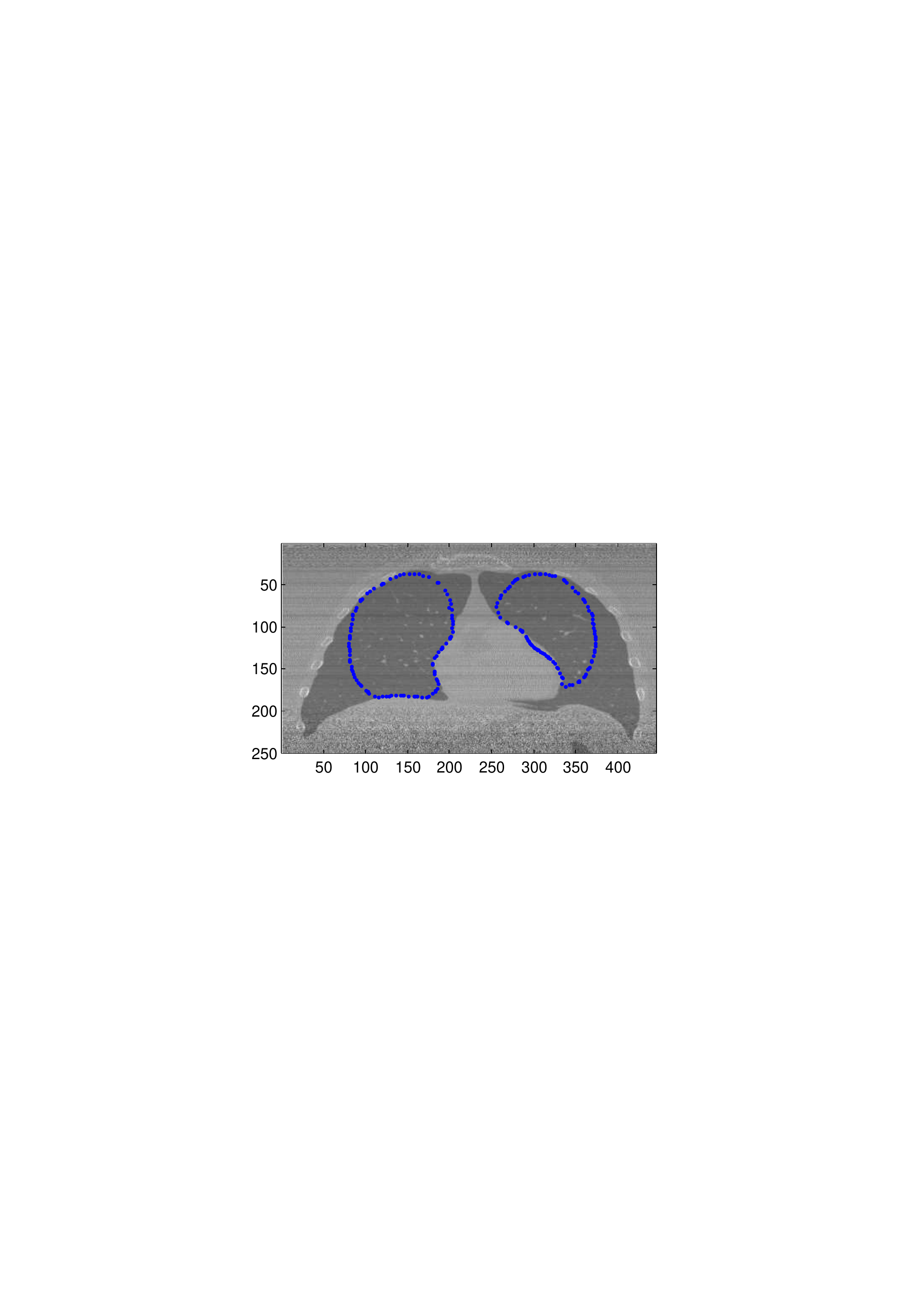} 
	\caption{Lung segmentation: Surfaces (row 1) and cross-sections (row 2: $z=80, 150, 200$, row 3: $y=80, 150, 200$) at $m=100$ at time $t=10$. The original images are from the Lung Image Database Consortium image collection (LIDC-IDRI) of The Cancer Imaging Archive (TCIA), see \cite{Reeves2007},\cite{Armato2011}, \cite{Reeves2011}.}
	\label{fig:lung_0100}
\end{figure}

%------------------------------------ m = 600 -----------------------------------------------------------------%
\begin{figure}
	\centering
		\includegraphics[width = 0.28\textwidth]{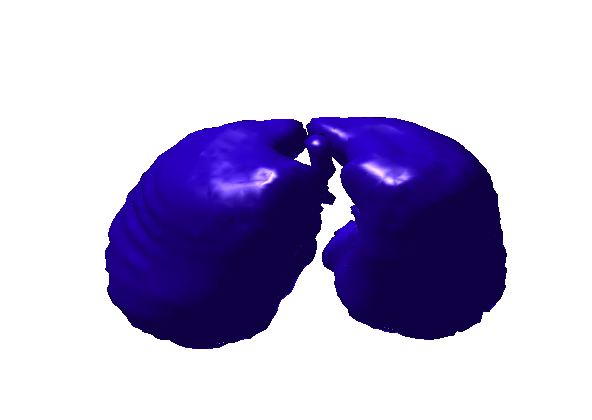}\\
		\includegraphics[viewport = 130 320 460 520, width = 0.15\textwidth]{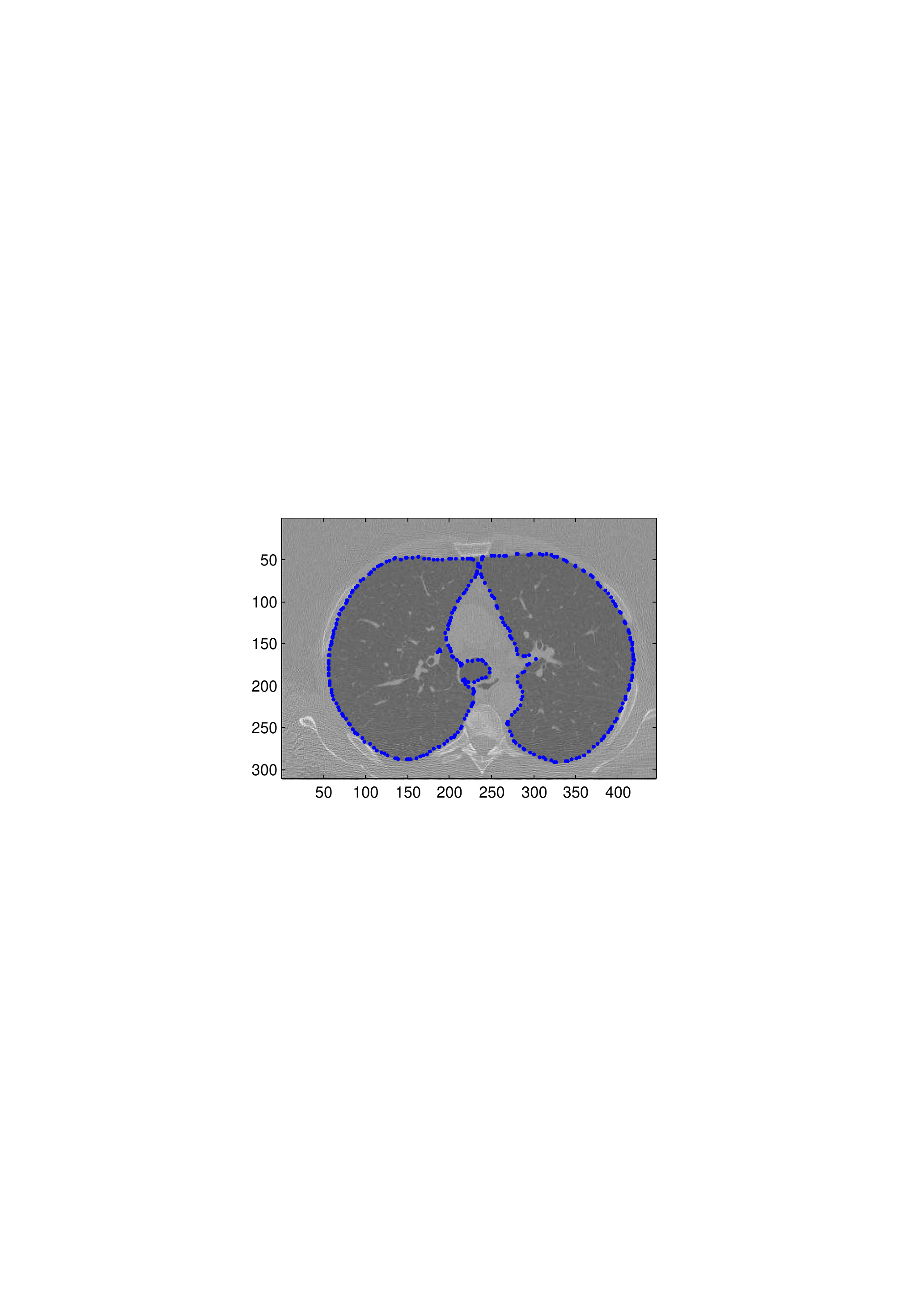}
		\includegraphics[viewport = 130 320 460 520, width = 0.15\textwidth]{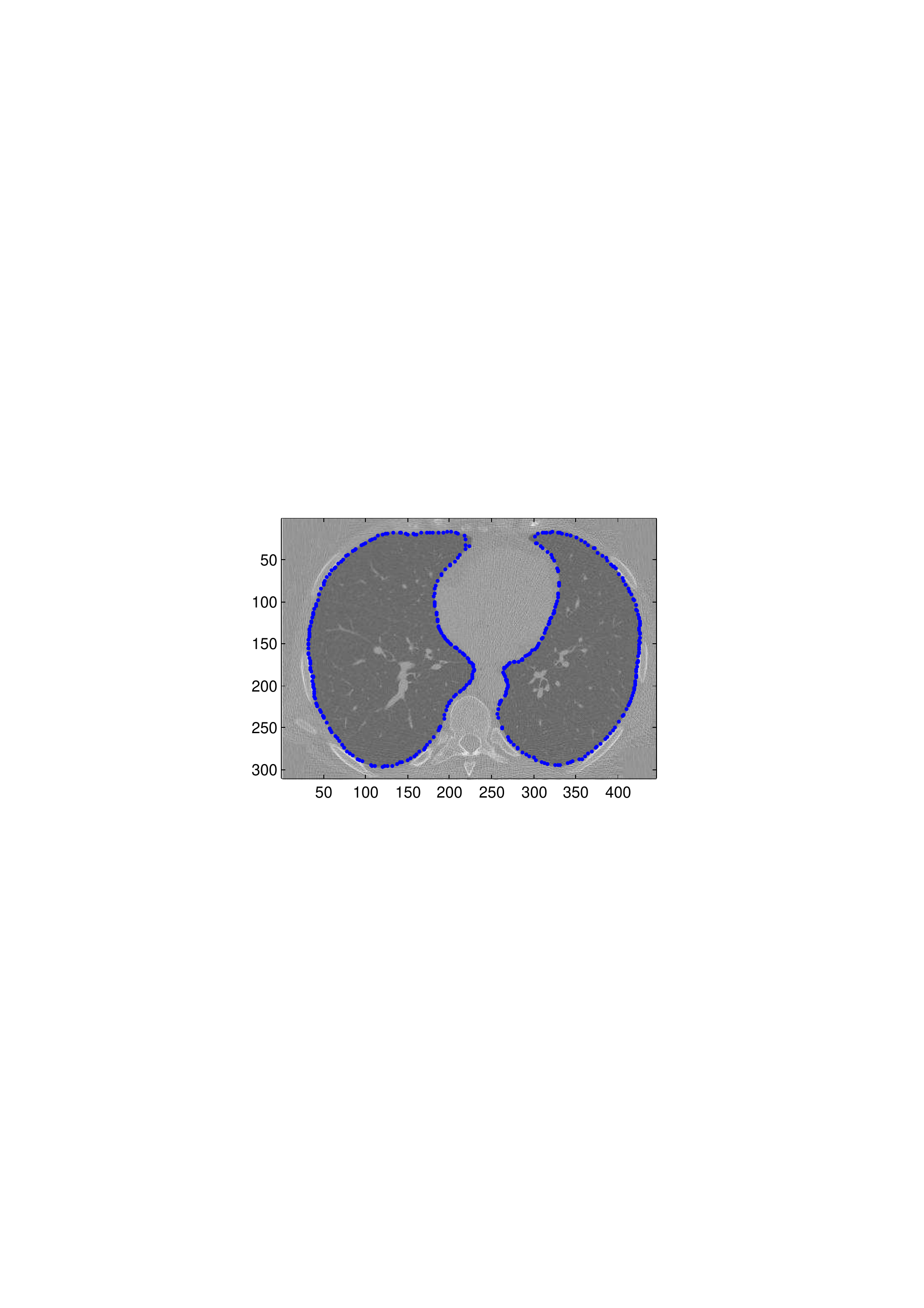}
		\includegraphics[viewport = 130 320 460 520, width = 0.15\textwidth]{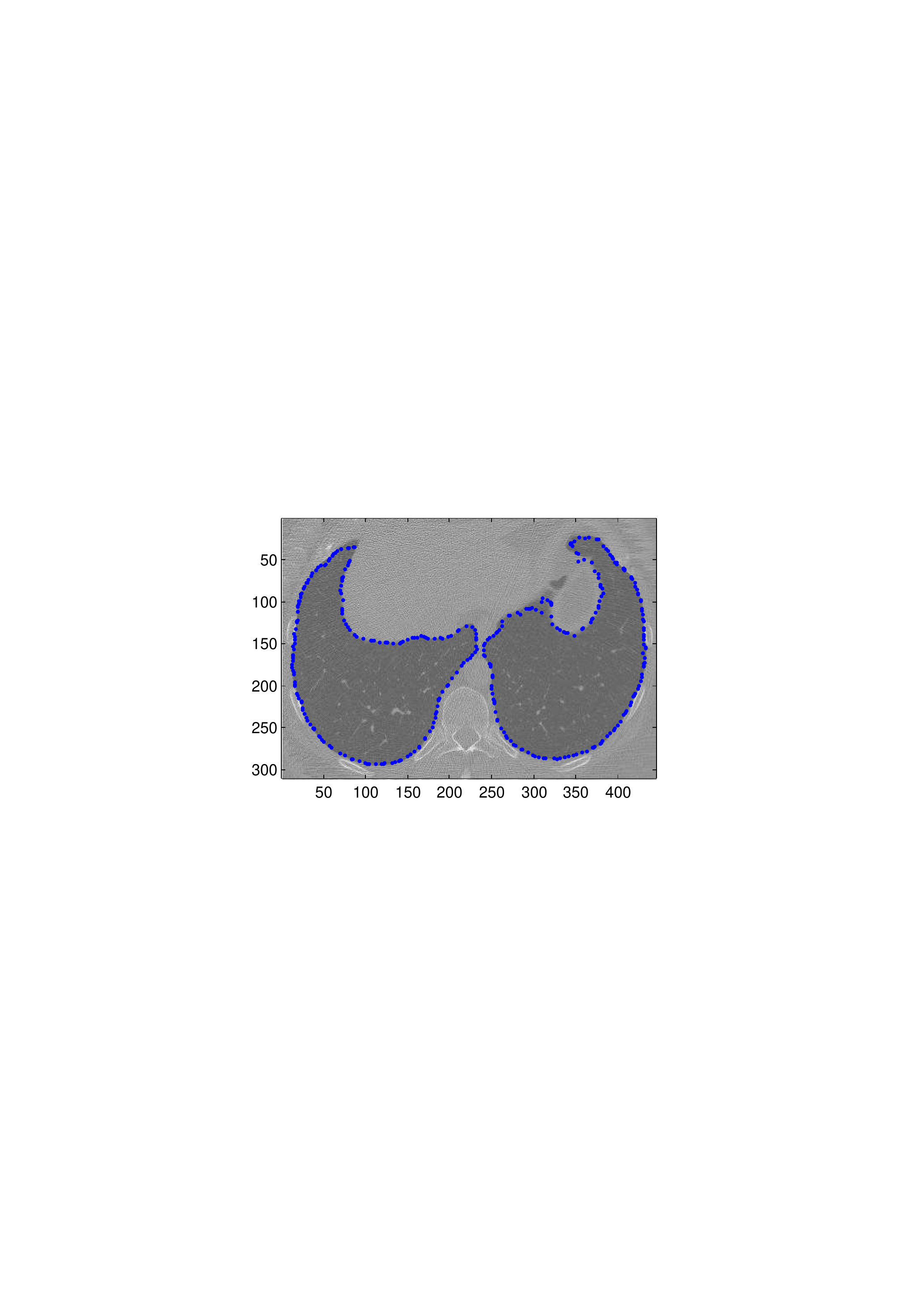}\\
		\includegraphics[viewport = 130 320 460 520, width = 0.15\textwidth]{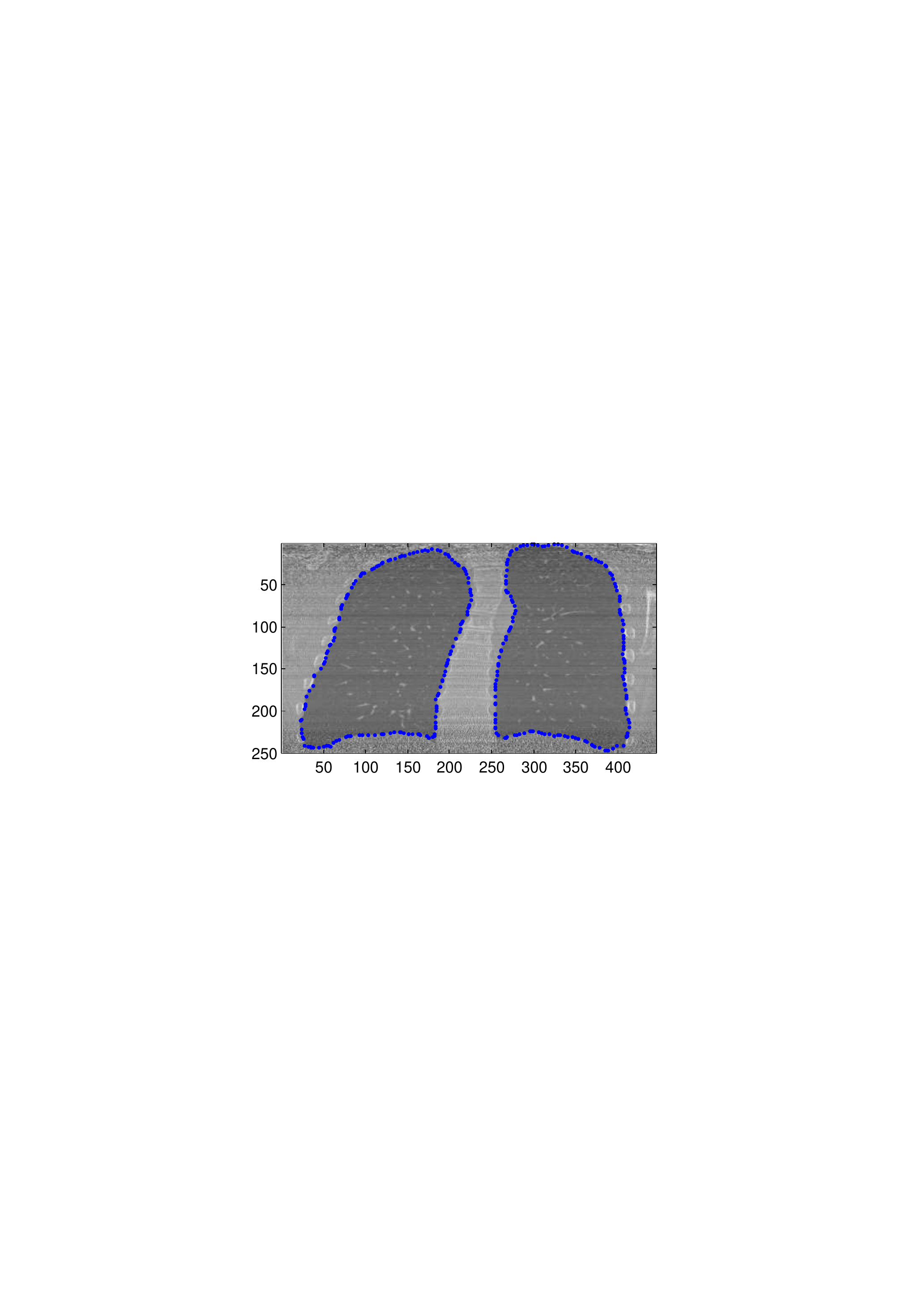}
		\includegraphics[viewport = 130 320 460 520, width = 0.15\textwidth]{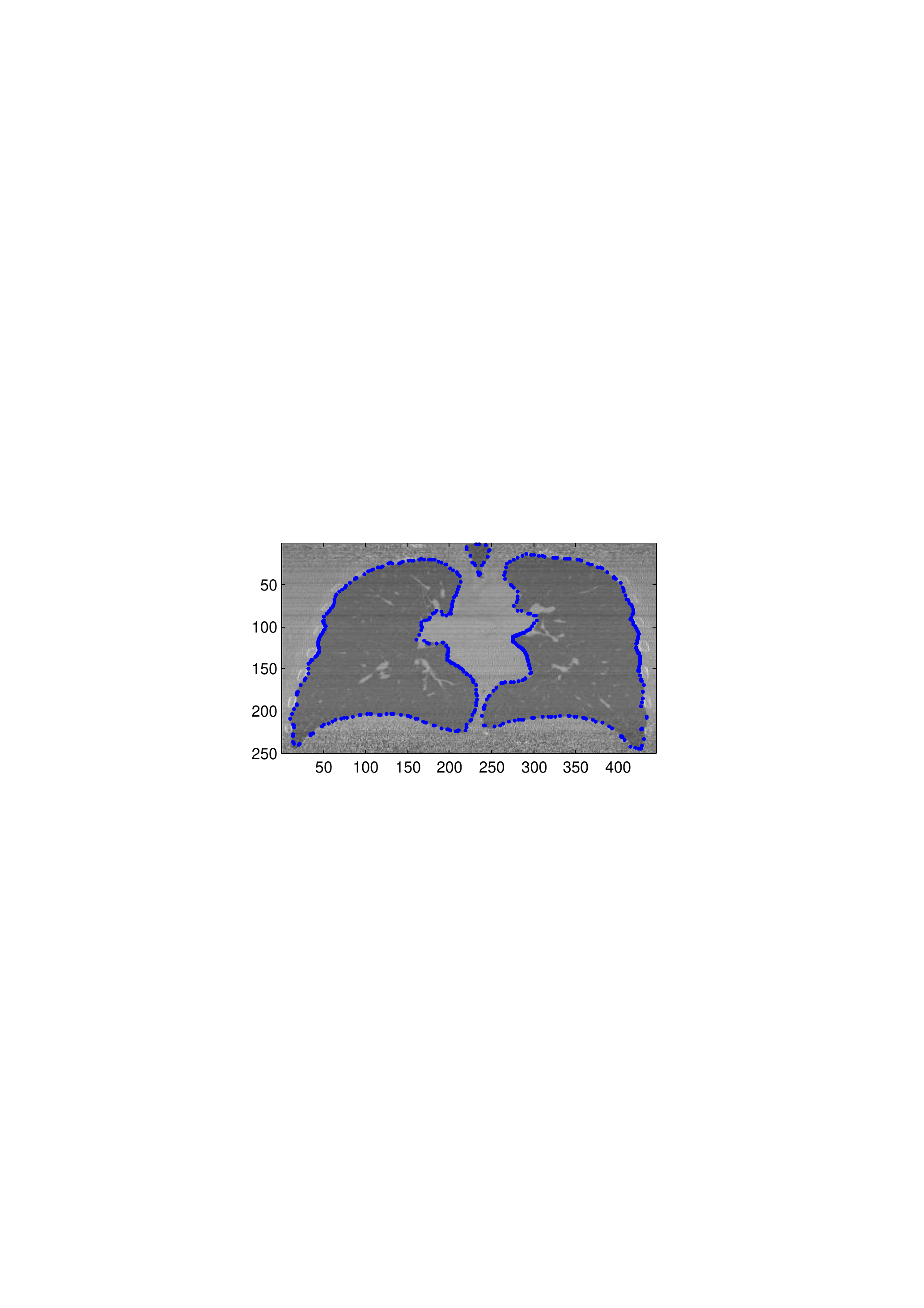} 
		\includegraphics[viewport = 130 320 460 520, width = 0.15\textwidth]{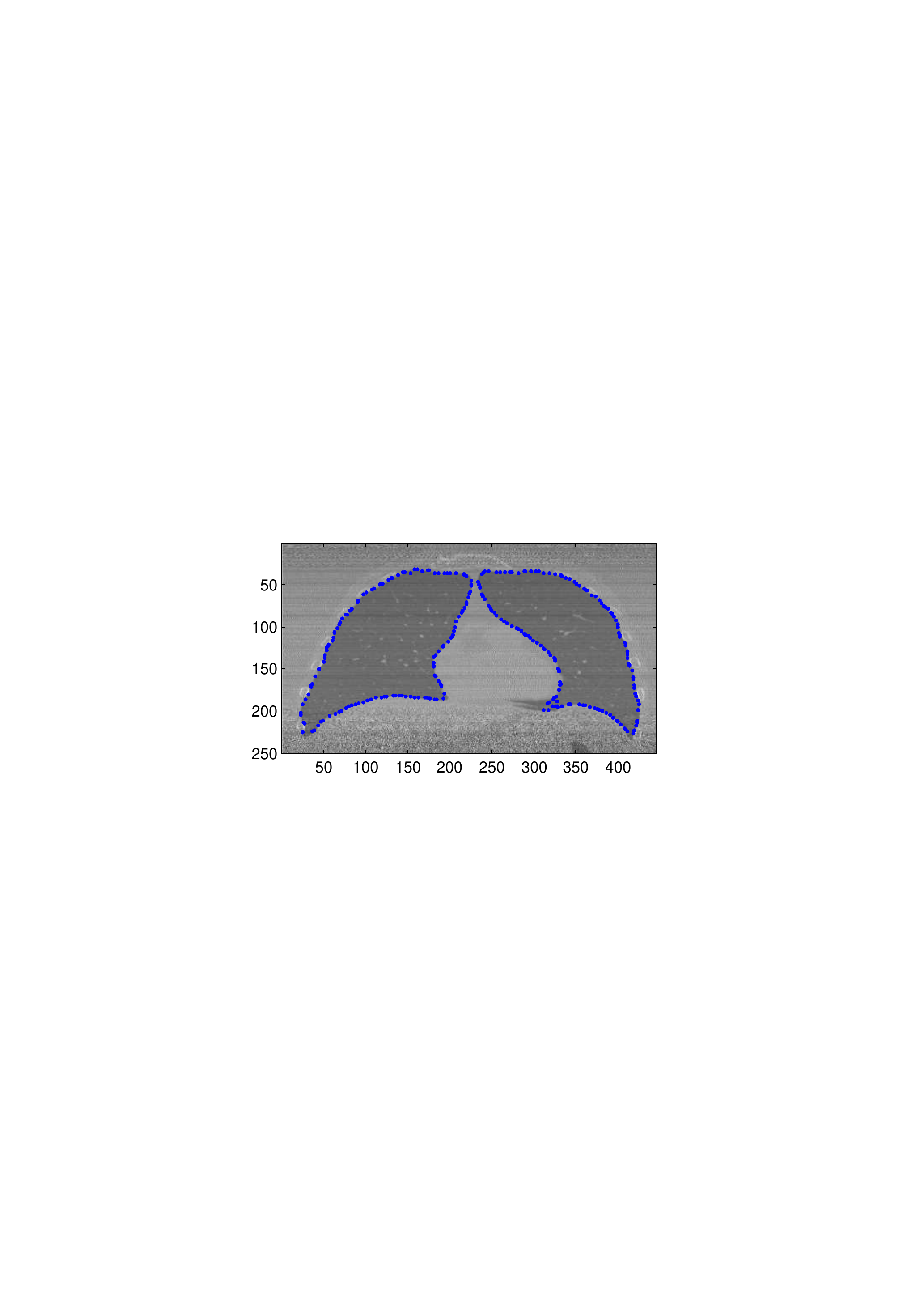} 
	\caption{Lung segmentation: Surfaces (row 1) and cross-sections (row 2: $z=80, 150, 200$, row 3: $y=80, 150, 200$) at $m=600$ at time $t=60$. The original images are from the Lung Image Database Consortium image collection (LIDC-IDRI) of The Cancer Imaging Archive (TCIA), see \cite{Reeves2007},\cite{Armato2011}, \cite{Reeves2011}.}
	\label{fig:lung_0600}
\end{figure}

In this section, we apply the segmentation method for three-dimensional images to medical image data. Segmentation of medical images is a challenging task due to possible high noise and image artifacts, see \cite{Sharma2010}. 

3D image data often consists of a set of 2D slice images generated by radiology scans, for example computed tomography (CT) and magnetic resonance (MR) scans. With a 3D image segmentation technique, one can segment organs (heart, lung, abdomen, liver, etc.) or tumors 
from their environment. The output, i.e. the resulting surface, serves as a reconstruction and visualization of the medical object and could be used for further medical analysis and diagnostic purposes: After the segmentation, one can compute the area of the triangulated surfaces and the volume of the enclosed regions. The area of the surfaces and the volume of the regions could be used for example to analyze if a tumor has been growing in the time between two radiological examinations.

First, we consider a sample 3D image of the Lung Image Database Consortium image collection (LIDC-IDRI) of The Cancer Imaging Archive (TCIA) (\url{https://wiki.cancerimagingarchive.net/display/Public/LIDC-IDRI}), see \cite{Reeves2007}, \cite{Armato2011}, \cite{Reeves2011})\footnote{The author acknowledges the National Cancer Institute and the Foundation for the National Institutes of Health, and their critical role in the creation of the free publicly available LIDC/IDRI Database.}. The data set consists of diagnostic CT scans. 
The original data set consists of 2D slice images stored as DICOM files.  The files are first preprocessed to cuboid 3D images with $N_x \times N_y \times N_z$ voxels, here: $N_x = 445$, $N_y = 310$ and $N_z = 250$. 

Figures \ref{fig:lung_0001}-\ref{fig:lung_0600} show the evolving 3D surfaces and six representative 2D cross-sections at different time steps $m=0, 100, 600$. 
In the subfigures showing 2D cross-sections, the image cross-sections for constant $z$ (in detail $z=80, 150, 200$) and constant $y$ (in detail $y=80, 150, 200$) are drawn as well as the intersection points of the surfaces' edges with the cross-section planes. 

For the image segmentation the weight of the curvature term is set to $\sigma=10$, the weight of the external forcing term to $\lambda = 1000$. As parameters for the time step size control, $\delta X_n^\mathrm{min} = 0.05$ and $\delta X_n^\mathrm{max} = 2$ are used. The time step size $\Delta t = 0.1$ need not be changed during the segmentation.

%-------------------------------------------------------------------------------------------------------------------------%
%---------------------------------------- UKR SPLITTING ------------------------------------------------------------------%
%-------------------------------------------------------------------------------------------------------------------------%

%-------------------------------------------------------------------------------------------------------------------------%
%---------------------------------------- m = 0 --------------------------------------------------------------------------%
%-------------------------------------------------------------------------------------------------------------------------%
\begin{figure}
	\centering
	  \includegraphics[trim = 20mm 10mm 20mm 10mm,clip, width = 0.3\textwidth]{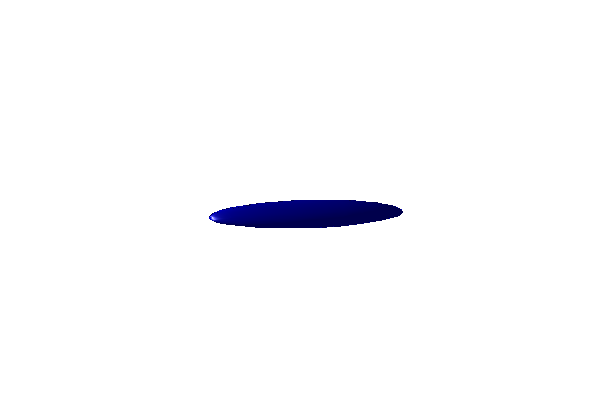}\\
		\includegraphics[viewport = 190 315 400 520, width = 0.15\textwidth]{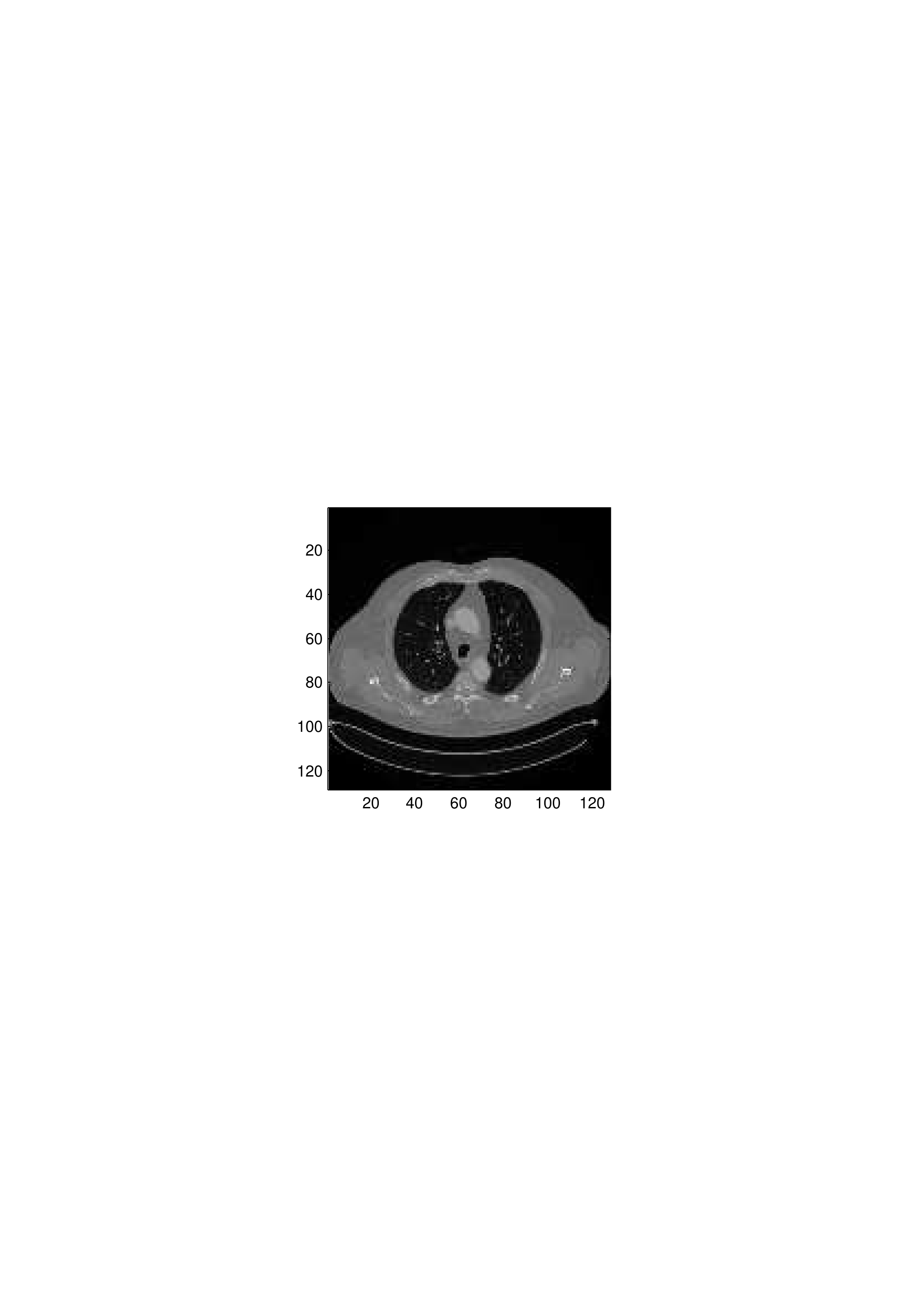}
		\includegraphics[viewport = 190 315 400 520, width = 0.15\textwidth]{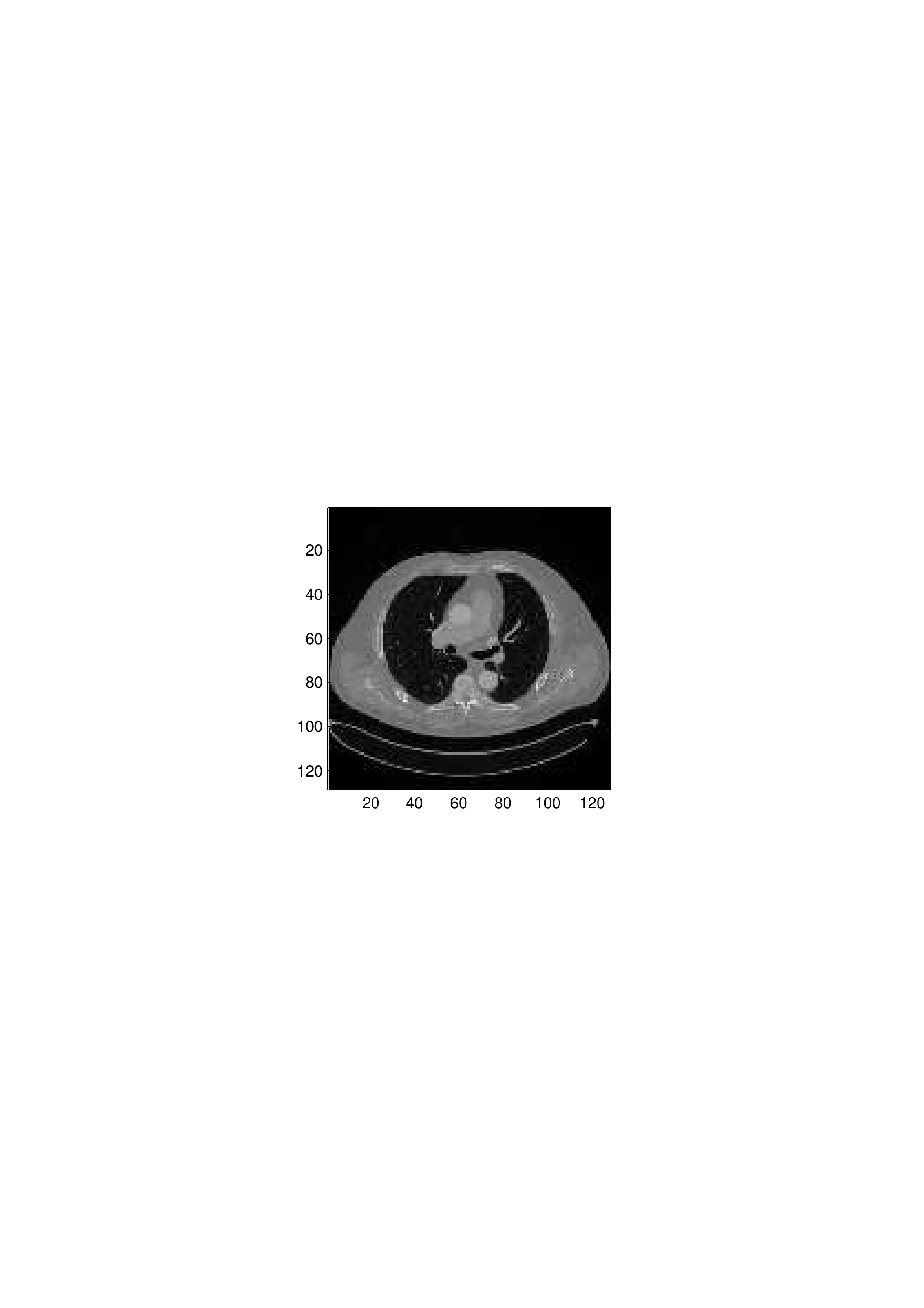}
		\includegraphics[viewport = 190 315 400 520, width = 0.15\textwidth]{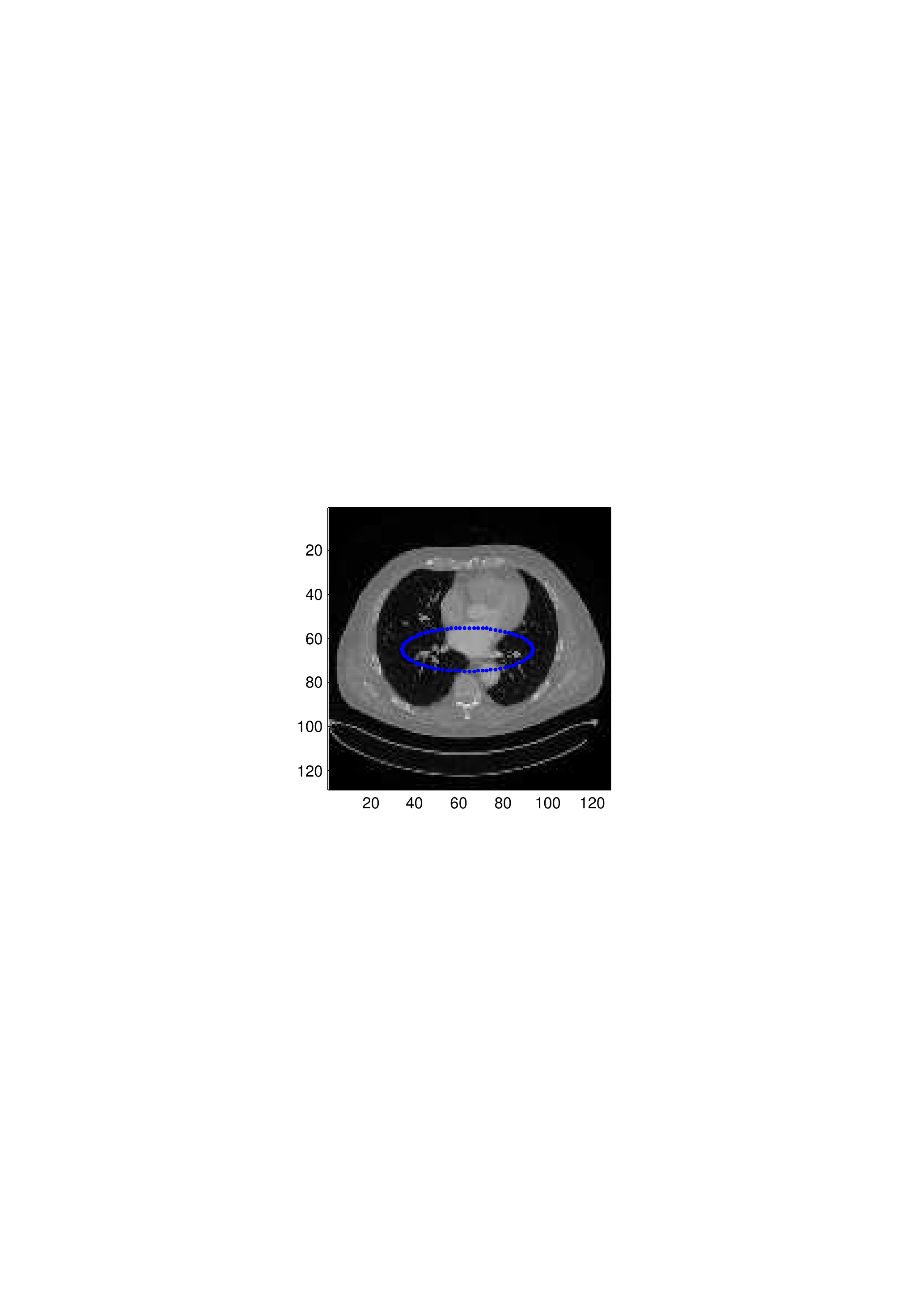}\\
		\includegraphics[viewport = 240 315 350 525, width = 0.1125\textwidth]{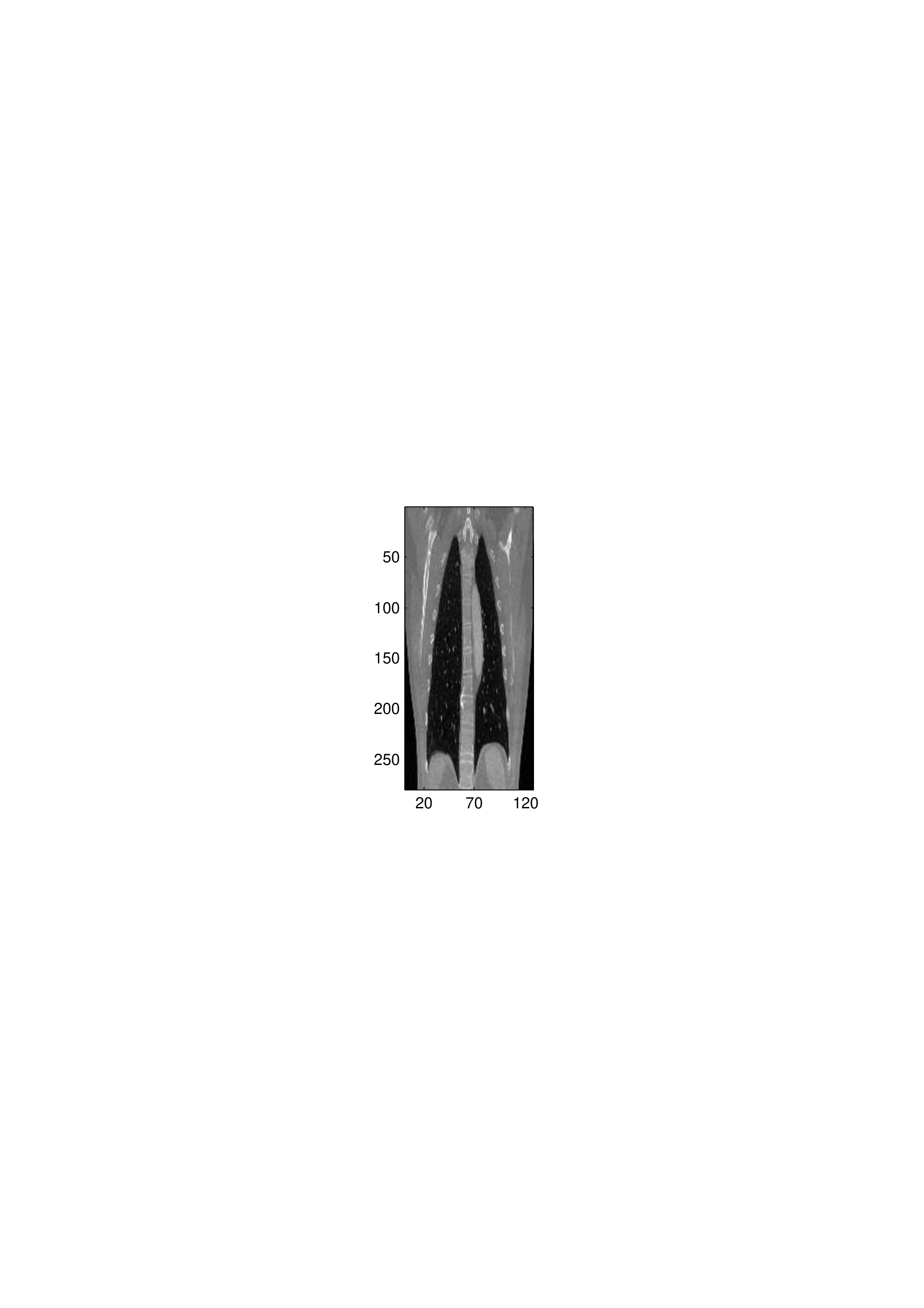}\hspace{4ex}
		\includegraphics[viewport = 240 315 350 525, width = 0.1125\textwidth]{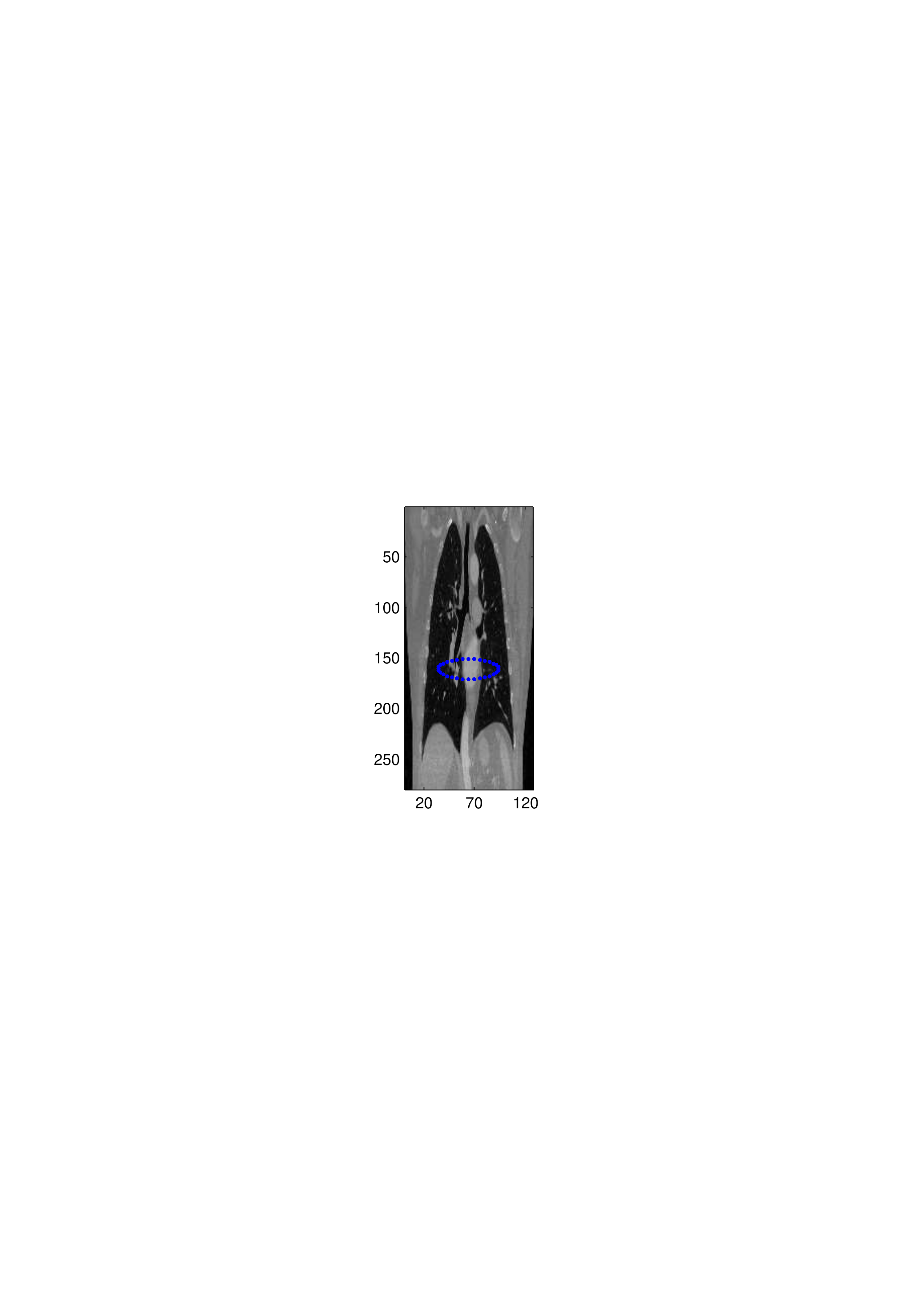}\hspace{4ex}
		\includegraphics[viewport = 240 315 350 525, width = 0.1125\textwidth]{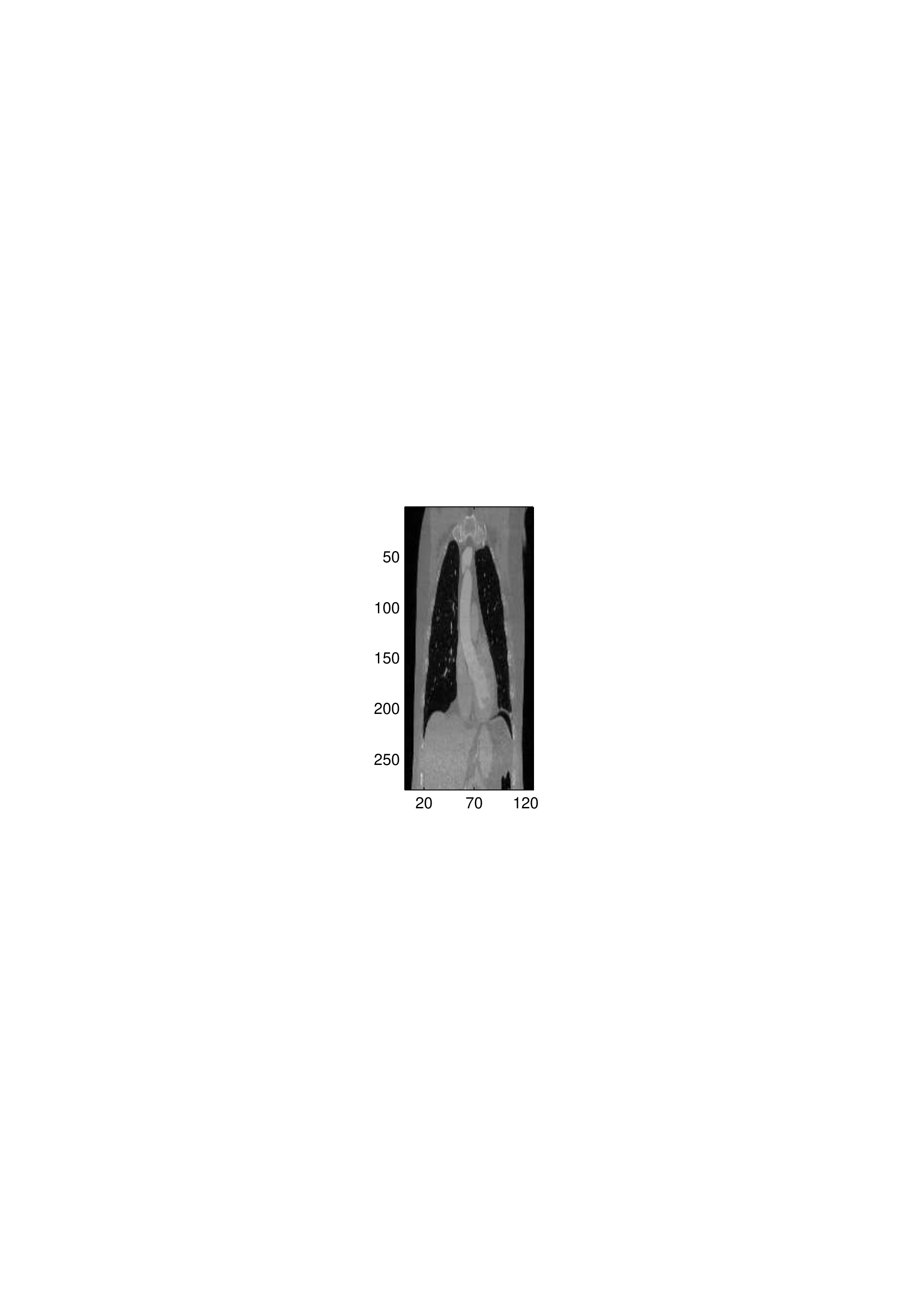}
	\caption{Lung segmentation with splitting: Surface at different viewing angles (row 1) and cross-sections (row 2: $z=80, 120, 160$, row 3: $y=50, 64, 80$) at $m=0$ at time $t=0$. Credits (original CT images): C. Stroszczynski, Radiology, University Hospital Regensburg.}
	\label{fig:cd4_part2_01_0000}
\end{figure}

%-------------------------------------------------------------------------------------------------------------------------%
%---------------------------------------- m = 50 --------------------------------------------------------------------------%
%-------------------------------------------------------------------------------------------------------------------------%
\begin{figure}
	\centering
	  \includegraphics[trim = 20mm 10mm 20mm 10mm,clip, width = 0.3\textwidth]{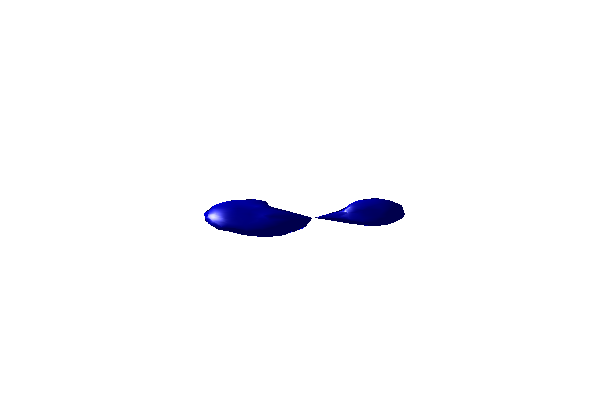}\\
		\includegraphics[viewport = 190 315 400 520, width = 0.15\textwidth]{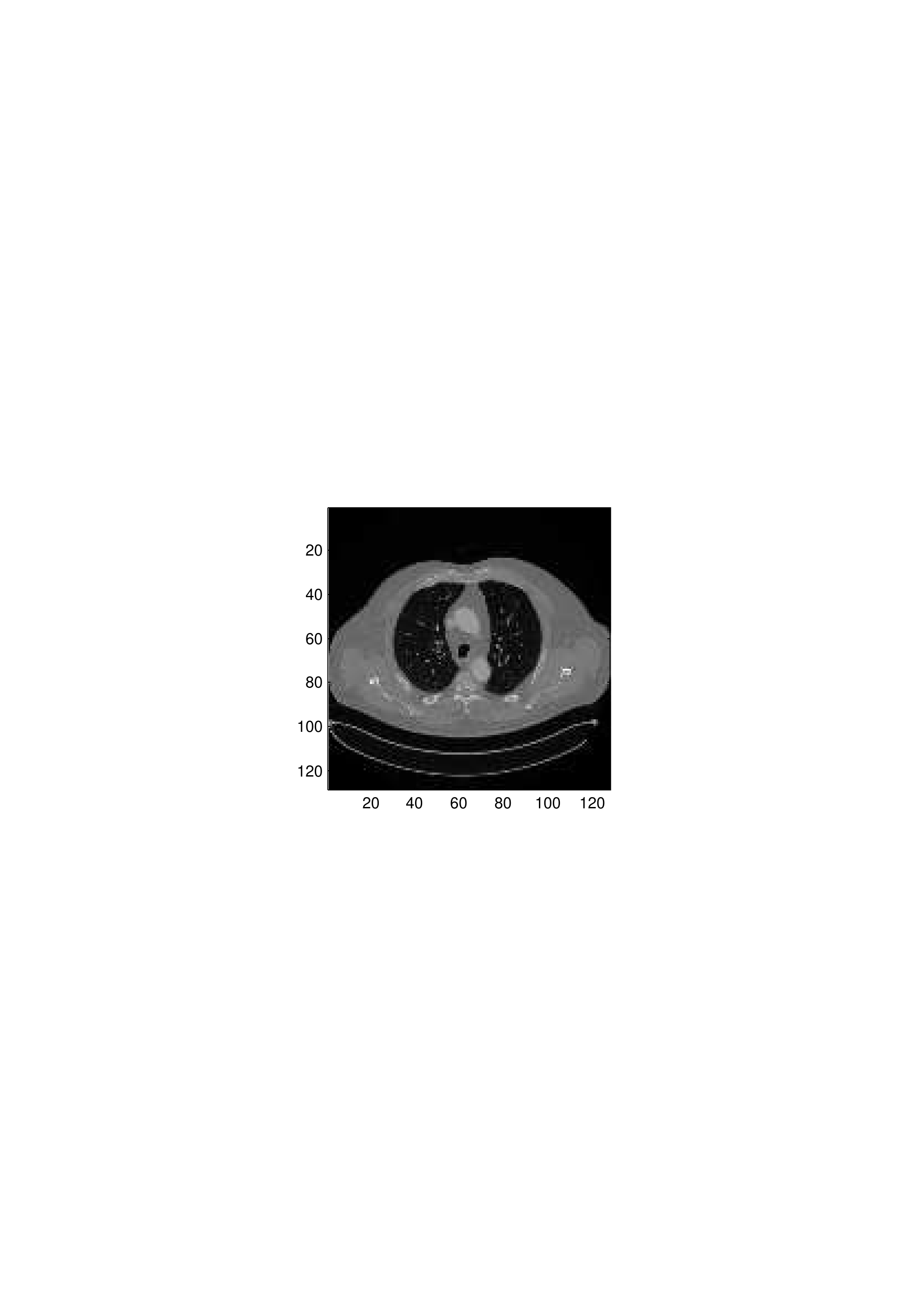}
		\includegraphics[viewport = 190 315 400 520, width = 0.15\textwidth]{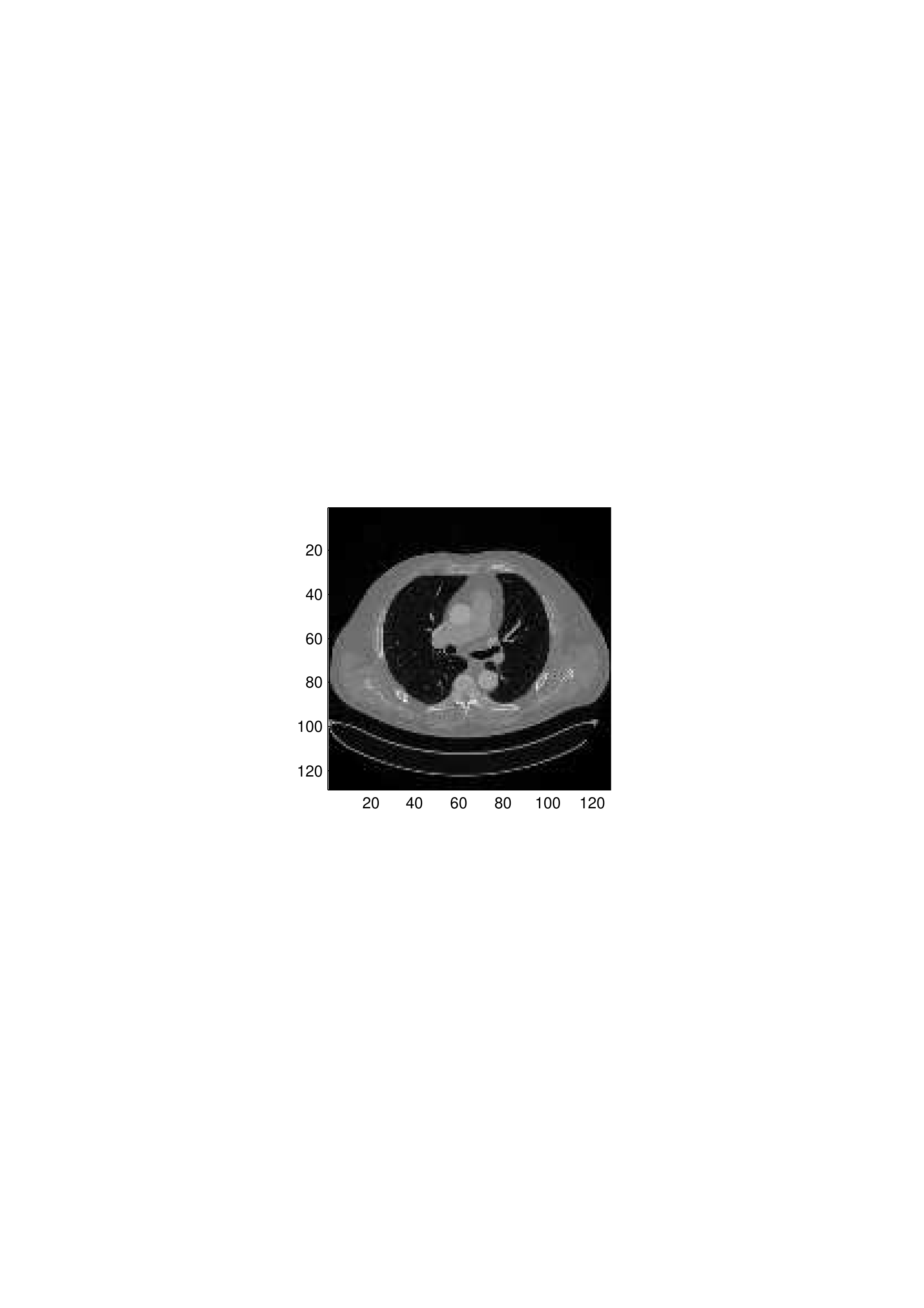}
		\includegraphics[viewport = 190 315 400 520, width = 0.15\textwidth]{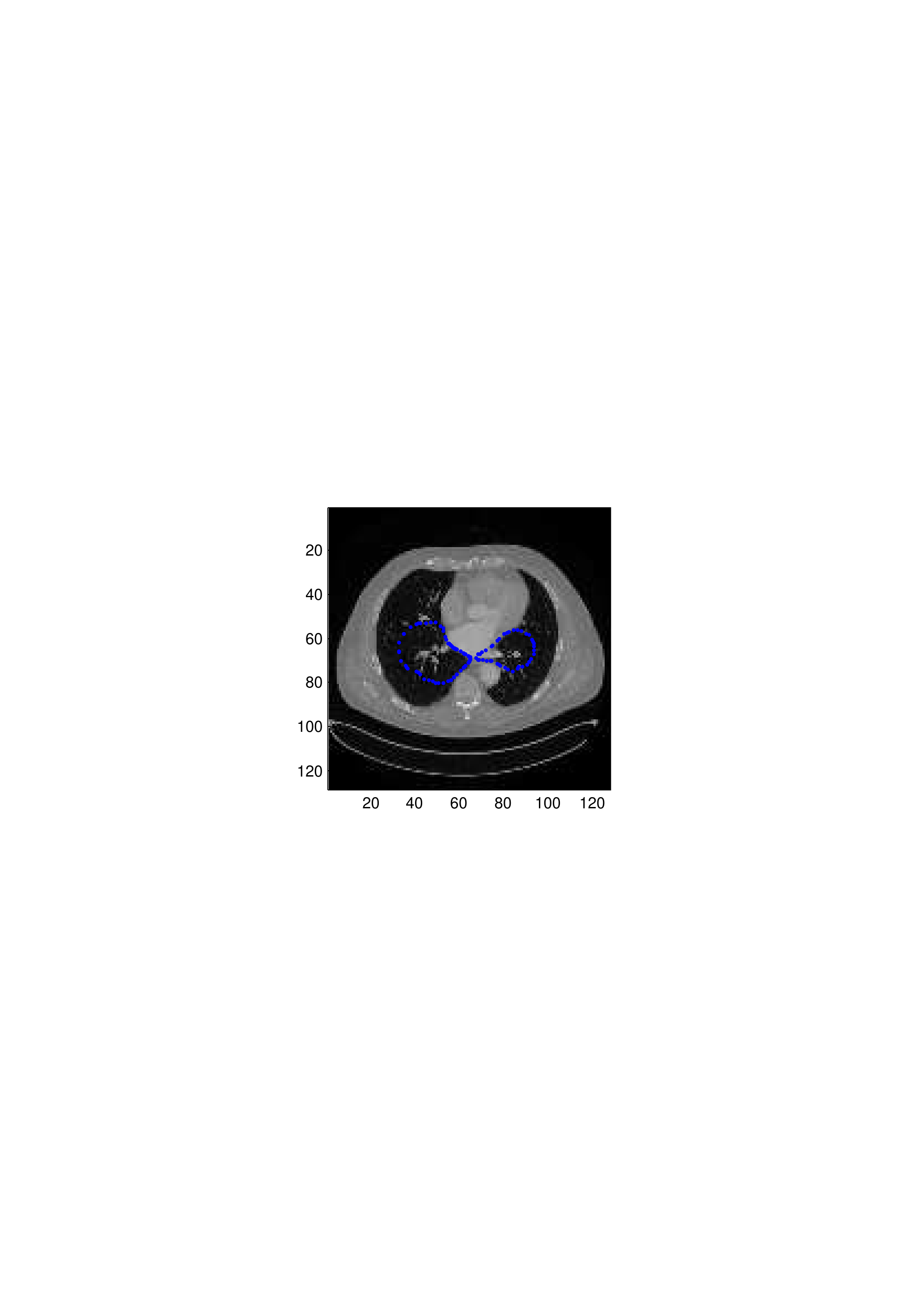}\\
		\includegraphics[viewport = 240 315 350 525, width = 0.1125\textwidth]{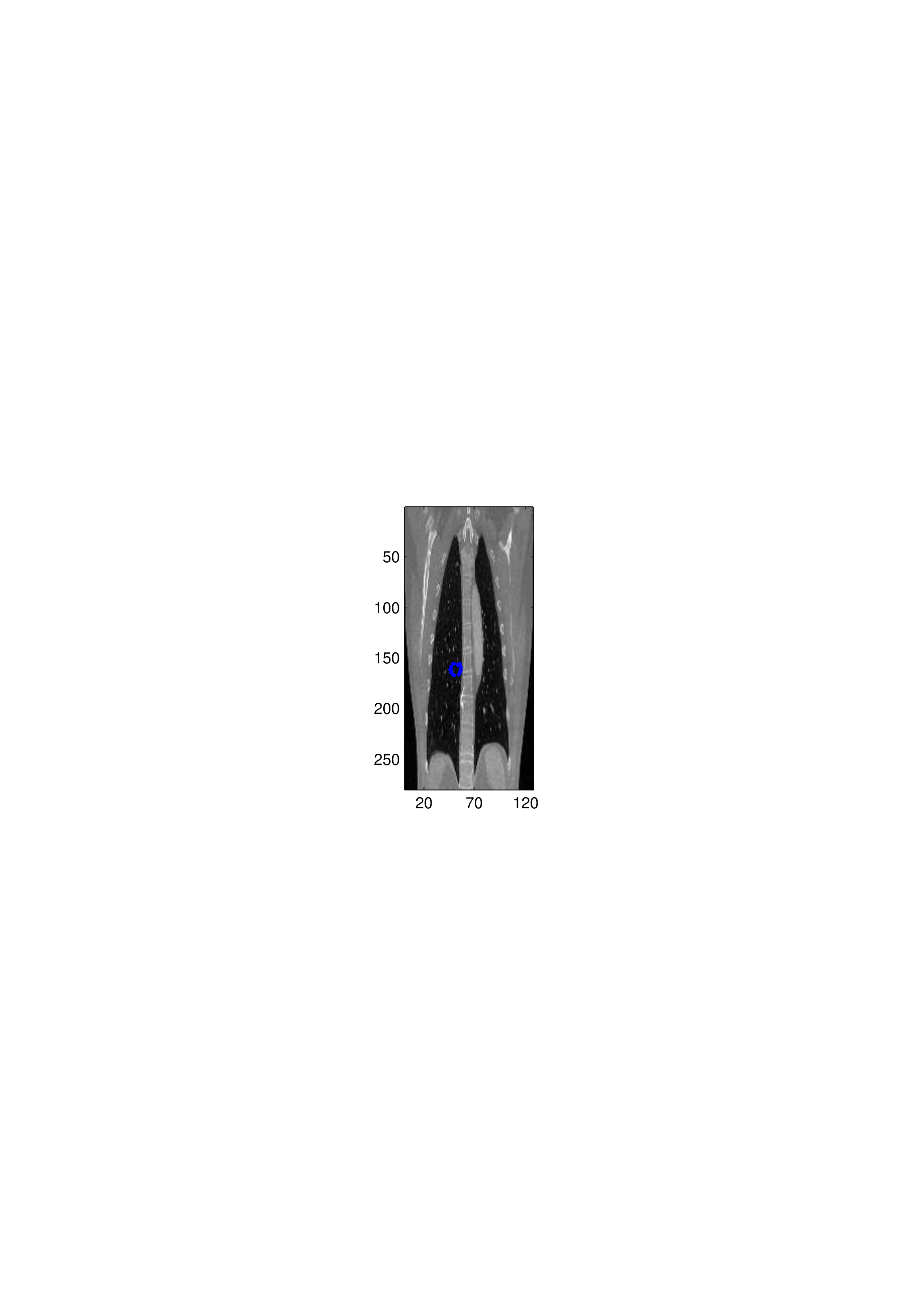}\hspace{4ex}
		\includegraphics[viewport = 240 315 350 525, width = 0.1125\textwidth]{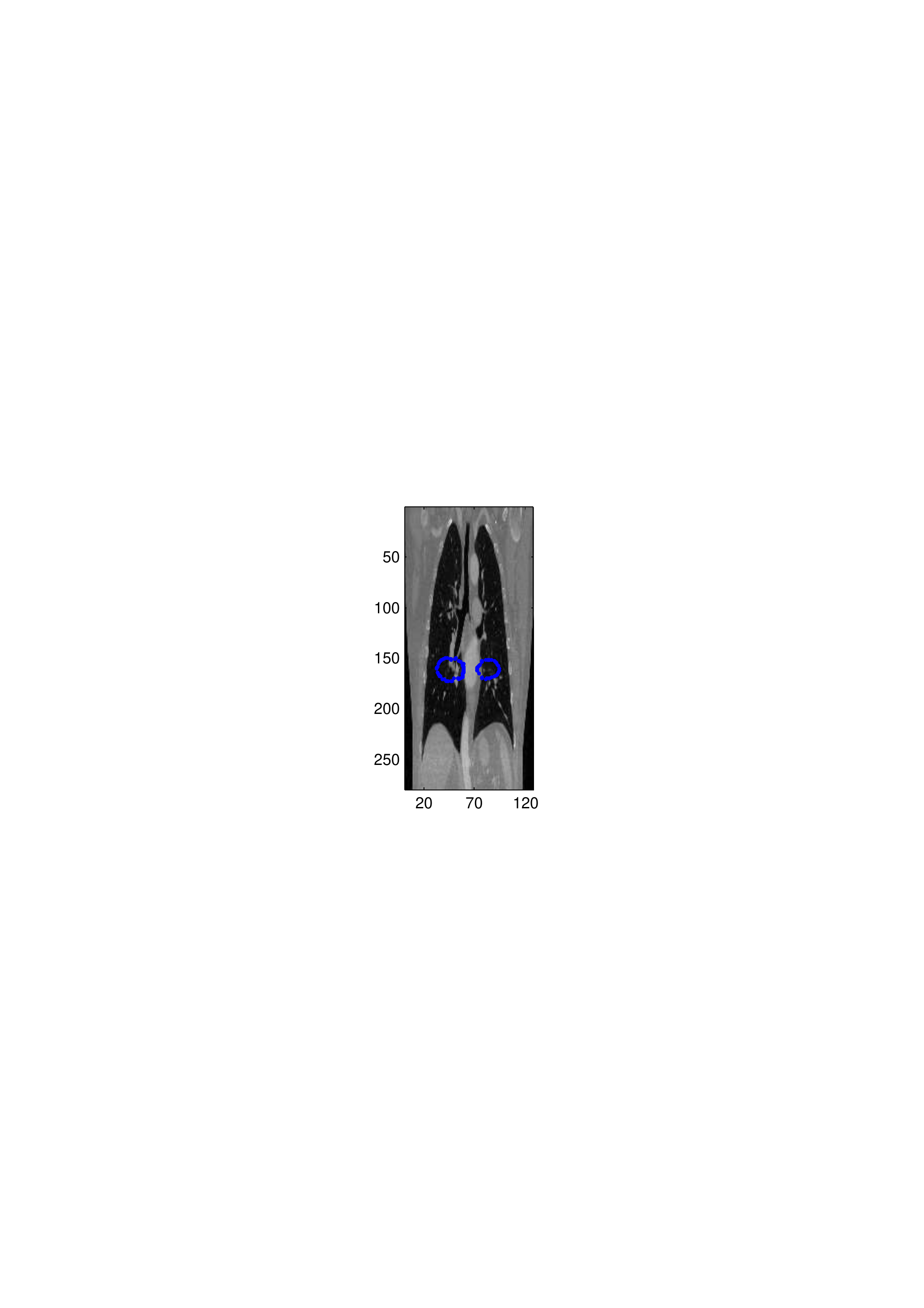}\hspace{4ex}
		\includegraphics[viewport = 240 315 350 525, width = 0.1125\textwidth]{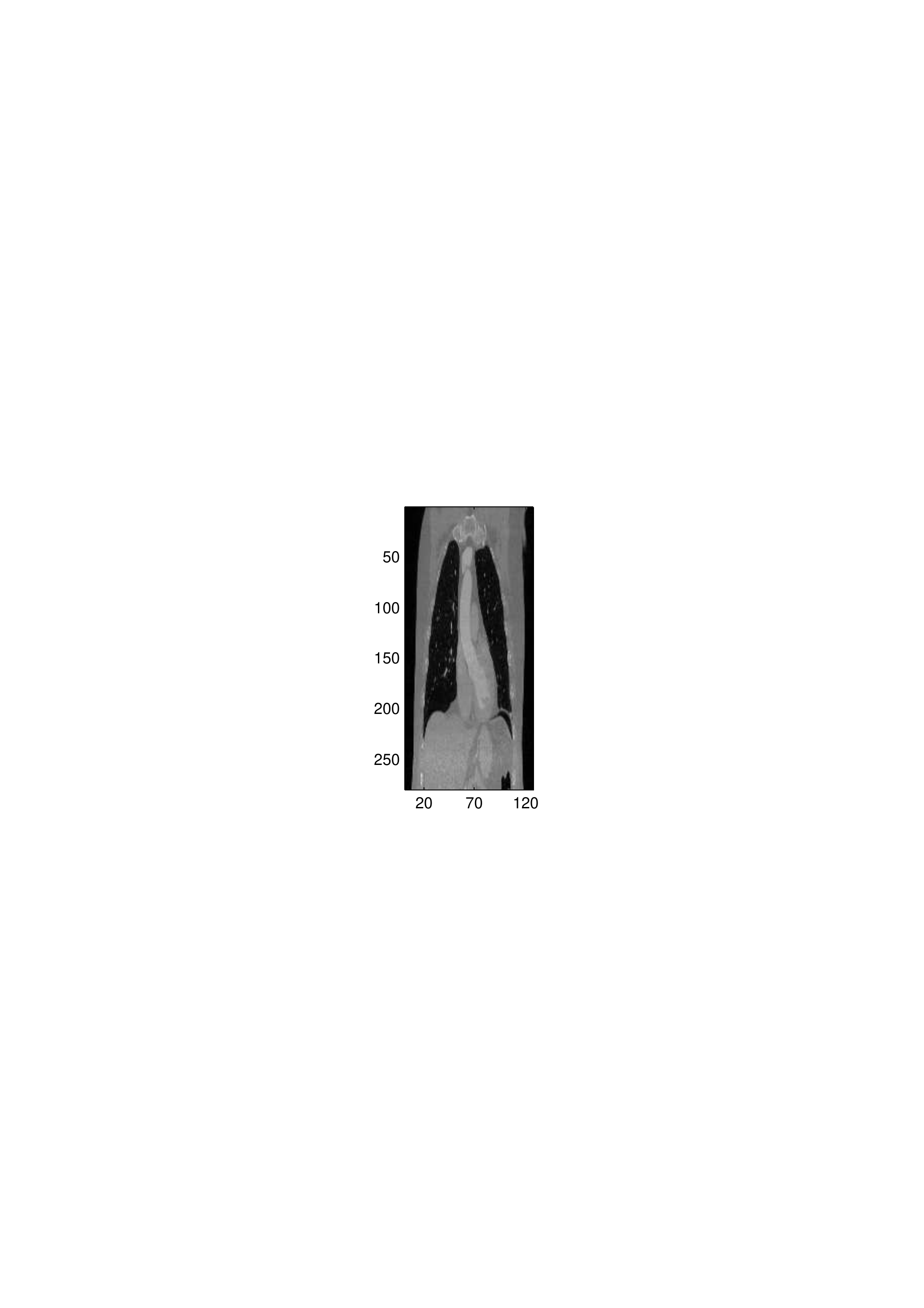}
	\caption{Lung segmentation with splitting: Surfaces  (row 1) and cross-sections (row 2: $z=80, 120, 160$, row 3: $y=50, 64, 80$) at $m=50$ at time $t=10$. Credits (original CT images): C. Stroszczynski, Radiology, University Hospital Regensburg.}
	\label{fig:cd4_part2_01_0050}
\end{figure}

%-------------------------------------------------------------------------------------------------------------------------%
%---------------------------------------- m = 100 --------------------------------------------------------------------------%
%-------------------------------------------------------------------------------------------------------------------------%
\begin{figure}
	\centering
	  \includegraphics[trim = 20mm 10mm 20mm 10mm,clip, width = 0.3\textwidth]{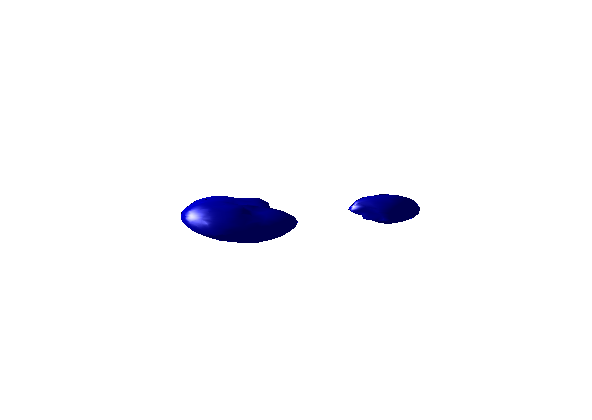}\\
		\includegraphics[viewport = 190 315 400 520, width = 0.15\textwidth]{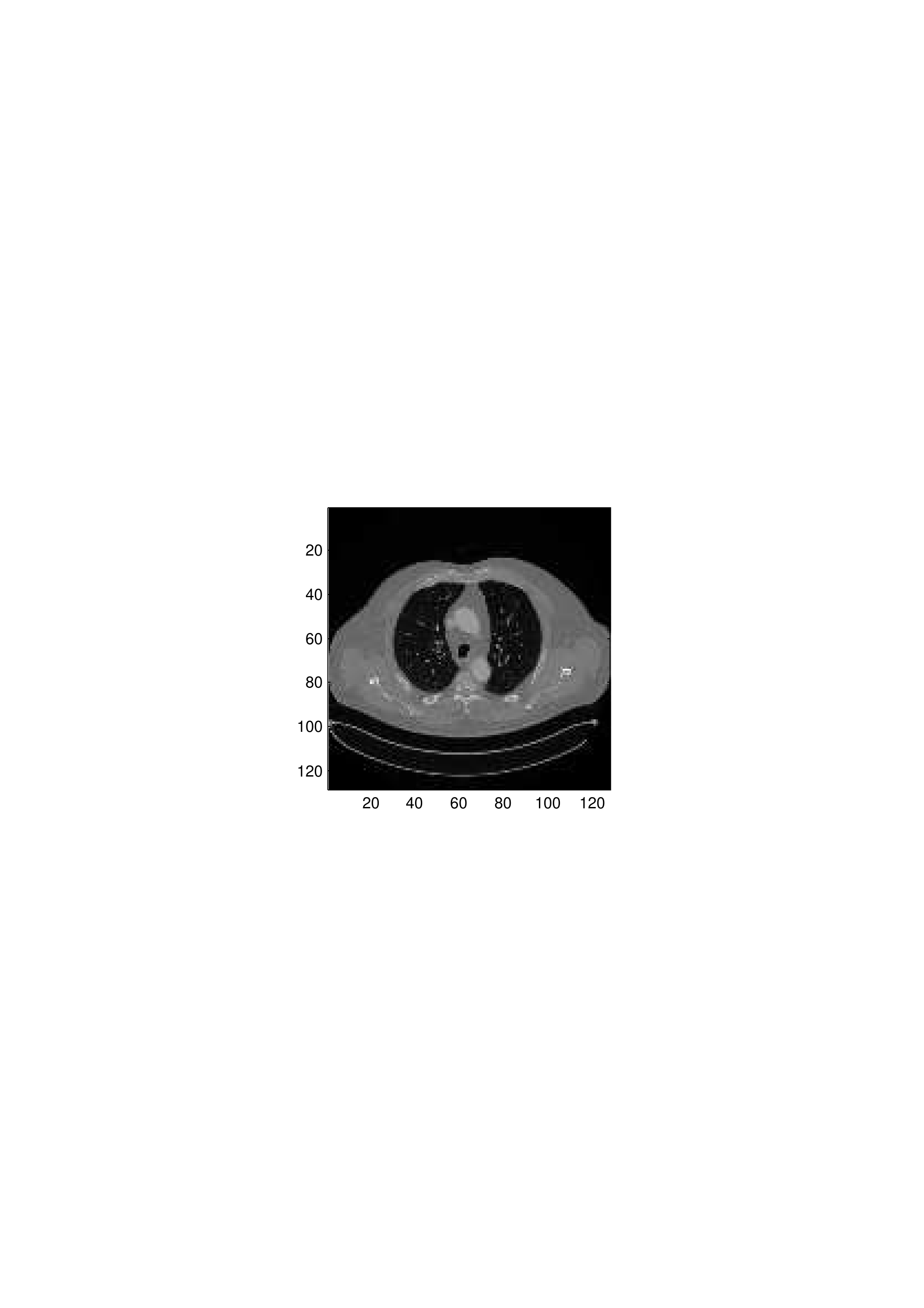}
		\includegraphics[viewport = 190 315 400 520, width = 0.15\textwidth]{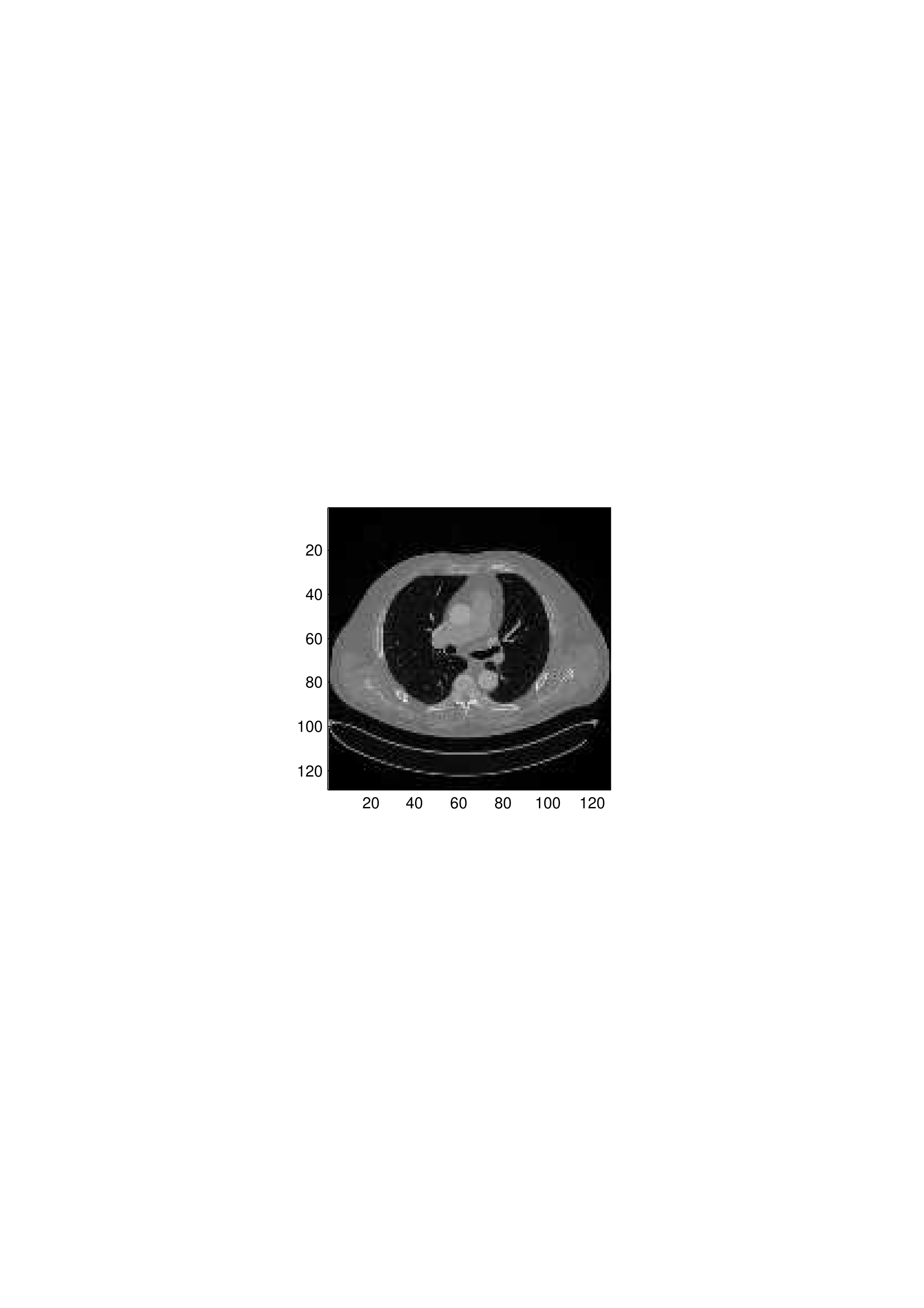}
		\includegraphics[viewport = 190 315 400 520, width = 0.15\textwidth]{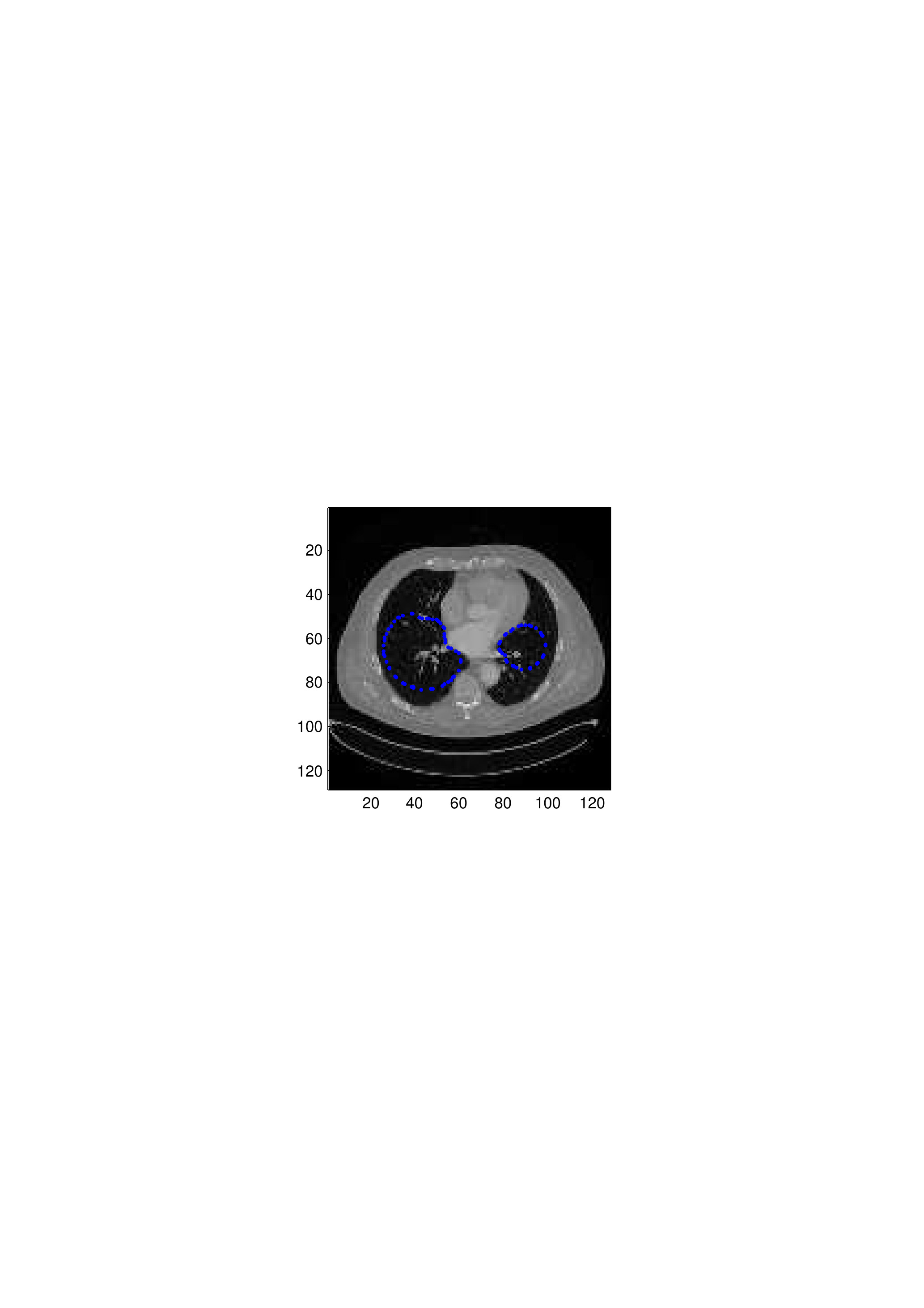}\\
		\includegraphics[viewport = 240 315 350 525, width = 0.1125\textwidth]{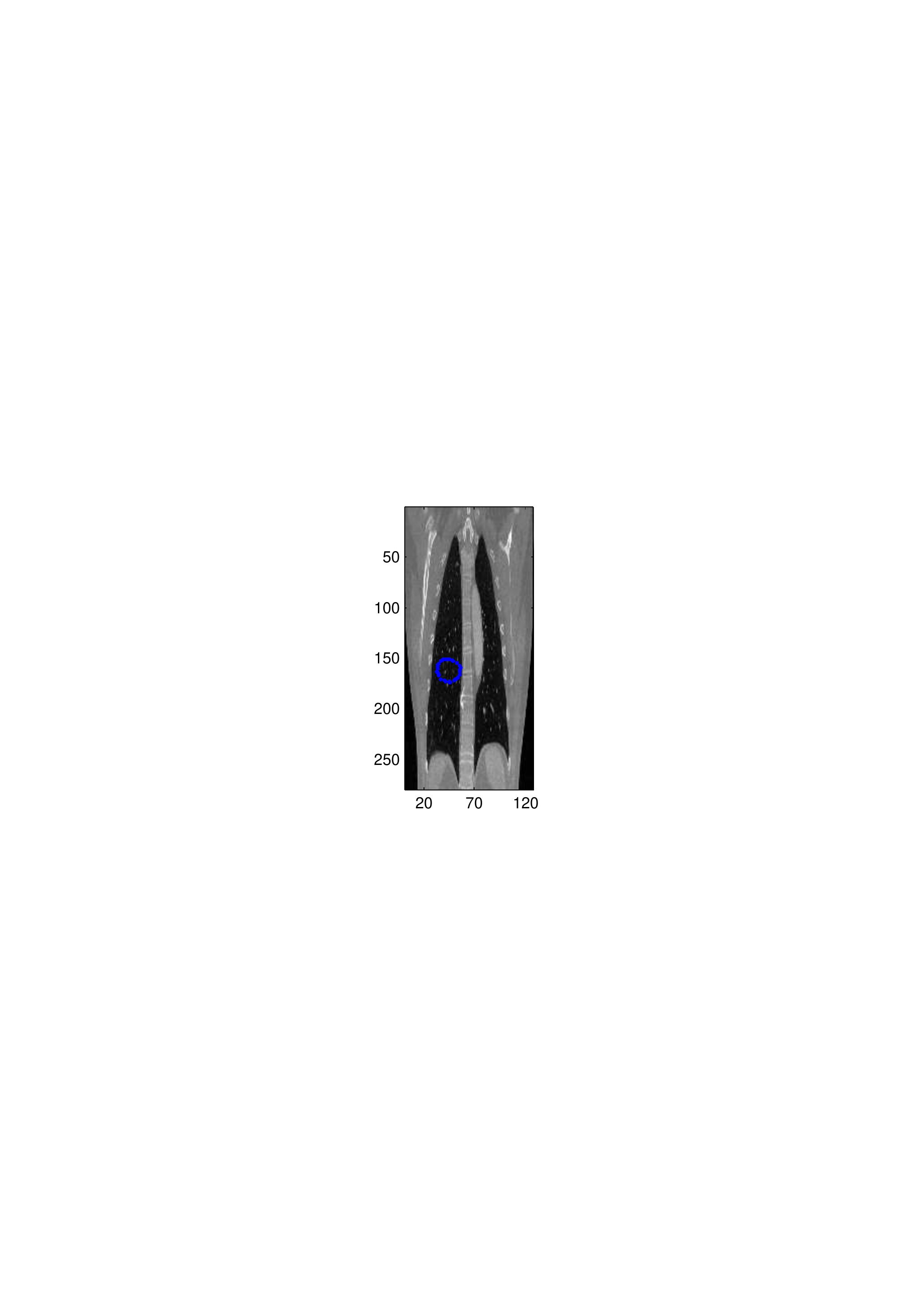}\hspace{4ex}
		\includegraphics[viewport = 240 315 350 525, width = 0.1125\textwidth]{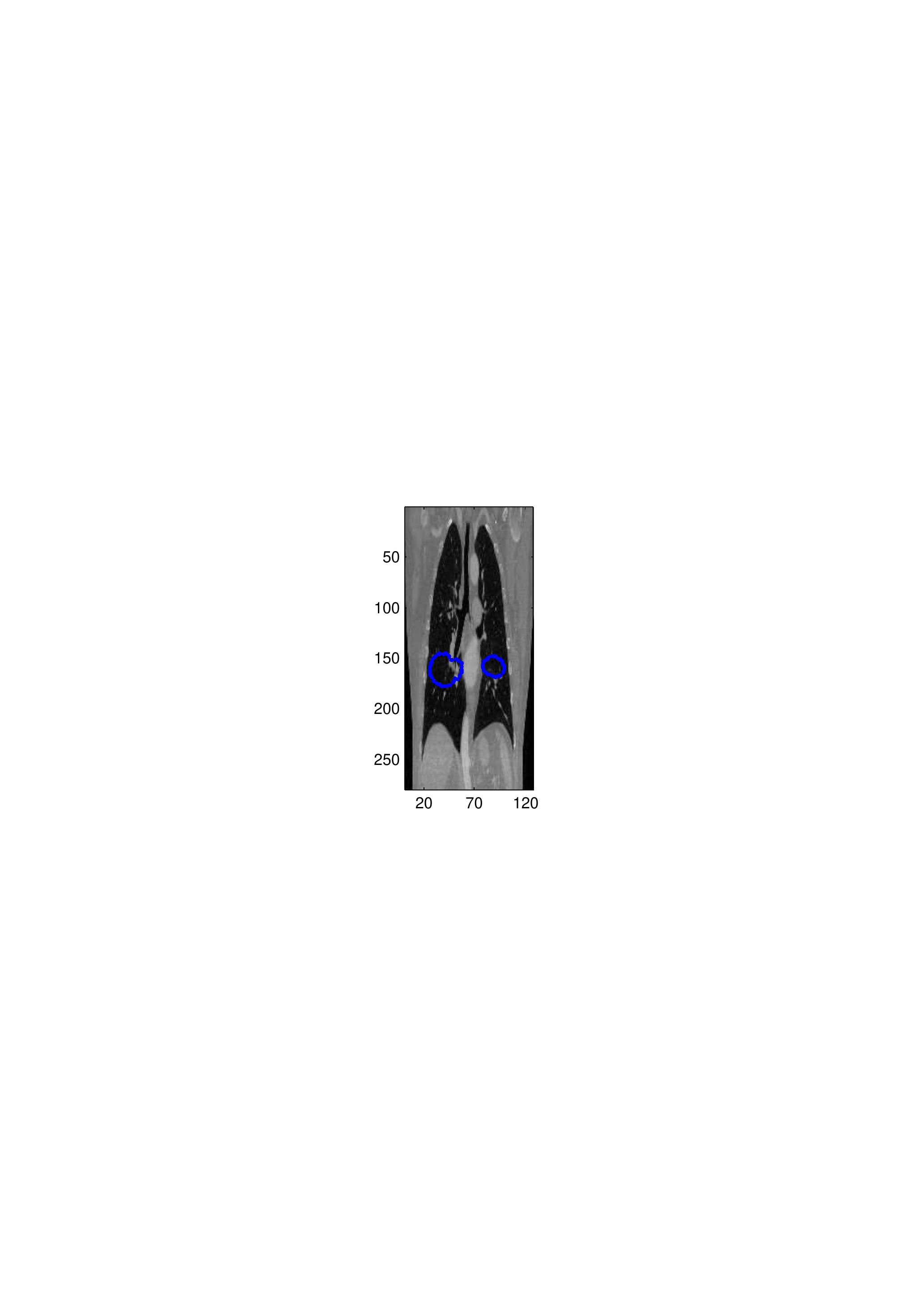}\hspace{4ex}
		\includegraphics[viewport = 240 315 350 525, width = 0.1125\textwidth]{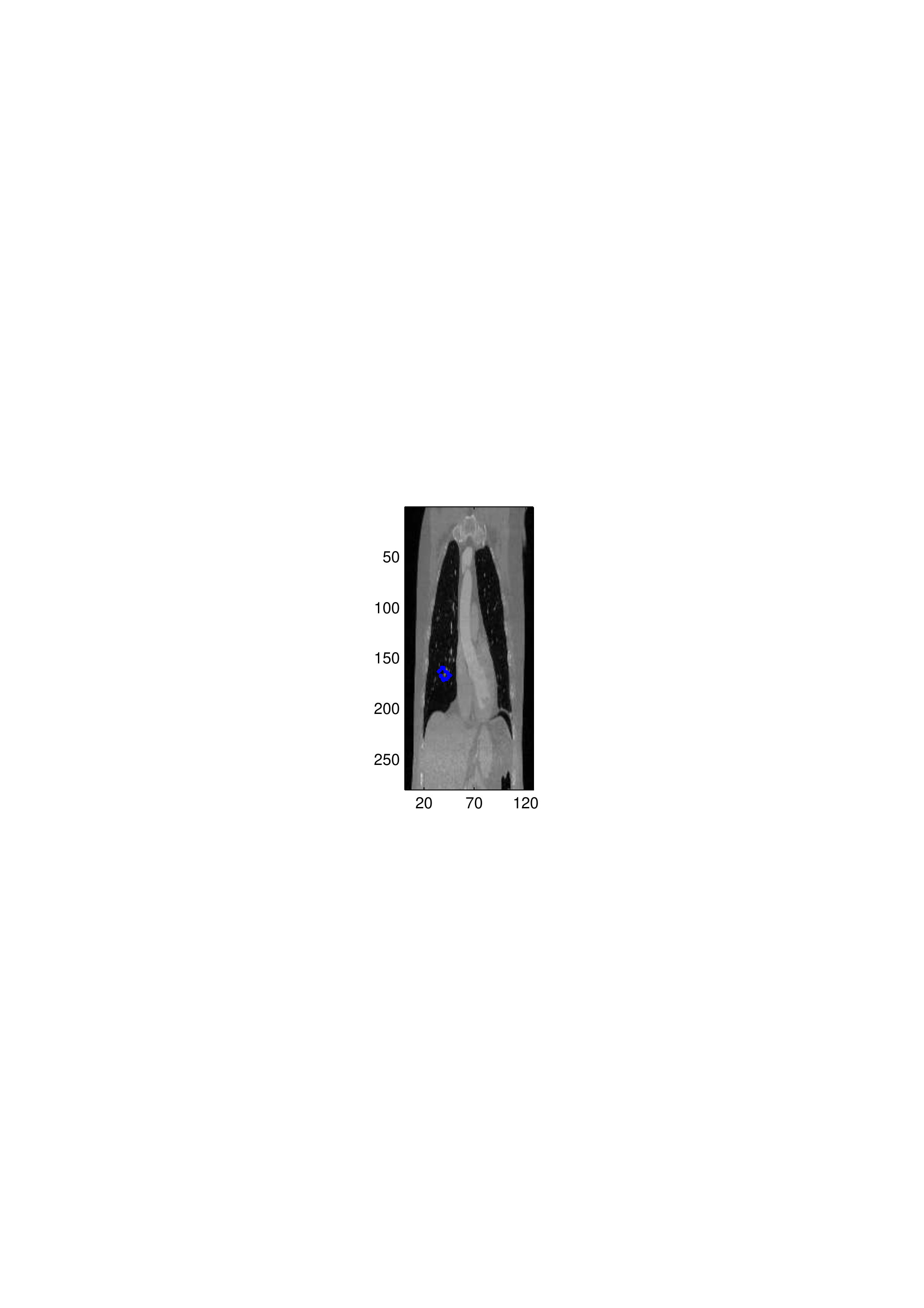}
	\caption{Lung segmentation with splitting: Surfaces (row 1) and cross-sections (row 2: $z=80, 120, 160$, row 3: $y=50, 64, 80$) at $m=100$ at time $t=20$. Credits (original CT images): C. Stroszczynski, Radiology, University Hospital Regensburg.}
	\label{fig:cd4_part2_01_0100}
\end{figure}

%-------------------------------------------------------------------------------------------------------------------------%
%---------------------------------------- m = 500 --------------------------------------------------------------------------%
%-------------------------------------------------------------------------------------------------------------------------%
\begin{figure}
	\centering
	  \includegraphics[trim = 20mm 10mm 20mm 10mm,clip, width = 0.3\textwidth]{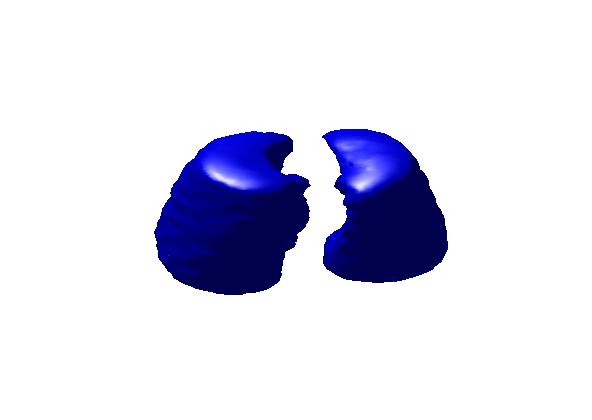}\\
		\includegraphics[viewport = 190 315 400 520, width = 0.15\textwidth]{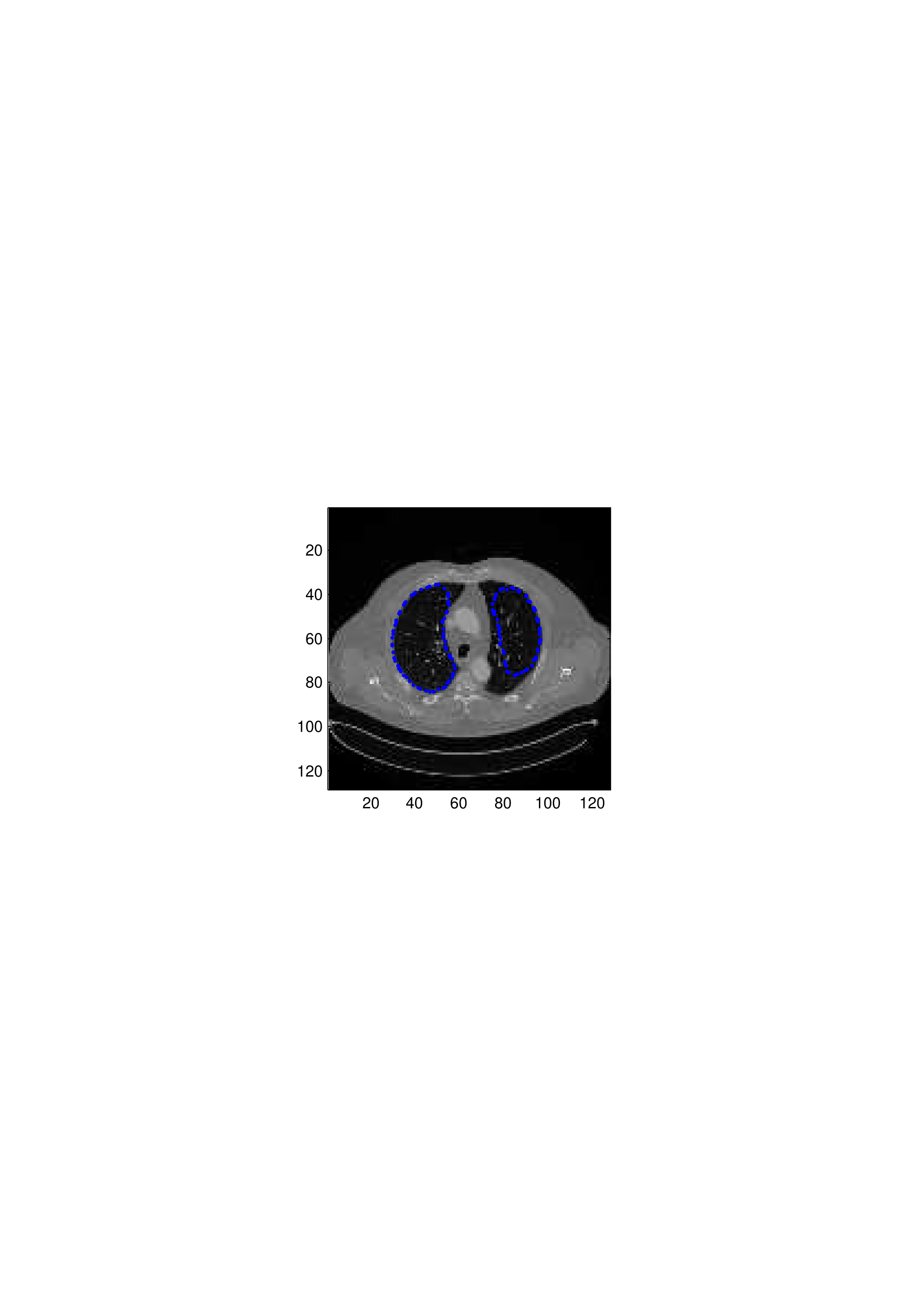}
		\includegraphics[viewport = 190 315 400 520, width = 0.15\textwidth]{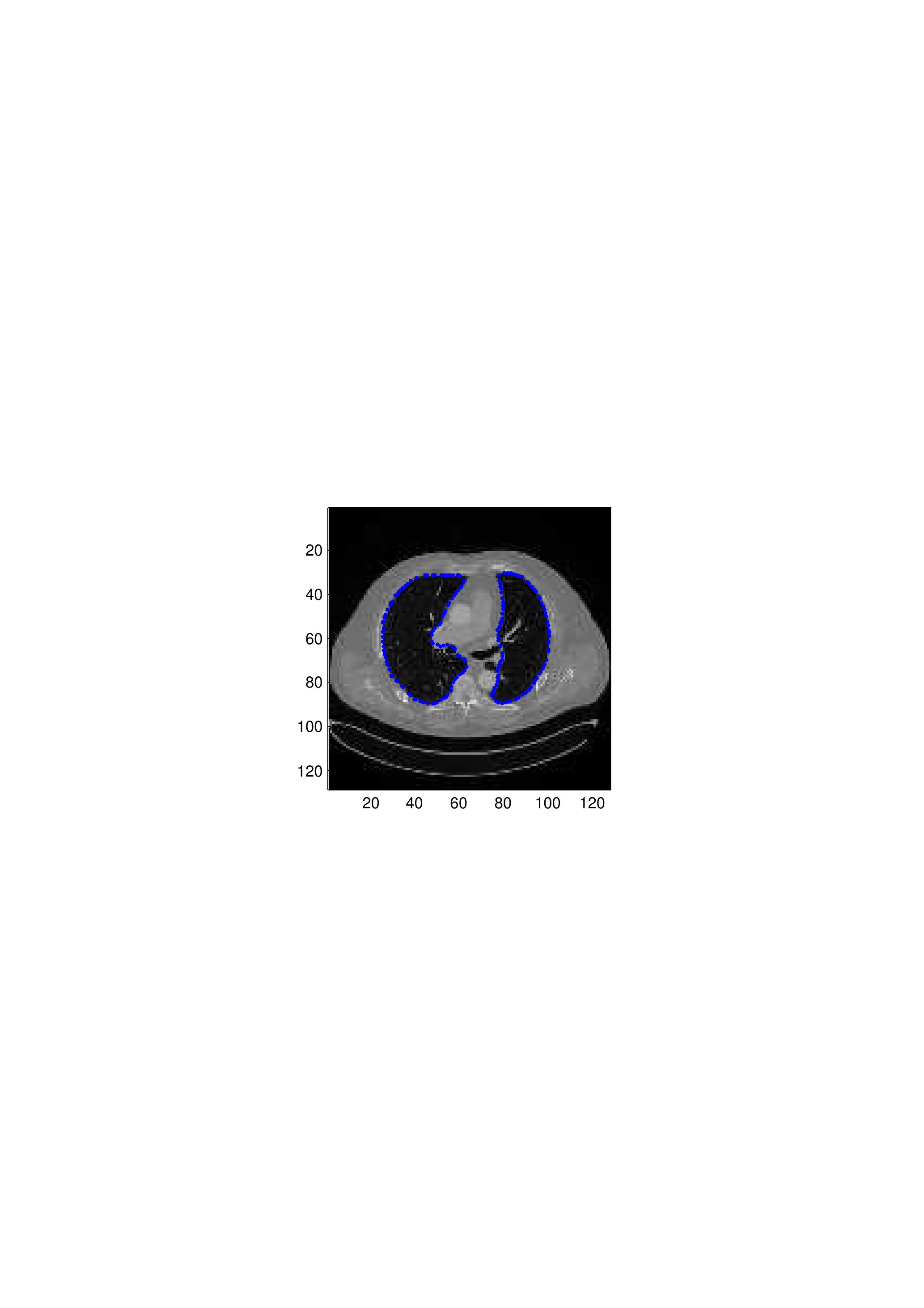}
		\includegraphics[viewport = 190 315 400 520, width = 0.15\textwidth]{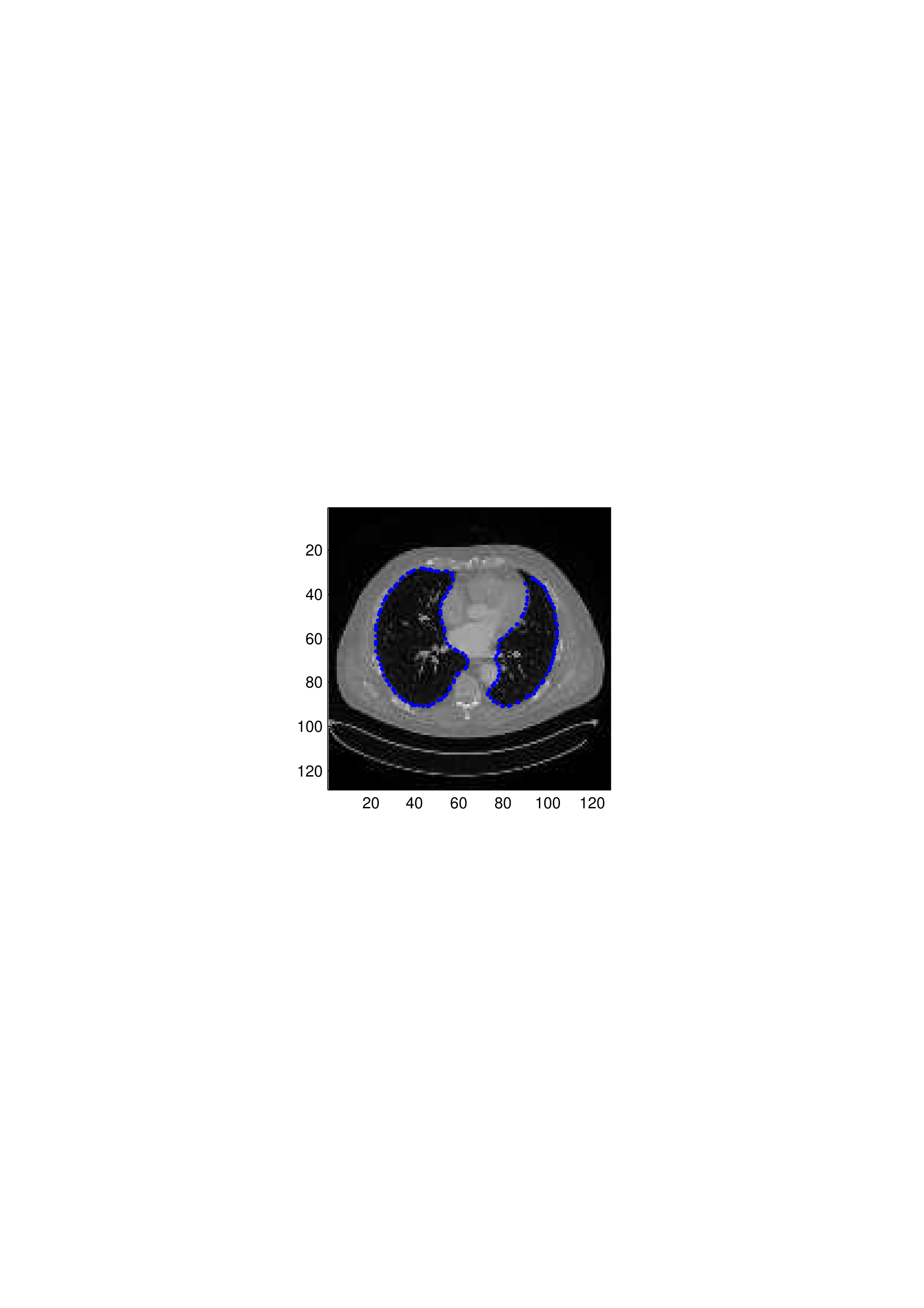}\\
		\includegraphics[viewport = 240 315 350 525, width = 0.1125\textwidth]{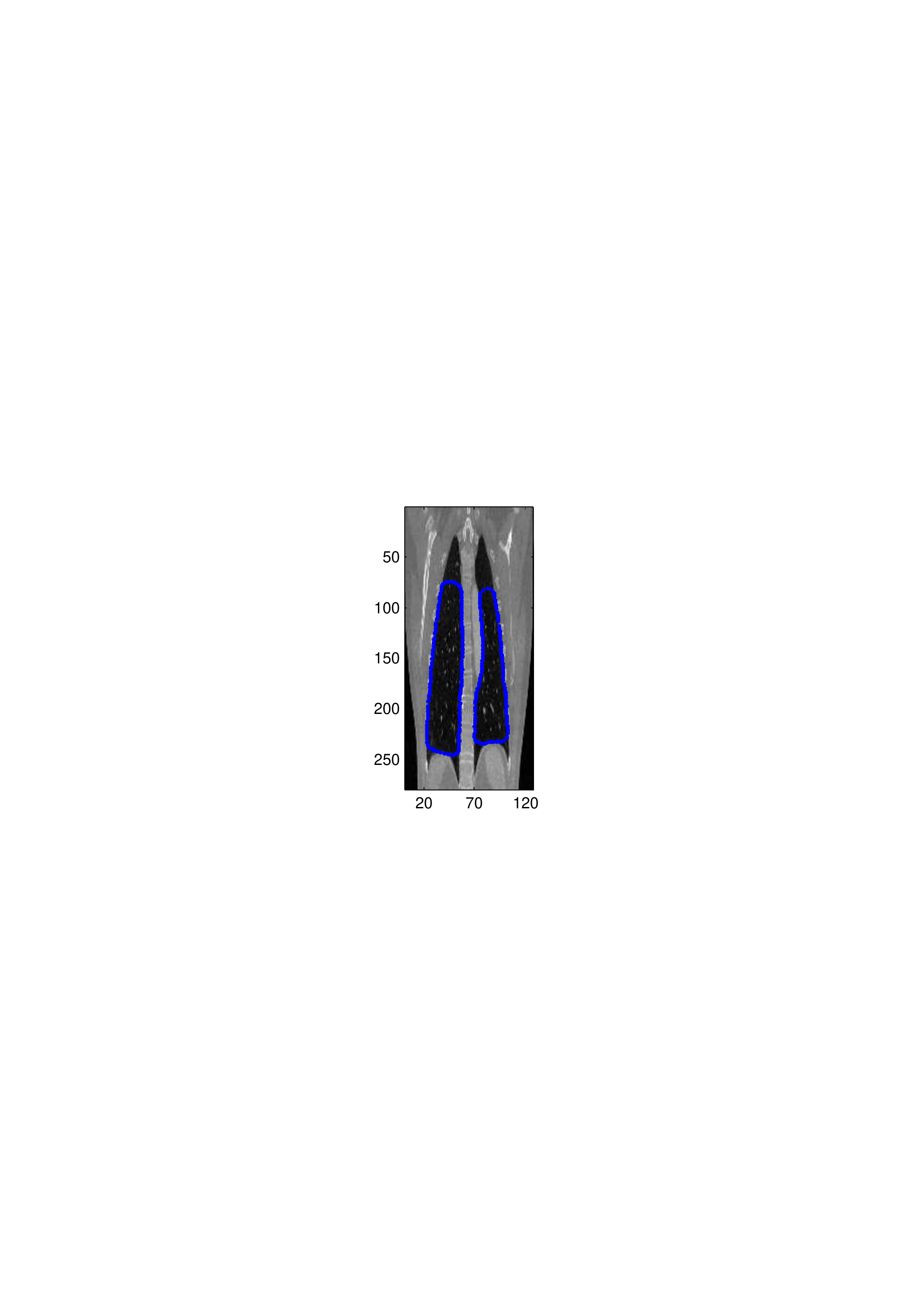}\hspace{4ex}
		\includegraphics[viewport = 240 315 350 525, width = 0.1125\textwidth]{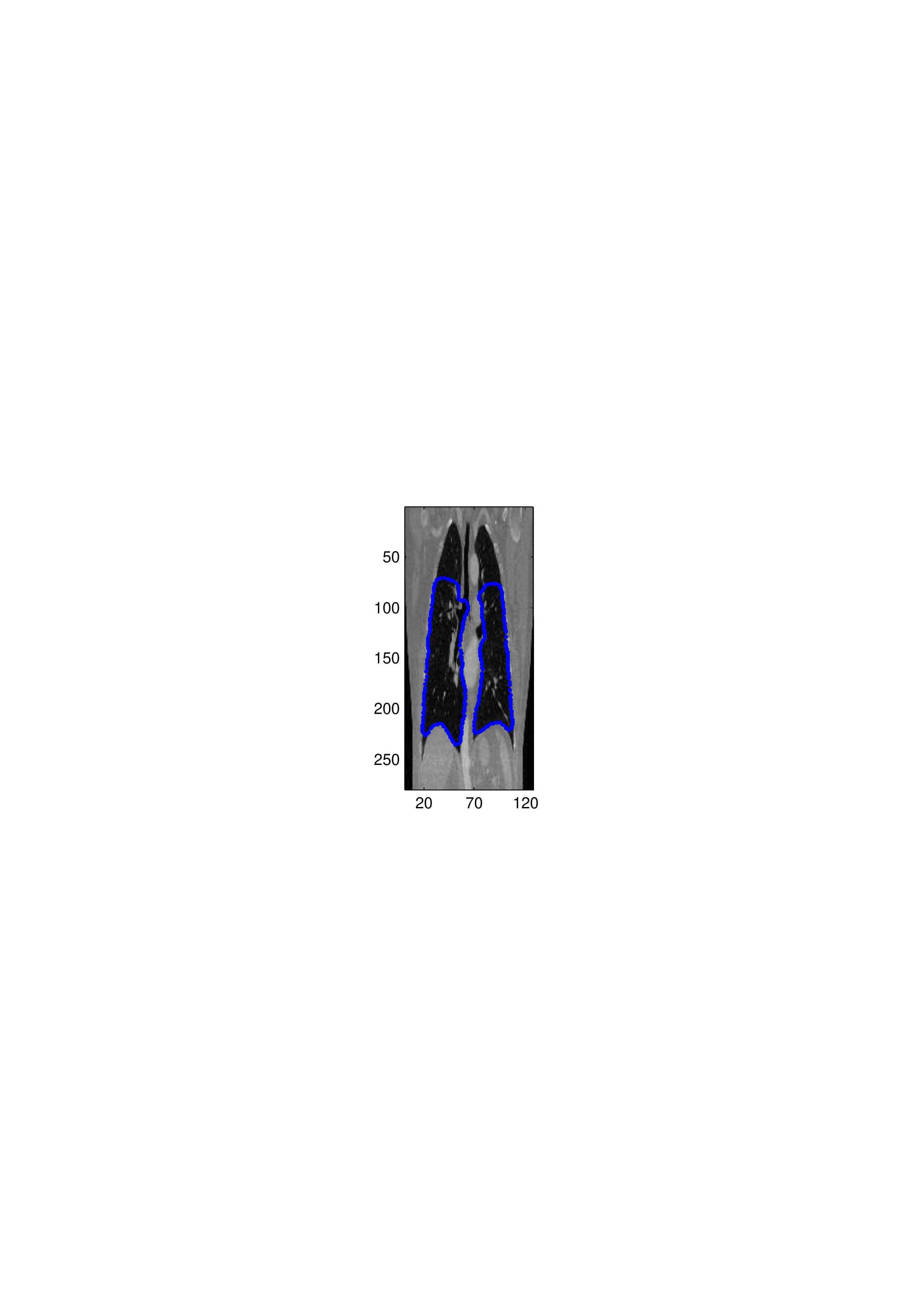}\hspace{4ex}
		\includegraphics[viewport = 240 315 350 525, width = 0.1125\textwidth]{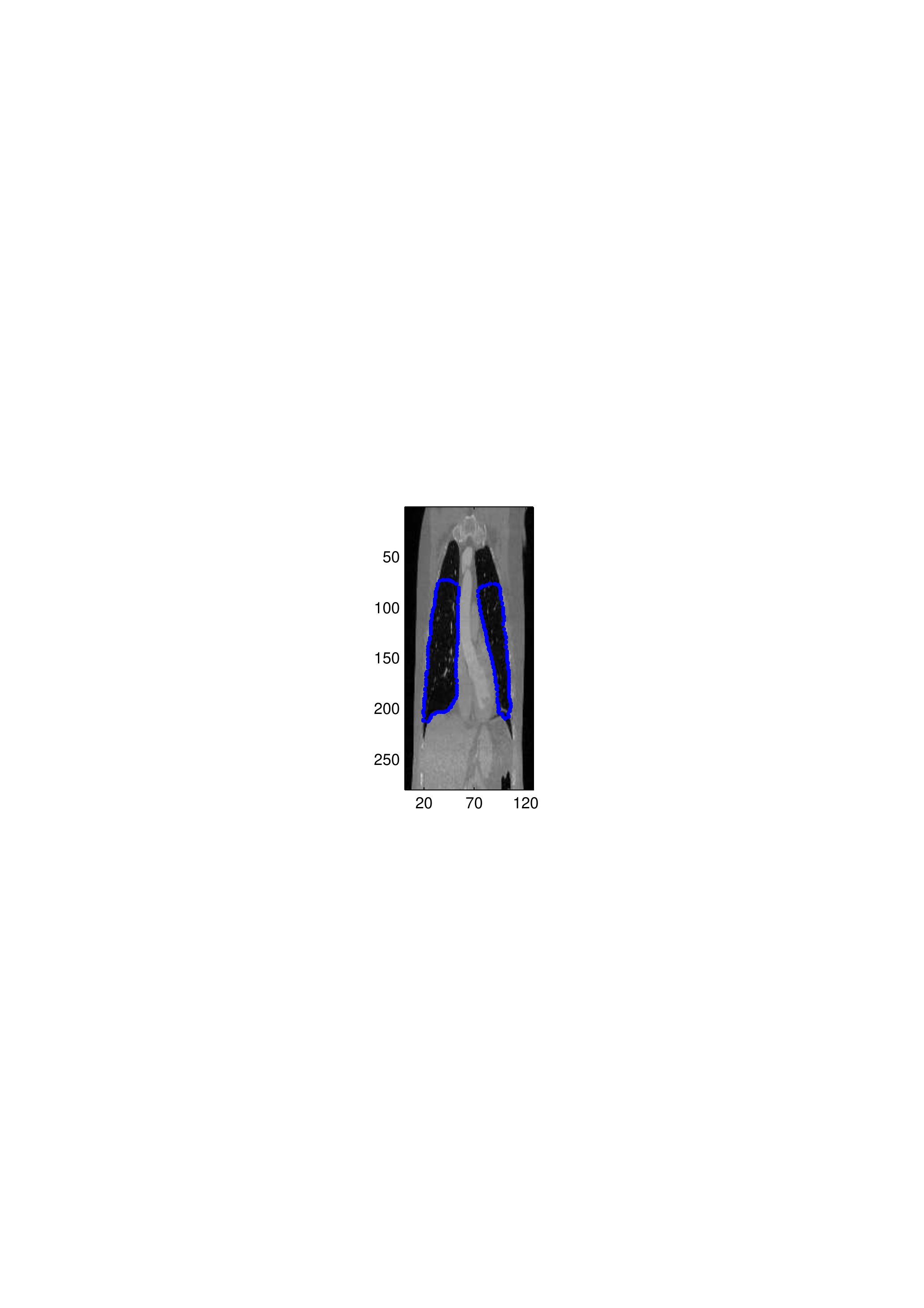}
	\caption{Lung segmentation with splitting: Surfaces  (row 1) and cross-sections (row 2: $z=80, 120, 160$, row 3: $y=50, 64, 80$) at $m=500$ at time $t=100$. Credits (original CT images): C. Stroszczynski, Radiology, University Hospital Regensburg.}
	\label{fig:cd4_part2_01_0500}
\end{figure}

%-------------------------------------------------------------------------------------------------------------------------%
%---------------------------------------- m = 900 --------------------------------------------------------------------------%
%-------------------------------------------------------------------------------------------------------------------------%
\begin{figure}
	\centering
	  \includegraphics[trim = 20mm 10mm 20mm 10mm,clip, width = 0.3\textwidth]{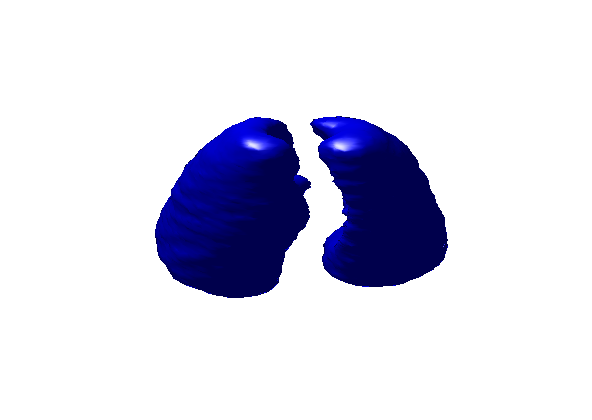}\\
		\includegraphics[viewport = 190 315 400 520, width = 0.15\textwidth]{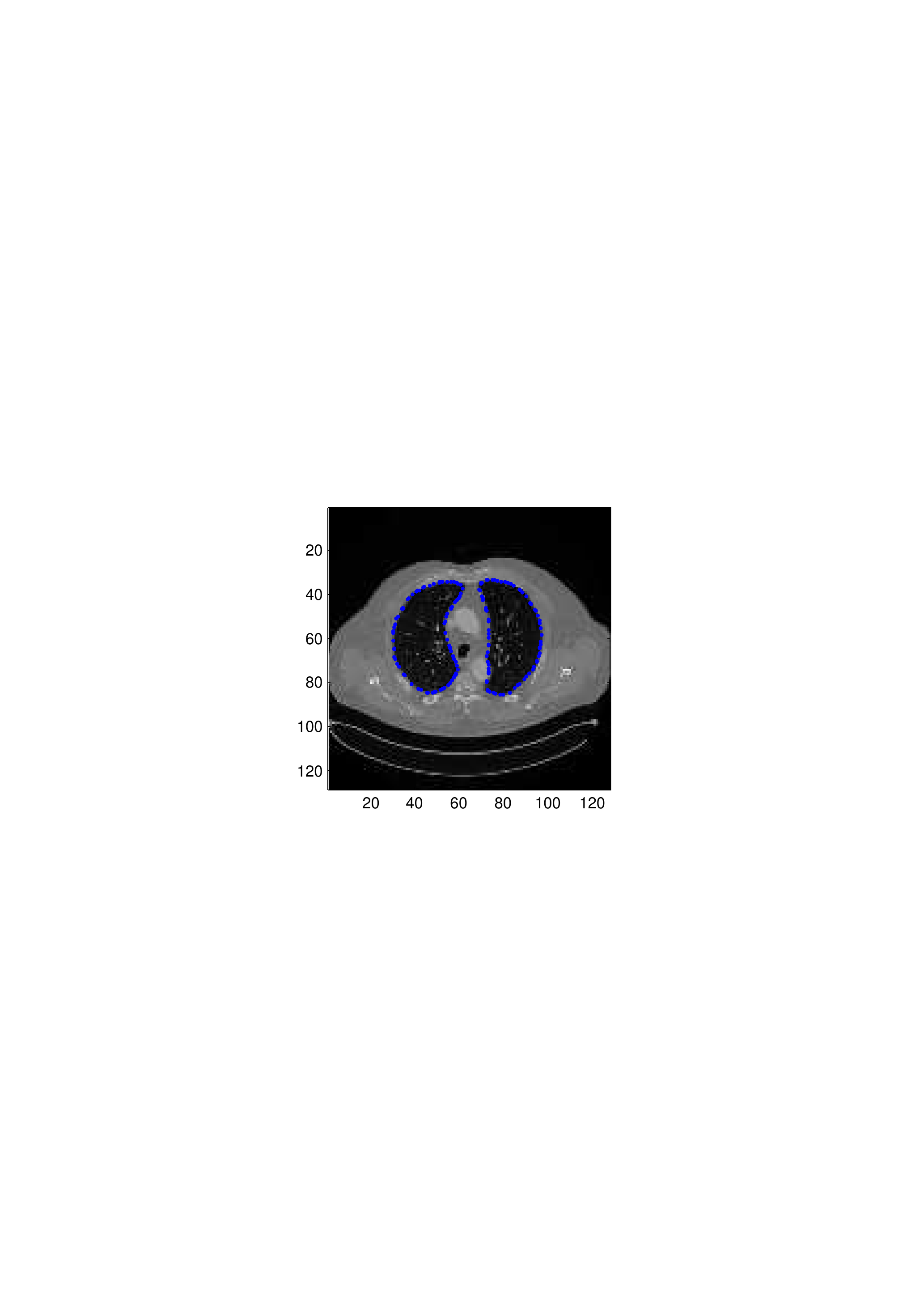}
		\includegraphics[viewport = 190 315 400 520, width = 0.15\textwidth]{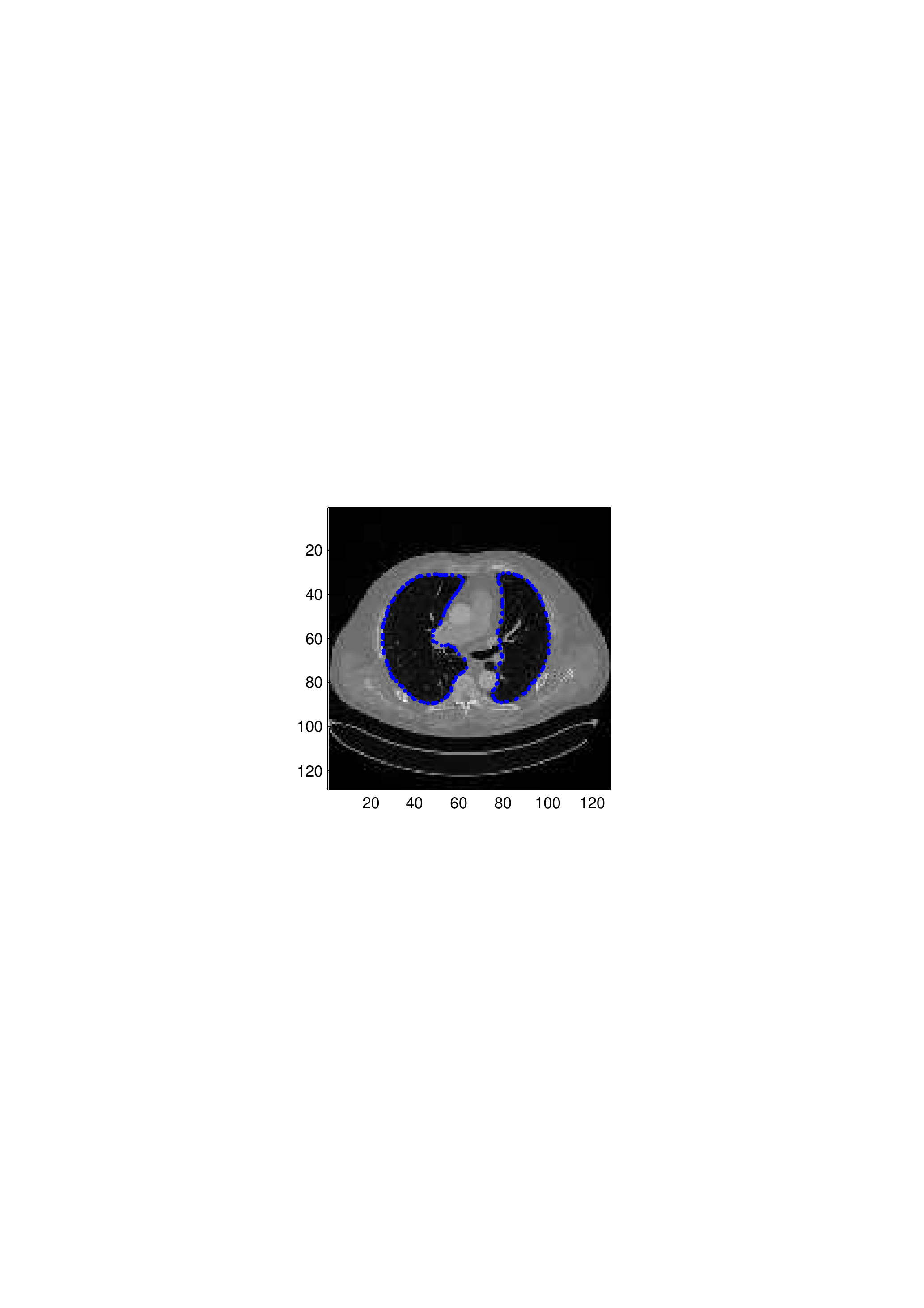}
		\includegraphics[viewport = 190 315 400 520, width = 0.15\textwidth]{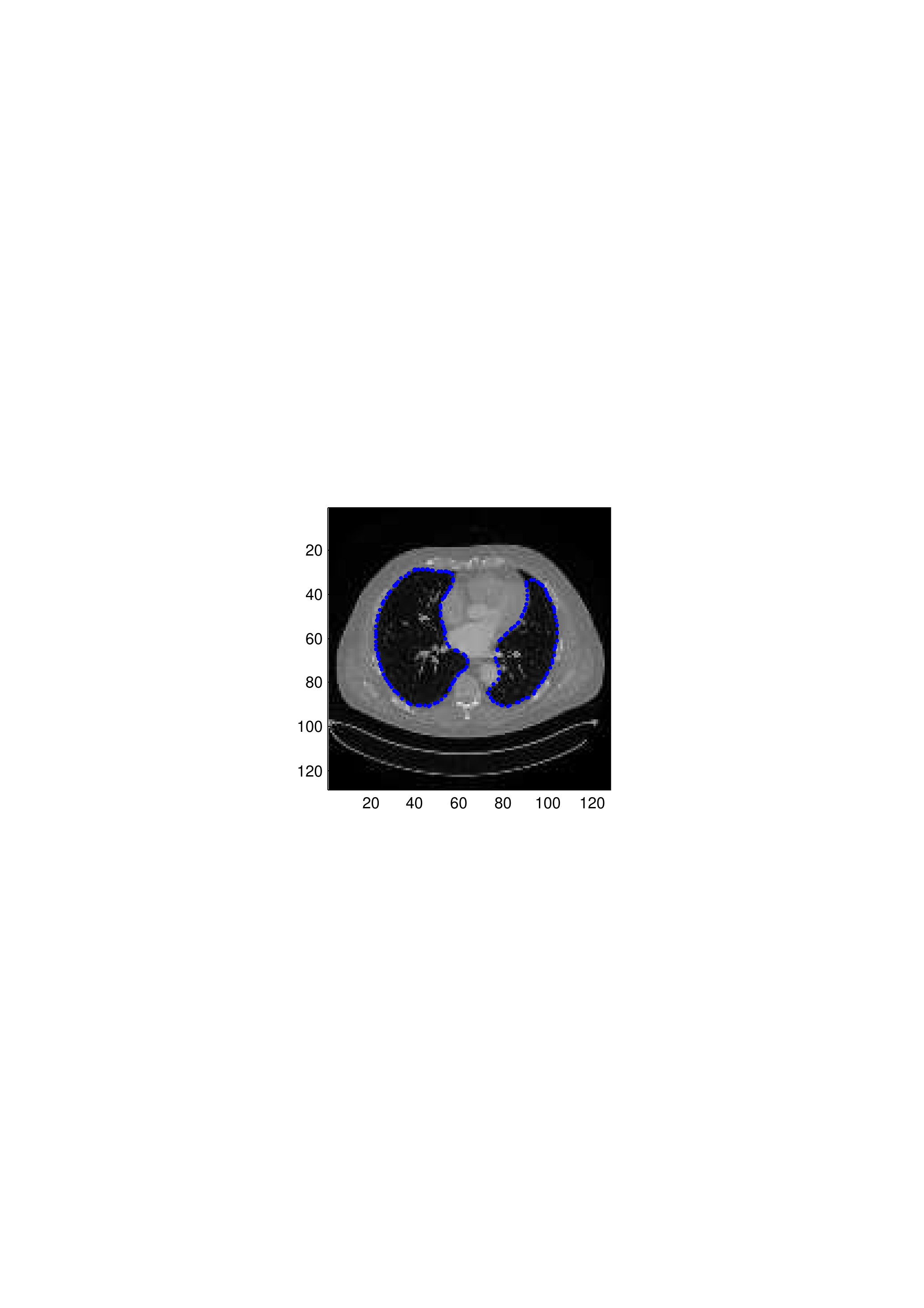}\\
		\includegraphics[viewport = 240 315 350 525, width = 0.1125\textwidth]{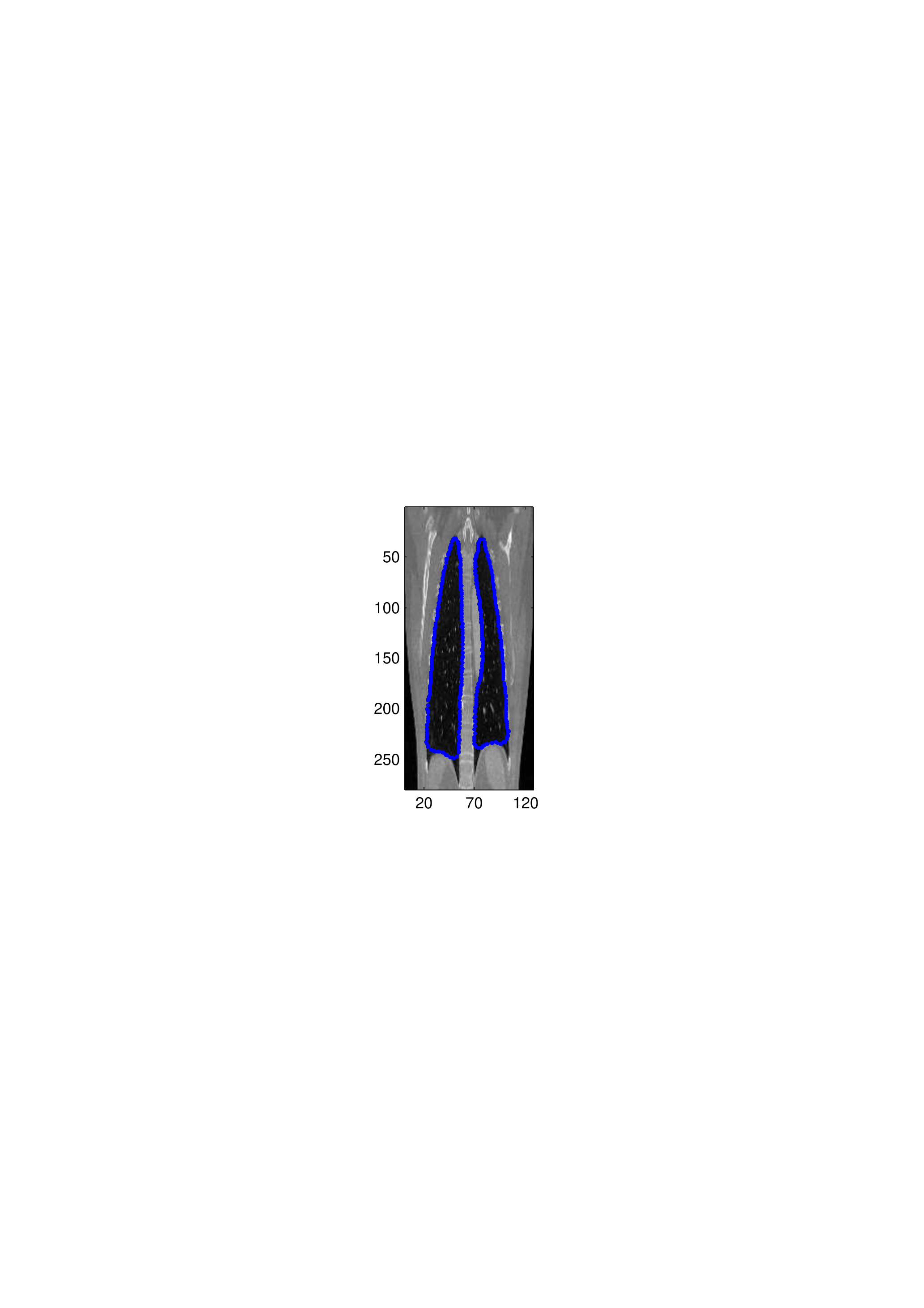}\hspace{4ex}
		\includegraphics[viewport = 240 315 350 525, width = 0.1125\textwidth]{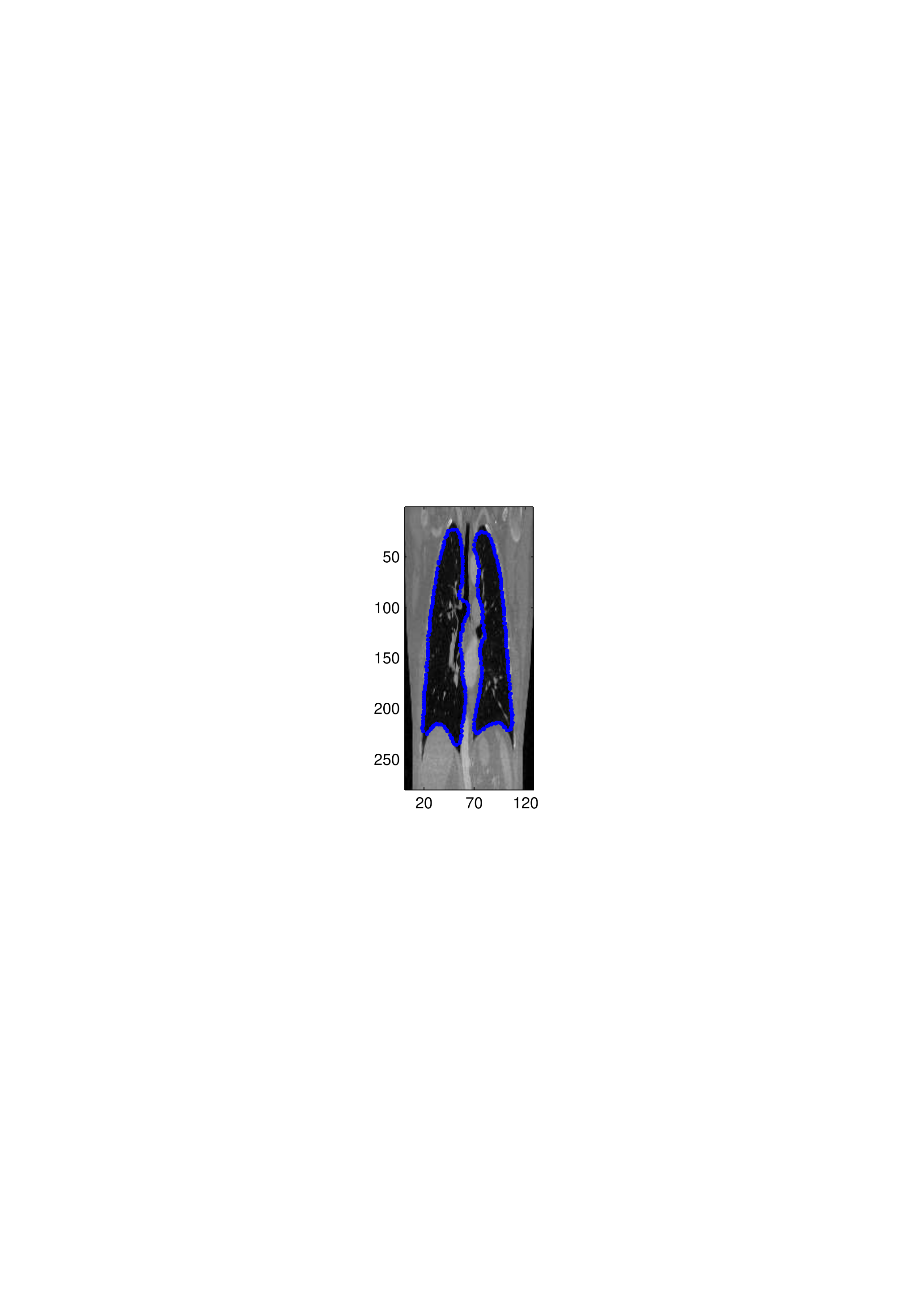}\hspace{4ex}
		\includegraphics[viewport = 240 315 350 525, width = 0.1125\textwidth]{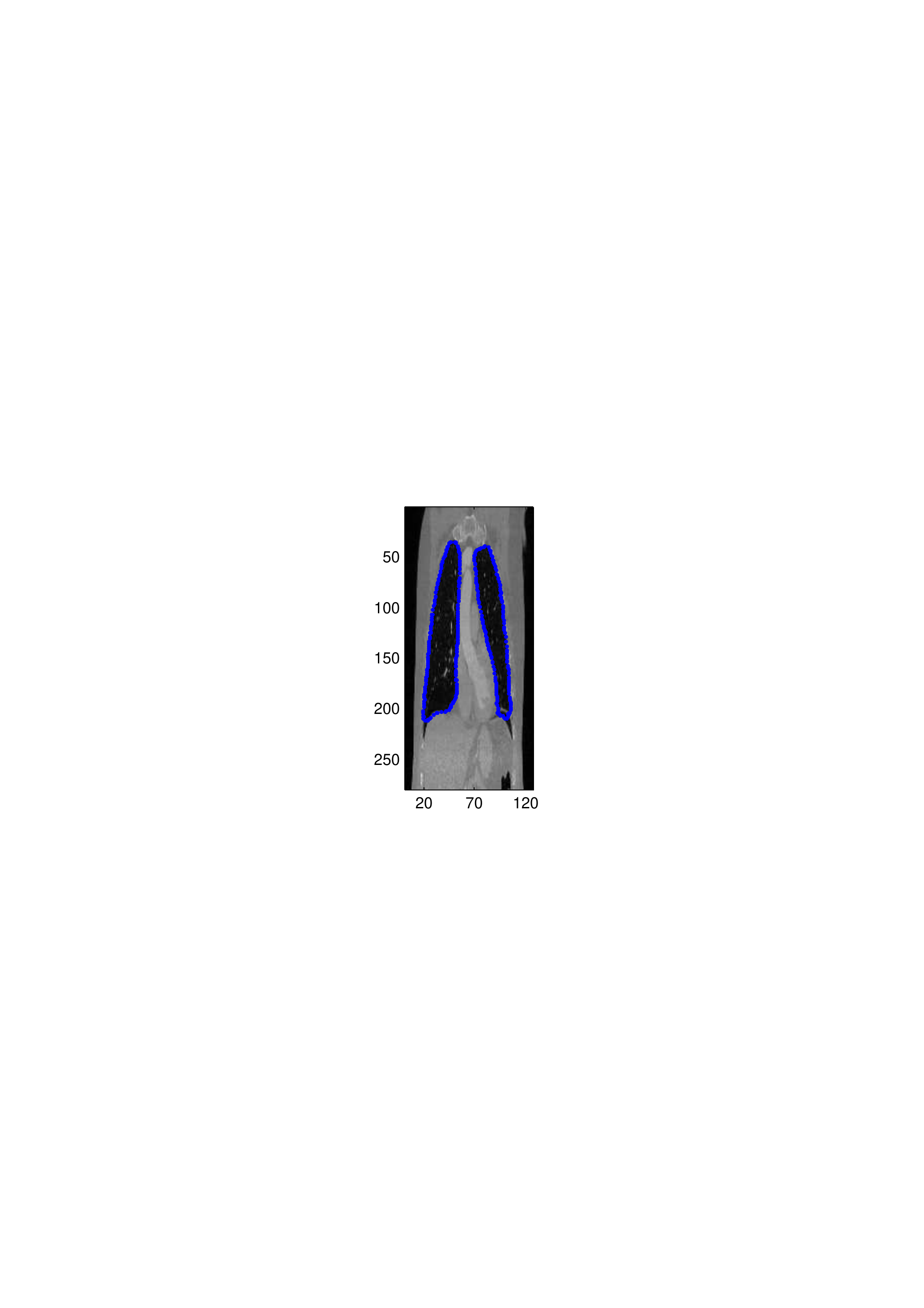}
	\caption{Lung segmentation with splitting: Surfaces (row 1) and cross-sections (row 2: $z=80, 120, 160$, row 3: $y=50, 64, 80$) at $m=900$ at time $t=180$. Credits (original CT images): C. Stroszczynski, Radiology, University Hospital Regensburg.}
	\label{fig:cd4_part2_01_0900}
\end{figure}

In a second experiment, we consider an experiment where a topology change occurs. We perform again a lung segmentation starting now with one initial surface which is split into two surfaces. Figure~\ref{fig:cd4_part2_01_0000}-\ref{fig:cd4_part2_01_0900} show the surface(s) at time step $m=0, 50, 100,500, 900$ as well as cross-sections of the image and of the surface(s). For the cross-sections, we consider the planes given by $z=80, 120, 160$ and $y=50, 64, 80$. 

The splitting occurs at time step $m=50$. To detect the topology change, we use an auxiliary background grid with grid size $a=2$. A cube of the grid is considered for possible topology changes if more than $N_\mathrm{detect}=8$ nodes are located inside the cube. Further, we use the parameters $thr1=30^\circ$, $thr2=150^\circ$ and  $thr3=40^\circ$, recall Section~\ref{subsec:identification_top_change_3D}. After the splitting, the two surfaces grow and new triangles are created by bisection of too large triangles. For the segmentation we use the parameters $\sigma = 1$ and $\lambda = 20$. The time step size is set to $\Delta t = 0.2$ with time step control using $\delta X_{n}^{\mathrm{max}} = 2$, $\delta X_{n}^{ \mathrm{min}} = 0.1$. However, no increase or decrease of the time step size is necessary. 

As postprocessing step, we compute the volume of the two enclosed regions and the area of the region boundaries. The right lung of the patient, i.e. the left surface in the Figure~\ref{fig:cd4_part2_01_0900}, has an area of $A_1 = 3.309\cdot 10^4$ and a volume of $V_1 = 2.691\cdot 10^5$ (CT images are mirror images). The left lung of the patient (right surface in the figure) has an area of $A_2 = 2.801\cdot 10^4$ and a volume of $V_2 = 1.923\cdot 10^5$. Thus, as expected, the volume of the right lung is larger compared to the left lung. Note, that we handle a voxel as a cube with side length 1, resulting in values of magnitude $10^4$ for the area and $10^5$ for the volume. If details on the acquisition system of the CT images are known (like the slice thickness, and the height and width of one pixel of a slice image), the area and the volume can be computed precisely and can be expressed in the metric system for practical interpretation of the values.  
\section{Conclusion}
\label{sec:conclusion}
We presented a new parametric method for segmentation of 3D images. We considered extensions of the Mumford-Shah and Chan-Vese functional for 3D image segmentation by active surface. For the time-dependent surfaces, we proposed a parametric scheme and introduced an efficient numerical scheme based on a finite element approximation. A novel method to detect and perform topology changes of the surfaces has been presented which uses a virtual auxiliary background grid. Due to the fact that for the main computations only a two-dimensional grid is used, the developed method is very efficient from a computational point of view. Several artificial images have been studied to demonstrate splitting and merging of surfaces, and increase and decrease of the genus of a surface. We successfully applied our method to real medical 3D image data from computed tomography, including an example with a topology change.

\section*{Acknowledgements}
The authors would like to thank Prof. Dr. Christian Stroszczynski, Department of Radiology of University Hospital Regensburg, for providing computed tomography images which have been used in Figure~\ref{fig:cd4_part2_01_0000}-\ref{fig:cd4_part2_01_0900}.

% BibTeX users please use one of
\bibliographystyle{plain} 
\bibliography{literatur}   % name your BibTeX data base

\end{document}